\let\counterwithin\relax
\patchcmd{\appendices}{\quad}{. }{}{}
\newcommand{\otoprule}{\midrule[\heavyrulewidth]}
\definecolor{darkblue}{rgb}{0.0,0.0,0.7}
\newtheorem{theorem}{Theorem}
\newtheorem{corollary}{Corollary}
\newtheorem{lemma}{Lemma}
\newtheorem{proposition}{Proposition}
\newtheorem{definition}{Definition}
\newtheorem{remark}{Remark}
\newtheorem{assumption}{Assumption}{\bf}{\rm}%
\newcommand{\bigO}{\mathcal{O}}
\newcommand{\inr}[1]{\langle #1 \rangle}
\newcommand{\ind}[1]{{\mathds{1}}_{{#1}}}%
\newcommand{\norm}[1]{\|#1\|}
\newcommand{\R}{{\mathbb{R}}}
\newcommand{\E}{\mathds{E}}
\newcommand{\V}{\mathds{V}}
\newcommand{\bA}{{\boldsymbol A}}%
\newcommand{\bcA}{\boldsymbol{\mathcal{A}}}%
\newcommand{\bB}{{\boldsymbol B}}%
\newcommand{\bcB}{\boldsymbol{\mathcal{B}}}%
\newcommand{\bL}{{\boldsymbol L}}%
\newcommand{\bM}{{\boldsymbol M}}%
\newcommand{\bN}{{\boldsymbol N}}%
\newcommand{\bR}{\boldsymbol R}%
\newcommand{\bcQ}{\boldsymbol{\mathcal{Q}}}%
\newcommand{\bW}{\boldsymbol W}%
\newcommand{\bX}{\boldsymbol  X}%
\newcommand{\bcX}{\boldsymbol{\mathcal{X}}}%
\newcommand{\bcY}{\boldsymbol{\mathcal{Y}}}%
\newcommand{\bcW}{\boldsymbol{\mathcal{W}}}%
\newcommand{\bcM}{\boldsymbol{\mathcal{M}}}%
\newcommand{\bcZ}{\boldsymbol{\mathcal{Z}}}%
\newcommand{\bcG}{\boldsymbol{\mathcal{G}}}%
\newcommand{\bcU}{\boldsymbol{\mathcal{U}}}%
\newcommand{\bcV}{\boldsymbol{\mathcal{V}}}%
\newcommand{\bcR}{\boldsymbol{\mathcal{R}}}%
\newcommand{\bcI}{\boldsymbol{\mathcal{I}}}%
\newcommand{\bY}{\boldsymbol{Y}}%
\newcommand{\bZ}{\boldsymbol Z}
\newcommand{\bSigma}{\boldsymbol \Sigma}%
\newcommand{\bXi}{\boldsymbol{\Xi}}%
\renewcommand{\P}{\mathds{P}}
\newcommand{\bcdot}{\raisebox{-0.80ex}{\scalebox{1.8}{$\cdot$}}}
\newcommand{\cst}{\raisebox{-0.15ex}{\scalebox{1.30}{$c$}}}
\newcommand{\varsig}{\raisebox{-0.15ex}{\scalebox{1.30}{$\varsigma$}}}
\newcommand*{\dt}[1]{\accentset{\raisebox{0ex}{$\,\star$}}{#1}}
\newcommand*{\ddt}[1]{\accentset{\raisebox{0ex}{$\,\,\star$}}{#1}}
\DeclareMathOperator*{\argmin}{\arg\!\min}
\DeclareMathOperator{\rk}{rank}
\begin{document}

\title{Collective Matrix Completion}

\author{%
Mokhtar Z. Alaya \\
Modal’X, UPL, Univ Paris Nanterre, \\
F92000 Nanterre France\\
\texttt{mokhtarzahdi.alaya@gmail.com} 
\and
Olga Klopp\\
ESSEC Business School \& CREST\\
F95021 Cergy France\\
\texttt{kloppolga@math.cnrs.fr}
}


\maketitle




\begin{abstract}%
Matrix completion aims to reconstruct a data matrix based on observations of a small number of its entries. 
Usually in matrix completion a single matrix is considered, which can be, for example, a rating matrix in recommendation system. 
However, in practical situations, data is often obtained from multiple sources which results in a collection of matrices rather than a single one. 
In this work, we consider the problem of collective matrix completion with multiple and heterogeneous matrices, which can be count, binary, continuous, etc.
We first investigate the setting where, for each source, the matrix entries are sampled from an exponential family distribution. 
Then, we relax the assumption of exponential family distribution for the noise.
In this setting, we do not assume any specific model for the observations. 
The estimation procedures are based on minimizing the sum of a goodness-of-fit term and the nuclear norm penalization of the whole collective matrix.
We prove that the proposed estimators achieve fast rates of convergence under the two considered settings and we corroborate our results with numerical experiments.
\end{abstract}

\noindent%
\emph{Keywords.} High-dimensional prediction; Exponential families; Low-rank matrix estimation;  Nuclear norm minimization; Low-rank optimization; Matrix completion

\section{Introduction}

Completing large-scale matrices has recently attracted great interest in machine learning and data mining since it appears in a wide spectrum of applications such as recommender systems~\citep{Koren09,bobadilla2013}, collaborative filtering (Netflix challenge)~\citep{goldberg92,rennie05}, sensor network localization~\citep{so05,drineas06,oh10}, system identification~\citep{liu09}, image processing~\citep{Hu2013-trruncated6389682}, among many others.
The basic principle of matrix completion  consists in recovering all the entries of an unknown data matrix from incomplete and noisy observations of its entries.

To address the high-dimensionality in matrix completion problem, statistical inference based on low-rank constraint is now an ubiquitous technique for recovering the underlying data matrix.
Thus, matrix completion can be formulated as minimizing the rank of the matrix given a random sample of its entries.
However, this rank minimization problem is in general NP-hard due to the combinatorial nature of the rank function~\citep{fazel2001,fazelPhD-2000}.
To alleviate this problem and make it tractable, convex relaxation strategies were proposed, e.g., the nuclear norm relaxation~\citep{srebo2005,candesPower2010,recht2010a,negahban2011,klopp2014} or the max-norm relaxation~\citep{cai2016}. 
Among those surrogate approximations, nuclear norm, which is defined as the sum of the singular values of the matrix or the $\ell_1$-norm of its spectrum, is probably the most widely used penalty for low-rank matrix estimation, since it is the tightest convex lower bound of the rank~\citep{fazel2001}.

\paragraph{Motivations.} 

Classical matrix completion focus on a single matrix, whereas in practical situations data is often obtained from a collection of matrices that may cover multiple and heterogeneous sources.
For example, in e-commerce users express their feedback for different items such as books, movies, music, etc. 
In social networks like Facebook and Twitter users often share their opinions and interests on a variety of topics (politics, social events, health).
In this examples, informations from multiple sources can be viewed as a collection of matrices coupled through a common set of users.

Rather than exploiting user preference data from each source independently, it may be beneficial to leverage all the available user data provided by various sources in order to generate more encompassing user models~\citep{cantador2015}. 
For instance, some recommender system runs into the so-called {\it cold-start problem}~\citep{lam2008-coldstart}. 
A user is new or ``cold'' in a source when he has few to none rated items.
Such user may have a rating history in auxiliary sources and we can use his profile in the auxiliary sources to recommend relevant items in the target source.
For example, a user's favorite movie genres may be derived from his  favorite book genres. 
Therefore, this shared structure among the sources can be useful to get better predictions~\citep{singh2008,bouchard13,gunasekar2016}.  

More generally speaking, collective matrix completion finds a natural application in the problem of recommender system with side information. In this problem, in addition to the conventional user-item matrix, it is assumed that we have side information about each user~\citep{chiang2015NIPS,jain2013,fithian2018,agarwal2011}. For example, in blog recommendation task, we may have access to user
generated content (images, tags and text) or user activity (e.g., likes and reblogs). Such side information may be used to improve the quality of recommendation of blogs of interest~\citep{tumblr}. 

Based on the type of available side information, various methods for recommender systems with side information have been proposed. It can be user generated content~\citep{armentano, hannon}, user/item
profile or attribute~\citep{agarwal2011}, social network~\citep{jamali,ma} and context information~\citep{natarajan}. A very interesting surveys of the
state-of-the-art methods can be found in~\citep{fithian2018,natarajan}.

On the other hand, our framework includes the model of Mixed Data Frames with missing observations~\citep{pages2014multiple,udell}. Here matrices collect categorical, numerical and count observations. They appear in numerous applications including in ecology, patient records in health care~\citep{gunasekar2016}, quantitative gene expression values~\citep{natarajan2014,zitnik2014MatrixFD,zitnik2015}, and also in recommender systems and 
survey data.

\paragraph{Main contributions and related literature.} 

In this paper, we extend the theory of low-rank matrix completion to a collection of multiple and heterogeneous matrices.
We first consider general matrix completion setting where we assume that for each matrix its 
entries are sampled from natural exponential distributions~\citep{lehmCase98}.
In this setting, we may have Gaussian distribution for continuous data; Bernoulli for binary data; Poisson for count-data, etc.
In a second part, we relax the assumption of exponential family distribution for the noise and we do not assume any specific model for the observations. 
This approach is more popular and widely used in machine learning. 
The proposed estimation procedure is based on minimizing the sum of a goodness-of-fit term and the nuclear norm penalization of the whole collective matrix.
The key challenge in our analysis is to use joint low-rank structure and 
our algorithm is far from the trivial one which consists in estimating each source matrix separately.
We provide theoretical guarantees on our estimation method and show that the collective approach provides faster rate of convergences.
We further corroborate our theoretical findings through simulated experiments.

Previous works on collective matrix completion are mainly based on matrix factorization~\citep{srebo2005}.
In a nutshell, this approach fits the target matrix as the product of two low-rank matrices.
Matrix factorization gives rise to non-convex optimization problems and  
its theoretical understanding is quite limited.  
For example, \cite{singh2008} proposed the collective matrix factorization that jointly factorizes multiple matrices sharing latent factors. As in our setting, each matrix can have a different value type and error distribution. In \cite{singh2008}, the authors use Bregman divergences to measure the error and extend standard alternating projection algorithms to this setting. They consider a quite general setting which includes as a particular case the nuclear norm penalization approach that we study in the present paper. They do not provide any theoretical guarantee.
A Bayesian model for collective matrix factorization was proposed in~\cite{singh2010}.
\cite{horii2014} and \cite{xu16aligned} also consider collective matrix factorization 
and investigate the strength of the relation among the source matrices. 
Their estimation procedure is based on penalization by the sum of the nuclear norms of the sources.
The convex formulation for collective matrix factorization was proposed in~\cite{bouchard13} where the authors consider a  general situation when the set of matrices do not necessarily have a common set of rows/columns. When this is the case, the estimator proposed in~\cite{bouchard13} is quite similar to ours. Their algorithm is based on the iterative Singular Value Thresholding and the authors conduct empirical evaluations of this approach on two real data sets. 

Most of the previous papers focus on the algorithmic side without providing theoretical guarantees for the collective approach. One exception is the paper by~\cite{gunasekar15consistent} where the authors prove consistency of the estimate under two observation models:
noise-free and additive noise models. Their estimation procedure is based on minimizing the least squares loss penalized by the nuclear norm. To prove the consistency of their estimator,~\cite{gunasekar15consistent} assume that all the source matrices share the same low-rank factor.
They consider the uniform sampling scheme for the observations (see Assumptions 1 and 4 in~\cite{gunasekar15consistent}). Uniform sampling is an usual assumption in matrix completion literature (see, e.g.,~\citep{candesPower2010,candes2009,davenport14}). This assumption is restrictive in many applications such as recommendations systems. The theoretical analysis in the present paper is carried out for general sampling distributions. 

Similar to our setting, matrix completion with side information explores  the available user data provided by various sources. For instance \cite{jain2013} and \cite{xu2013NIPS} introduce the so-called Inductive Matrix Completion (IMC). It  models side information as knowledge of feature spaces.
They show that if the features are perfect (e.,g., see Definition 1 in~\cite{chiang2018jmlr} for perfect side information), the sample complexity can be reduced. More precisely, in works on matrix completion with side information, it is usually assumed that one has partially observed low-rank matrix of interest $\bM\in \mathbb{R}^{d_1\times d_2}$ and, additionally, one has access to two matrices of features $\bA\in \mathbb{R}^{d_1\times r_1}$ and $\bB\in \mathbb{R}^{d_2\times r_2}$ where each row of $\bA$ (or $\bB$) denotes the feature of the $i$-th row (or column) entity of $\bM$, $r_i<d_i$ for $i=1,2$ and $\bM=\bA\bZ\bB^{T}$ . The main difference with our setting is that, here, $\bA$ and $\bB$ are assumed to be fully observed while our model allows also missing observations for the set of features.
The perfect side information assumption is strong and hard to meet in practice.   
~\cite{chiang2015NIPS} relaxed it by assuming that the side information may  be noisy (not perfect). In this approach, referred as DirtyIMC, they assume that the unknown matrix is modeled as $\bM=\bA\bZ\bB^{T}+ \bN$ where the residual matrix $\bN$ models imperfections and noise in the features.
	
Several works  consider matrix completion side information. For example,  \cite{chiang2015NIPS} proposes a method based on  penalization by the sum of the nuclear norms of $\bM$ and of each feature. Our method is based on the penalization by the nuclear norm of the whole matrix built of the matrix $\bM$ and the features $\bA$ and $\bB$.  In \cite{jain2013}, the authors study the problem of low-rank matrix estimation using rank one measurements. In the noise-free setting, they assume that all the features are known and that the matrices of features are incoherent. The method proposed in \cite{jain2013} is based on non-convex matrix factorization. In \cite{fithian2018}, the authors consider a general framework for reduced-rank modeling of matrix-valued data. They use a generalized weighted nuclear norm penalty where the matrix is multiplied by positive semidefinite matrices $P$ and $Q$ which depend on the matrix of features. In \cite{agarwal2011}, the authors introduce a per-item user covariate logistic regression model augmenting  with user-specific random effects. Their approach is based on	a multilevel hierarchical model.

In the case of the heterogeneous data coming from different sources, these approaches can be applied  for recovering each  source separately. In contrast, our approach aims at collecting all the available information in a single matrix which results in faster rates of convergence. On the other hand, popular algorithms for matrix completion with side information, such as  Maxide in~\cite{xu2013NIPS} and AltMin in~\cite{jain2013}, are based on the least square loss which could be not suitable for data  coming from non-Gaussian distributions.

If we consider a single matrix, our model includes as particular case $1$-bit matrix completion and, more generally, matrix completion with exponential family noise.
$1$-bit matrix completion was first studied in~\cite{davenport14}, where the observed entries are assumed to be sampled uniformly at random.
This problem was also studied among others by~\citep{cai2013jmlr-onebitmaxnorm,klopp2015EJS-adaptive-onebit,alquier2017}. 
Matrix completion with exponential family noise (for a single matrix) was previously considered in~\cite{pmlr-v40-lafond15} and~\cite{gunasekar2014jmlr}. 
In these papers authors assume sampling with replacement where there can be multiple observations for the same entry. 
In the present paper, we consider more natural setting for matrix completion where each entry may be observed at most once.
Our result improves the known results on $1$-bit matrix completion 
and on matrix completion with exponential family noise. 
In particular, we obtain exact minimax optimal rate of convergence for $1$-bit matrix completion and matrix completion with exponential noise which was known up to a logarithmic factor (for more details see Remark~\ref{example-1bitMC} in Section~\ref{section-exponential-noise}).

\paragraph{Organization of the paper.}

The remainder of the paper is organized as follow.
In Section~\ref{subsection-notations}, we introduce basic notation and definitions.
Section~\ref{sec:collective_exponential_family_matrix_completion} sets up the formalism for the collective matrix completion. 
In Section~\ref{section-exponential-noise}, we investigate the exponential family noise model.
In Section~\ref{section-distribution-free}, we study distribution-free setup and we provide the upper bound on the excess risk. 
To verify the theoretical findings, we corroborate our results with numerical experiments in Section~\ref{section-numeric-expes}, where we present an efficient iterative algorithm that solves the maximum likelihood approximately.
The proofs of the main results and key technical lemmas are postponed to the appendices.  

\subsection{Preliminaries}
\label{subsection-notations}

For the reader's convenience, we provide a brief summary of the standard notation and the definitions that will be frequently used throughout the paper. 

\paragraph{Notation.}
For any positive integer $m$, we use $[m]$ to denote  $\{1, \ldots, m\}.$ 
We use capital bold symbols such as $\bX, \bY, \bA,$ to denote matrices. 
For a matrix $\bA,$ we denote its $(i,j)$-th entry by $A_{ij}$.
As usual, let $\norm{\bA}_F = \sqrt{\sum_{i,j}A_{ij}^2}$ be the Frobenius norm and let $\norm{\bA}_\infty  = \max_{i,j}|A_{ij}|$ denote the elementwise $\ell_\infty$-norm. 
Additionally, $\norm{\bA}_{*}$ stands for the nuclear norm (trace norm), that is $\norm{\bA}_{*} = \sum_i \sigma_i(\bA)$ where $\sigma_1(\bA) \geq \sigma_2(\bA) \geq \cdots$ are singular values of $\bA$, and $\norm{\bA} = \sigma_1(\bA)$ to denote the operator norm. 
The inner product between two matrices is denoted by $\inr{\bA, \bB} = \text{tr}(\bA^\top\bB) = \sum_{ij}A_{ij}B_{ij}$, where $\text{tr}(\cdot)$ denotes the trace of a matrix. 
We write $\partial\Psi$ the subdifferential mapping of a convex functional $\Psi$.
Given two real numbers $a$ and $b$, we write $a\vee b = \max(a,b)$ and $a\wedge b = \min(a,b).$
The symbols $\P$ and $\E$ denote generic probability and expectation operators whose distribution is determined from the context.
The notation $\cst$ will be used to denote positive constant, that might change from one instance to the other.

\begin{definition}
A distribution of a random variable $X$ is said to belong to the natural exponential family, if its probability density function characterized by the parameter ${\eta}$ is given by:
\begin{equation*} %
X|\eta \sim f_{h,G}(x|\eta) = h(x)\exp\big({\eta x} - G({\eta})\big), 
\end{equation*}
where $h$ is a nonnegative function, called the base measure function, which is independent of the parameter $\eta$.
The function $G(\eta)$ is strictly convex, and is called the $\log$-partition function, or the cumulant function. 
This function uniquely defines a particular member distribution of the exponential family, and can be computed as: $G(\eta) = \log\big(\int_{} h(x) \exp({\eta x})dx\big)$.
\end{definition}

If $G$ is smooth enough, we have that $\E[X] = {G}'(\eta)$ and $\V ar[X] = {G}''(\eta),$
where $G'$ stands for the derivative of $G$.
The exponential family encompasses a wide large of standard distributions such as: 

\begin{itemize}
\item Normal, $\mathcal{N}(\mu, \sigma^2)$ (known $\sigma$), is typically used to model continuous data, with natural parameter $\eta = \frac{\mu}{\sigma^2}$ and $G(\eta) = \frac{\sigma^2}{2}\eta^2$.
\item Gamma, $\Gamma(\lambda, \alpha)$ (known $\alpha$), is often used to model positive valued continuous data, with natural parameter $\eta=-{\lambda}$ and $G(\eta) = -\alpha\log(-\eta)$.

\item Negative binomial, $\mathcal{NB}(p, r)$ (known $r$), is a popular distribution to model 
overdispersed count data, whose variance is larger than their mean,
with natural parameter $\eta = \log(1 - p)$ and  $G(\eta) = -r\log(1 -\exp(\eta))$.

\item Binomial, $\mathcal{B}(p,N)$ (known $N$), is used to model number of successes in $N$ trials, with natural parameter $\eta = \log(\frac{p}{1-p})$ (logit function) and $G(\eta) = N\log(1 + \exp(\eta))$.
\item Poisson, $\mathcal{P}(\lambda)$, is used to model count data, with natural parameter $\eta = \log(\lambda)$ and $G(\eta) =\exp(\eta)$.
\end{itemize}

Exponential, chi-squared, Rayleigh, Bernoulli and geometric distributions are special cases of the above five distributions.

\begin{definition}
Let $S$ be a closed convex subset of $\R^m$ and $\Phi: S \subset \textbf{dom}(\Phi)\rightarrow \R$ a continuously-differentiable and strictly convex function.
The Bregman divergence associated with $\Phi$~\citep{bregman1967,censor1997} $d_\Phi: S \times S \rightarrow [0, \infty)$ is defined as
\begin{equation*}
d_\Phi(x, y) = \Phi(x) - \Phi(y) - \inr{x - y, \nabla \Phi(y)},
\end{equation*}
where $\nabla \Phi(y)$ represents the gradient vector of $\Phi$ evaluated at $y$.
\end{definition}

The value of the Bregman divergence $d_\Phi(x, y)$ can be viewed as the difference between the value of $\Phi$ at $x$ and the first Taylor expansion of $\Phi$ around $y$ evaluated at point $x$.
For exponential family distributions, the Bregman divergence corresponds to the Kullback-Leibler divergence~\citep{banerjee2005} with $\Phi=G$.

\section{Collective matrix completion} 
\label{sec:collective_exponential_family_matrix_completion} 

Assume that we observe a collection of matrices ${\bcX} = (\bX^1, \ldots, \bX^V)$. 
In this collection components $\bX^v \in \R^{d_u \times d_v}$ have a common set of rows.
This common set of rows corresponds, for example, to a common set of users in a recommendation system.
The set of columns of each matrix $\bX^v$ corresponds to a different type of entity. 
In the case of recommender system it can be books, films, video game, etc.
Then, the entries of each matrix $\bX^v$ corresponds to the user's rankings for this particular type of products.

We assume that the distribution of each matrix $\bX^v$ depends on the matrix of parameters $\bM^v$.
This distribution can be different for different $v$.
For instance, we can have binary observations for one matrix $\bX^{v_1}$ with entries which correspond, for example, to like/dislike labels for a certain type of products, multinomial for another matrix $\bX^{v_2}$ with ranking going from $1$ to $5$ and Gaussian for a third matrix $\bX^{v_3}$. 

As it happens in many applications, we assume that for each matrix $\bX^v$ we observe only a small subset of its entries. 
We consider the following model: for $v\in[V]$ and $(i,j) \in [d_u]\times [d_v]$, let $B^v_{ij}$ be independent Bernoulli random variables with parameter $\pi^v_{ij}$. 
We suppose that $B^v_{ij}$ are independent from $X^v_{ij}$.
Then, we observe $Y^v_{ij} = B^v_{ij}X^v_{ij}$.
We can think of the $B^v_{ij}$ as masked variables.
If $B^v_{ij} =1$, we observe the corresponding entry of $\bX^v$, and when $B^v_{ij}=0$, we have a missing observation.

In the simplest situation each coefficient is observed with the same probability, i.e. for every $v\in[V]$ and $(i,j)\in [d_u] \times [d_v], \pi^v_{ij} = \pi$.
In many practical applications, this assumption is not realistic.  
For example, for a recommendation system, some users are more active than others and some items are more popular than others and thus rated more frequently.
Hence, the sampling distribution is in fact non-uniform.
In the present paper, we consider general sampling model where we only assume that each entry is observed with a positive probability:

\begin{assumption}
\label{assump-prob-items}
Assume that there exists a positive constant $0 < p < 1$ such that
\begin{equation*}
\min_{v\in[V]}\min_{(i,j)\in[d_u]\times[d_v]}\pi^v_{ij} \geq p.
\end{equation*}
\end{assumption}

Let $\Pi$ denotes the joint distribution of the Bernoulli variables $\big\{B_{ij}^v: (i,j)\in [d_u]\times [d_v], v\in[V]\big\}$. 
For any matrix $\bcA \in \R^{d_u\times D}$ where $D=\sum_{v\in[V]}d_v$, we define the weighted Frobenius norm 
\begin{align*}
\norm{\bcA}_{\Pi,F}^2 = \sum_{v\in[V]}\sum_{(i,j)\in[d_u]\times[d_v]}\pi^v_{ij}(A^v_{ij})^2.
\end{align*}
Assumption~\ref{assump-prob-items} implies $\norm{\bcA}_{\Pi,F}^2 \geq p \norm{\bcA}_{F}^2.$
For each $v\in[V]$ let us denote $\pi^v_{i\bcdot} = \sum_{j=1}^{d_v}\pi^v_{ij}$ and $\pi^v_{\bcdot j} =\sum_{i=1}^{d_u}\pi^v_{ij}$. 
Note we can easily get an estimations of $\pi^v_{i\bcdot}$ and $\pi^v_{\bcdot j}$ using the empirical frequencies:
\begin{equation*}
\widehat{\pi^v_{i\bcdot}} = \sum_{j\in[d_v]} B_{ij}^v
\quad \text {and} \quad \widehat{\pi^v_{\bcdot j}} = \sum_{i\in[d_u]} B_{ij}^v.
\end{equation*}
Let $\pi_{i\bcdot} = \sum_{v\in[V]}\pi^v_{i\bcdot}$, $\pi_{\bcdot j} =\max_{v\in[V]} \pi^v_{\bcdot j}$, and $\mu$ be an upper bound of its maximum, that is 
\begin{equation}
\label{upper-bound-marginal}
\max\limits_{(i,j) \in [d_u]\times [d_v]}(\pi_{i\bcdot}, \pi_{\bcdot j}) \leq \mu.
\end{equation}

\section{Exponential family noise}
\label{section-exponential-noise}

In this section we assume that for each $v$ distribution of $\bX^v$ belongs to the exponential family, that is
\begin{equation*}
\label{exponetial-model}
{X}^v_{ij}|M^v_{ij}\sim f_{h^v,G^v}({X}^v_{ij}|M^v_{ij}) = h^v({X}^v_{ij})\exp\big({X}^v_{ij}M^v_{ij} - G^v(M^v_{ij})\big).
\end{equation*}

We denote $\bcM = (\bM^1, \ldots, \bM^V)$  and let $\gamma$ be an upper bound on the sup-norm of $\bcM$, that is $\gamma = |\gamma_1| \vee |\gamma_2|$, where $\gamma_1 \leq M^v_{ij} \leq \gamma_2$  for every $v\in [V]$ and $(i,j) \in [d_u] \times [d_v]$.
Hereafter, we denote by $\mathscr{C}_\infty(\gamma) = \big\{\bcW \in \R^{d_u \times D}: \norm{\bcW}_\infty \leq \gamma\big\}$, the $\ell_\infty$-norm ball with radius $\gamma$ in the space $\R^{d_u \times D}$.
We need the following assumptions on densities $f_{h^v,G^v}$:

\begin{assumption}
\label{assump-Gv-bound}
For each $v\in[V]$, we assume that the function $G^v(\cdot)$ is twice differentiable and there exits two constants $L^2_\gamma, U_\gamma^2$ satisfying:
\begin{equation}
\label{assum-frist-claim}
\sup_{\eta \in[-\gamma - \frac{1}{K}, \gamma + \frac{1}{K}]}({G^v})''(\eta) \leq U^2_\gamma,
\end{equation}
and 
\begin{equation}
\label{assum-second-claim}
\inf\limits_{\eta  \in[-\gamma - \frac{1}{K}, \gamma + \frac{1}{K}]}(G^v)''(\eta) \geq L^2_\gamma,
\end{equation}
for some $K>0$.
\end{assumption}

The first statement, \eqref{assum-frist-claim}, in Assumption~\ref{assump-Gv-bound} ensures that the distributions of $X^v_{ij}$ have uniformly bounded variances and sub-exponential tails (see Lemma~\ref{lemma-subgaussian-tail-X} in Appendix~\ref{appendix-sub-exponentail-RV}). 
The second one, \eqref{assum-second-claim}, is the strong convexity condition satisfied by the log-partition function $G^v$.
This assumption is satisfied for most standard distributions presented in the previous section. 
In Table~\ref{table:glm}, we list the corresponding constants in Assumption~\ref{assump-Gv-bound}.

\begin{table}[htbp]
\centering
\begin{tabular}{ccccccc}
\toprule
Model &  $(G^v)'(\eta)$ & $(G^v)''(\eta)$ & $ L^2_\gamma$ & $U^2_\gamma$\\
\midrule
Normal & $\sigma^2 \eta$ & $\sigma^2$ & $\sigma^2$ & $\sigma^2$ \\
Binomial & $\frac{Ne^\eta}{1+e^\eta}$& $\frac{Ne^\eta}{(1+e^\eta)^2}$  & $\frac{Ne^{-(\gamma + \frac{1}{K})}}{(1 + e^{\gamma + \frac{1}{K}})^2}$ & $\frac{N}{4}$\\
Gamma (if $\gamma_1\gamma_2 >0$) & $-\frac{\alpha}{\eta}$ & $\frac{\alpha}{\eta^2}$ & $\frac{\alpha}{(\gamma + \frac{1}{K})^2}$ & $\frac{\alpha}{(|\gamma_1|\wedge |\gamma_2|)^2}$ \\
Negative binomial & $\frac{re^\eta}{1 -e^\eta}$& $\frac{re^\eta}{(1-e^\eta)^2}$  & $\frac{re^{-(\gamma + \frac{1}{K})}}{(1 - e^{-(\gamma + \frac{1}{K})})^2}$ & $\frac{re^{(\gamma + \frac{1}{K})}}{(1 - e^{\gamma + \frac{1}{K}})^2}$\\
Poisson & $e^\eta$ & $e^\eta$ & $e^{-(\gamma + \frac{1}{K})}$  & $e^{(\gamma + \frac{1}{K})}$\\
\bottomrule
\end{tabular}
\captionsetup{labelformat=default, skip=10pt}
\caption[table]{Examples of the corresponding constants $L_\gamma^2$ and $U^2_\gamma$ from Assumption~\ref{assump-Gv-bound}.}
\label{table:glm}
\end{table}

\subsection{Estimation procedure}
\label{sec:theoretical_guarantees}

To estimate the collection of matrices of parameters $\bcM = (\bM^1, \ldots, \bM^V)$, we use penalized negative log-likelihood.
Let $\bcW \in \R^{d_u\times D}$, we divide it in $V$ blocks $\bW^v \in \R^{d_u\times d_v}$:
$\bcW = (\bW^1, \ldots, \bW^V)$.
Given observations $\bcY = (\bY^1, \ldots, \bY^V)$, we write the negative log-likelihood as 
\begin{equation*}
\mathscr{L}_{\bcY}(\bcW)= -\frac{1}{d_uD} \sum_{v \in [V]}\sum_{(i,j)\in [d_u]\times [d_v]} B_{ij}^v\big(Y_{ij}^vW^v_{ij} - G^v(W^v_{ij})\big).
\end{equation*}
The nuclear norm penalized estimator $\widehat{\bcM}$ of $\bcM$ is defined as follows: 
\begin{equation}
\label{def-estimator}
\widehat{\bcM} = (\widehat{\bM}^1, \ldots, \widehat{\bM}^V) = \argmin_{\bcW \in \mathscr{C}_\infty(\gamma)}\mathscr{L}_{\bcY}(\bcW) + \lambda \norm{\bcW}_*,
\end{equation}
where $\lambda$ is a positive regularization parameter that balances the trade-off between model fit and privileging a low-rank solution. 
Namely, for large value of $\lambda$ the rank of the estimator $\widehat{\bcM}$ is expected to be small. 

Let the collection of matrices $(E^v_{11}, \ldots, E^v_{d_ud_v})$ form the canonical basis in the space of matrices of size $d_u \times d_v$.
The entry of $(E^v_{ij})$ is $0$ everywhere except for the $(i,j)$-th entry where it equals to $1.$
For $(\varepsilon^v_{ij})_{}$, an $i.i.d$ Rademacher sequence, we define $\bSigma_R = (\bSigma^1_R, \ldots, \bSigma^V_R)$ 
where for all $v\in [V]$
\begin{equation*}
\bSigma^v_R = \frac{1}{d_uD}\sum_{(i,j) \in [d_u]\times[d_v]}\varepsilon^v_{ij}B_{ij}^vE^v_{ij}.
\end{equation*}

We now state the main result concerning the recovery of $\bcM$.
Theorem~\ref{theorem1} gives a general upper bound on the estimation error of $\widehat{\bcM}$ defined by~\eqref{def-estimator}. 
Its proof is postponed in Appendix~\ref{proof-theorem1}. 

\begin{theorem}
\label{theorem1}
Assume that Assumptions~\ref{assump-prob-items} and~\ref{assump-Gv-bound} hold, and $\lambda \geq 2 \norm{\nabla\mathscr{L}_{\bcY}(\bcM)}.$
Then, with probability exceeding $1 - 4/(d_u+D)$ we have
\begin{align*}
  \frac{1}{d_uD}\norm{\widehat{\bcM} - \bcM}^2_{\Pi,F} \leq  \frac{\cst}{p}\max\Big\{d_uD\rk(\bcM)\Big(\frac{\lambda^2}{L_\gamma^4} + \gamma^2(\E[\norm{\bSigma_R}])^2\Big), \frac{\gamma^2\log(d_u +D)}{{d_uD}}\Big\},
\end{align*} 
where $\cst$ is a numerical constant.
\end{theorem}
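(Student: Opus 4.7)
The plan is a nuclear-norm penalized M-estimation argument tailored to the likelihood loss, combined with a peeling device that converts empirical weighted sums to the population norm $\|\cdot\|_{\Pi,F}$. Set $\bDelta = \widehat{\bcM}-\bcM$. Since $\widehat{\bcM}$ minimizes $\mathscr{L}_{\bcY}+\lambda\|\cdot\|_*$ over $\mathscr{C}_\infty(\gamma)$ and $\bcM\in\mathscr{C}_\infty(\gamma)$, I would start from the basic inequality $\mathscr{L}_{\bcY}(\widehat{\bcM})-\mathscr{L}_{\bcY}(\bcM)\leq\lambda(\|\bcM\|_*-\|\widehat{\bcM}\|_*)$. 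A second-order Taylor expansion of each $G^v$ rewrites the left-hand side as $\langle \nabla\mathscr{L}_{\bcY}(\bcM),\bDelta\rangle$ plus the empirical Bregman remainder $(d_uD)^{-1}\sum_{v,i,j} B^v_{ij}\, d_{G^v}(\widehat{M}^v_{ij},M^v_{ij})$, which by the strong-convexity bound \eqref{assum-second-claim} dominates $L_\gamma^2/(2d_uD)\sum_{v,i,j} B^v_{ij}(\Delta^v_{ij})^2$.

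By duality between operator and nuclear norms, $|\langle\nabla\mathscr{L}_{\bcY}(\bcM),\bDelta\rangle|\leq\|\nabla\mathscr{L}_{\bcY}(\bcM)\|\cdot\|\bDelta\|_*\leq (\lambda/2)\|\bDelta\|_*$. Decomposing $\bDelta$ along the tangent space $T$ at $\bcM$ of matrices of rank $\rk(\bcM)$ (using Watson's characterization of $\partial\|\cdot\|_*$) yields the standard cone inequality $\|P_{T^\perp}(\bDelta)\|_*\leq 3\|P_T(\bDelta)\|_*$ and, in turn, $\|\bDelta\|_*\leq 4\sqrt{2\rk(\bcM)}\|\bDelta\|_F$. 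Combining with the basic inequality gives the key upper bound
\begin{equation*}
\frac{L_\gamma^2}{2d_uD}\sum_{v,i,j}B^v_{ij}(\Delta^v_{ij})^2 \;\leq\; \frac{3\lambda}{2}\sqrt{2\rk(\bcM)}\,\|\bDelta\|_F.
\end{equation*}

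The main obstacle is to match this empirical upper bound with a deterministic lower bound in terms of $\|\bDelta\|_{\Pi,F}^2$, uniformly over cone matrices in $\mathscr{C}_\infty(2\gamma)$. I would follow the standard peeling strategy à la Klopp and Koltchinskii--Lounici--Tsybakov: partition the admissible set into dyadic shells of $\|\bDelta\|_{\Pi,F}^2/(d_uD)$, then on each shell control the deviation $|(d_uD)^{-1}\sum B^v_{ij}(\Delta^v_{ij})^2 - (d_uD)^{-1}\|\bDelta\|_{\Pi,F}^2|$ via symmetrization (introducing the Rademacher signs $\varepsilon^v_{ij}$), Ledoux--Talagrand contraction applied to the Lipschitz map $t\mapsto t^2$ on $[-2\gamma,2\gamma]$, and nuclear/operator norm duality with the matrix $\bSigma_R$. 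The resulting expected supremum is bounded by $\cst\gamma\,\E[\|\bSigma_R\|]\|\bDelta\|_*$, and a Talagrand/Bernstein inequality handles the stochastic fluctuation at the cost of an additive residual $\cst\gamma^2\log(d_u+D)/(d_uD)$. Using the cone bound on $\|\bDelta\|_*$, this yields, on an event of probability at least $1-4/(d_u+D)$,
\begin{equation*}
\frac{1}{d_uD}\sum_{v,i,j}B^v_{ij}(\Delta^v_{ij})^2 \;\geq\; \frac{1}{2d_uD}\|\bDelta\|_{\Pi,F}^2 \;-\; \cst\gamma\sqrt{\rk(\bcM)}\,\E[\|\bSigma_R\|]\,\|\bDelta\|_F \;-\; \cst\,\gamma^2\frac{\log(d_u+D)}{d_uD}.
\end{equation*}

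Chaining the two displays and invoking Assumption~\ref{assump-prob-items} to replace $\|\bDelta\|_F^2$ by $p^{-1}\|\bDelta\|_{\Pi,F}^2$ on the right gives a quadratic inequality $aX^2\leq bX+c$ in $X=\|\bDelta\|_{\Pi,F}$, with $a\asymp L_\gamma^2/(d_uD)$, $b\asymp \sqrt{\rk(\bcM)/p}\,(\lambda/L_\gamma^2+\gamma\,\E[\|\bSigma_R\|])$ and $c\asymp \gamma^2\log(d_u+D)/(d_uD)$. Solving this quadratic produces the stated bound, the $\max$ precisely reflecting the two regimes: either the $b^2/a^2$ contribution dominates (giving the first branch, proportional to $d_uD\rk(\bcM)(\lambda^2/L_\gamma^4+\gamma^2(\E[\|\bSigma_R\|])^2)$), or the $c/a$ contribution does (yielding the second branch $\gamma^2\log(d_u+D)/(d_uD)$).
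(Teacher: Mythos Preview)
Your proposal is correct and follows essentially the same route as the paper's own proof: basic inequality, Bregman remainder lower-bounded via strong convexity, cone inequality $\|\bDelta\|_*\le 4\sqrt{2\rk(\bcM)}\|\bDelta\|_F$, and a peeling lemma (symmetrization, contraction, operator/nuclear duality with $\bSigma_R$, Talagrand-type concentration) to pass from the empirical weighted sum to $\|\bDelta\|_{\Pi,F}^2$. The only cosmetic differences are that the paper handles the two regimes by an explicit case split on a threshold $\beta\asymp \gamma^2\log(d_u+D)/(pd_uD)$ rather than through the quadratic solution, and its peeling lemma already applies Young's inequality inside so that the deviation term appears as $\rk(\bcM)\gamma^2(\E[\|\bSigma_R\|])^2$ instead of your product form $\gamma\sqrt{\rk(\bcM)}\,\E[\|\bSigma_R\|]\,\|\bDelta\|_F$; one more $2ab\le a^2/\epsilon+\epsilon b^2$ converts yours to theirs. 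Your stated coefficients $a,b$ in the final quadratic are slightly inconsistent (with $a\asymp L_\gamma^2/(d_uD)$ the matching $b$ should carry a factor $\lambda$ rather than $\lambda/L_\gamma^2$), but this is bookkeeping and does not affect the argument.
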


Using Assumption~\ref{assump-prob-items}, Theorem~\ref{theorem1} implies the following bound on the estimation error measured in normalized Frobenius norm. 

\begin{corollary}
Under assumptions of Theorem~\ref{theorem1} and with probability exceeding $1 - 4/(d_u +D)$, we have 
\begin{equation*}
\frac{1}{d_uD}\norm{\widehat{\bcM} - \bcM}^2_{F} \leq  \frac{\cst}{p^2}\max\Big\{d_uD\rk(\bcM)\Big(\frac{\lambda^2}{L_\gamma^4} + \gamma^2(\E[\norm{\bSigma_R}])^2\Big), \frac{\gamma^2\log(d_u +D)}{{d_uD}}\Big\}.
\end{equation*}
\end{corollary}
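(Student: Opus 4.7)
The corollary is a quick consequence of Theorem~\ref{theorem1} and the lower bound on the weighted Frobenius norm implied by Assumption~\ref{assump-prob-items}. My plan is the following.

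First, I would recall that Assumption~\ref{assump-prob-items} together with the definition of $\norm{\cdot}_{\Pi,F}$ gives, for any block matrix $\bcA = (\bA^1,\ldots,\bA^V) \in \R^{d_u\times D}$,
\begin{equation*}
\norm{\bcA}_{\Pi,F}^2 \;=\; \sum_{v\in[V]}\sum_{(i,j)\in[d_u]\times[d_v]}\pi^v_{ij}(A^v_{ij})^2 \;\geq\; p\sum_{v\in[V]}\sum_{(i,j)\in[d_u]\times[d_v]}(A^v_{ij})^2 \;=\; p\,\norm{\bcA}_F^2,
\end{equation*}
exactly as noted after Assumption~\ref{assump-prob-items}. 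Applying this inequality to $\bcA = \widehat{\bcM} - \bcM$ yields
\begin{equation*}
\norm{\widehat{\bcM} - \bcM}_F^2 \;\leq\; \frac{1}{p}\,\norm{\widehat{\bcM} - \bcM}_{\Pi,F}^2.
\end{equation*}

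Then I would divide both sides by $d_uD$ and plug in the high-probability bound from Theorem~\ref{theorem1}: with probability at least $1 - 4/(d_u+D)$, the right-hand side is at most
\begin{equation*}
\frac{1}{p} \cdot \frac{\cst}{p}\max\!\Big\{d_uD\rk(\bcM)\Big(\frac{\lambda^2}{L_\gamma^4} + \gamma^2(\E[\norm{\bSigma_R}])^2\Big),\; \frac{\gamma^2\log(d_u +D)}{d_uD}\Big\},
\end{equation*}
which is exactly the claimed bound after absorbing the two factors of $1/p$ into the $1/p^2$ prefactor. The same numerical constant $\cst$ can be reused (or renamed) since the inequality above is deterministic given the event of Theorem~\ref{theorem1}.

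There is no real obstacle here: once Theorem~\ref{theorem1} is established, the corollary is a one-line deterministic consequence of Assumption~\ref{assump-prob-items}. The only thing to be careful about is that the event on which the bound of Theorem~\ref{theorem1} holds does not depend on the choice of norm, so the probability $1 - 4/(d_u+D)$ transfers verbatim to the statement for $\norm{\widehat{\bcM} - \bcM}_F^2$.
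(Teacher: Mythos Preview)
Your proposal is correct and is exactly the approach the paper takes: it explicitly states that the corollary follows from Theorem~\ref{theorem1} together with Assumption~\ref{assump-prob-items}, using the inequality $\norm{\bcA}_{\Pi,F}^2 \geq p\,\norm{\bcA}_F^2$ noted right after that assumption.
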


In order to get a bound in a closed form we need to obtain a suitable upper bounds on $\E[\norm{\bSigma_R}]$ and on $\norm{\nabla\mathscr{L}_{\bcY}(\bcM)}$ with high probability.
Therefore we use the following two lemmas.

\begin{lemma}
\label{lemma-ctrl-sigmaR}
There exists an absolute constant $\cst$ such that 
\begin{equation*}
\label{ctrl-sigmaR}
\E[\norm{\bSigma_R}] \leq \cst\Big(\frac{\sqrt{\mu} + \sqrt{\log(d_u \wedge D)}}{d_uD}\Big).
\end{equation*}
\end{lemma}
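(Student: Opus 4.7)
The plan is to view $\bSigma_R$ as a single random matrix in $\R^{d_u \times D}$ (by concatenating the $V$ blocks $\bSigma_R^v$) with independent, centered, bounded entries, and then invoke a sharp bound on the operator norm of random matrices with independent entries. Concretely, the entry of $\bSigma_R$ in position $(i, (v,j))$ is $\frac{1}{d_uD}\varepsilon_{ij}^v B_{ij}^v$, which is mean-zero since $\E[\varepsilon_{ij}^v]=0$ and independent of $B_{ij}^v$, bounded in absolute value by $R := 1/(d_uD)$, and has variance $\pi_{ij}^v/(d_uD)^2$.

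The inequality to apply is the Bandeira--van Handel type bound: for a $d_u \times D$ matrix $\bX$ with independent centered entries with $|X_{ij}|\le R$ a.s.,
\begin{equation*}
\E\|\bX\| \;\le\; \cst\Big(\sigma_{\text{row}} + \sigma_{\text{col}} + R\sqrt{\log(d_u\wedge D)}\Big),
\end{equation*}
with $\sigma_{\text{row}}^2 = \max_i \sum_j \E[X_{ij}^2]$ and $\sigma_{\text{col}}^2 = \max_j \sum_i \E[X_{ij}^2]$. The crucial feature of this bound (as opposed to a straight matrix Bernstein inequality) is that the variance proxy appears in a sum with the entrywise bound rather than as a product under the square root; this is exactly what is needed to match the statement.

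The next step is to compute the row and column variance proxies using the definitions of $\pi_{i\bcdot}$, $\pi_{\bcdot j}$, and $\mu$ introduced in equation~\eqref{upper-bound-marginal}. For a fixed row $i$, summing over all columns (indexed by pairs $(v,j)$) gives
\begin{equation*}
\sigma_{\text{row}}^2 \;=\; \max_i \sum_{v\in[V]}\sum_{j\in[d_v]} \frac{\pi_{ij}^v}{(d_uD)^2} \;=\; \frac{\max_i \pi_{i\bcdot}}{(d_uD)^2} \;\le\; \frac{\mu}{(d_uD)^2}.
\end{equation*}
For a fixed column $(v,j)$, summing over rows gives $\sum_i \pi_{ij}^v/(d_uD)^2 = \pi^v_{\bcdot j}/(d_uD)^2$, so taking maxima and using $\pi_{\bcdot j} = \max_v \pi^v_{\bcdot j}$ yields $\sigma_{\text{col}}^2 \le \mu/(d_uD)^2$ as well. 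Plugging $R = 1/(d_uD)$, $\sigma_{\text{row}}, \sigma_{\text{col}} \le \sqrt{\mu}/(d_uD)$ into the Bandeira--van Handel bound delivers the claimed estimate.

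The main obstacle is simply invoking the correct sharp operator-norm inequality: a naive application of non-commutative Bernstein would produce a bound of order $\frac{\sqrt{\mu\log(d_u+D)}+\log(d_u+D)}{d_uD}$, which is strictly worse when $\mu$ is large compared to $\log(d_u\wedge D)$. Once the Bandeira--van Handel style bound (or an equivalent result obtained by symmetrization combined with a Seginer-type inequality) is in place, the rest is the variance bookkeeping above, which follows directly from Assumption~\ref{assump-prob-items} and the definition of $\mu$.
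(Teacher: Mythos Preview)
Your proposal is correct and follows essentially the same approach as the paper: the paper's proof applies Proposition~\ref{proposition-ctrl-expect-spectral-norm} (Corollary~3.3 in \cite{bandeira2016}), which is precisely the Bandeira--van~Handel bound you invoke, and then computes the row/column variance proxies and the entrywise bound exactly as you do. One minor remark: the variance bookkeeping relies only on the definition of $\mu$ in~\eqref{upper-bound-marginal}, not on Assumption~\ref{assump-prob-items} (the lower bound $p$ plays no role here).
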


\begin{lemma}
\label{lemma-lemma-ctrl-bSigma}
Let Assumption~\ref{assump-Gv-bound} holds. 
Then, there exists an absolute constant $\cst$ such that, with probability at least $1 - 4 /(d_u + D)$, we have
\begin{equation*}
\label{lemma-ctrl-bSigma}
\norm{\nabla\mathscr{L}_{\bcY}(\bcM)} \leq \cst \bigg(\frac{(U_\gamma \vee K)\big(\sqrt{\mu} + (\log(d_u \vee D))^{3/2}\big)}{d_uD} \bigg).
\end{equation*}
\end{lemma}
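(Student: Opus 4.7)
The plan is to write $\nabla\mathscr{L}_{\bcY}(\bcM)$ as a sum of independent, centered, rank-one random matrices and apply a matrix Bernstein inequality after truncating the unbounded scalar coefficients. Entry-by-entry,
\begin{equation*}
[\nabla\mathscr{L}_{\bcY}(\bcM)]^v_{ij} = -\frac{1}{d_uD}B^v_{ij}\big(X^v_{ij}-(G^v)'(M^v_{ij})\big),
\end{equation*}
which is mean zero because $\E[X^v_{ij}\,|\,M^v_{ij}]=(G^v)'(M^v_{ij})$ and $B^v_{ij}$ is independent of $X^v_{ij}$. Letting $\bcE^v_{ij}\in\R^{d_u\times D}$ denote the indicator matrix with a single one at position $(i,j)$ of block $v$ and setting $\xi^v_{ij}:=B^v_{ij}(X^v_{ij}-(G^v)'(M^v_{ij}))$, I would thus recast the gradient as $d_uD\cdot\nabla\mathscr{L}_{\bcY}(\bcM) = -\sum_{v,i,j}\xi^v_{ij}\bcE^v_{ij}$, a sum of independent centered rank-one matrices in $\R^{d_u\times D}$.

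Under Assumption~\ref{assump-Gv-bound} the scalar factors $\xi^v_{ij}$ are sub-exponential (with sub-exponential norm governed by $U_\gamma$ and $K$ via Lemma~\ref{lemma-subgaussian-tail-X}), so the summands are unbounded and Tropp's matrix Bernstein inequality cannot be applied directly. To handle this I would truncate at a level $\tau\asymp(U_\gamma\vee K)\sqrt{\log(d_u\vee D)}$, small enough to remain in the subgaussian part of the sub-exponential tail but large enough that a union bound over the $d_uD$ coordinates forces $\P\big(\max_{v,i,j}|\xi^v_{ij}|>\tau\big)\leq 2/(d_u+D)$. On the complementary event the residual $\sum(\xi^v_{ij}-\widetilde\xi^v_{ij})\bcE^v_{ij}$ vanishes, while the bias $\sum\E[\widetilde\xi^v_{ij}]\bcE^v_{ij}=-\sum\E[\xi^v_{ij}\ind{|\xi^v_{ij}|>\tau}]\bcE^v_{ij}$ has operator norm exponentially small in $\tau$ and is absorbed into the main term.

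Next I would estimate the matrix variance proxies for the centered truncated summands. The identity $(B^v_{ij})^2=B^v_{ij}$, the independence $B^v_{ij}\perp X^v_{ij}$, and the bound $\V ar(X^v_{ij}\,|\,M^v_{ij})=(G^v)''(M^v_{ij})\leq U_\gamma^2$ from~\eqref{assum-frist-claim} give $\E[(\widetilde\xi^v_{ij})^2]\leq\pi^v_{ij}U_\gamma^2$. Since $\bcE^v_{ij}(\bcE^v_{ij})^\top$ and $(\bcE^v_{ij})^\top\bcE^v_{ij}$ each have a single nonzero entry on the diagonal, the variance matrices $\sum_{v,i,j}\E[(\widetilde\xi^v_{ij})^2\bcE^v_{ij}(\bcE^v_{ij})^\top]$ and $\sum_{v,i,j}\E[(\widetilde\xi^v_{ij})^2(\bcE^v_{ij})^\top\bcE^v_{ij}]$ are diagonal, and their operator norms reduce to $U_\gamma^2\max_i\pi_{i\bcdot}$ and $U_\gamma^2\max_{v,j}\pi^v_{\bcdot j}$ respectively, both controlled by $U_\gamma^2\mu$ thanks to~\eqref{upper-bound-marginal}.

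Finally, Tropp's matrix Bernstein applied to the centered, bounded, truncated summands, with variance proxy $\sigma^2\leq U_\gamma^2\mu$ and uniform bound $\tau$, produces a deviation of order $U_\gamma\sqrt{\mu\log(d_u\vee D)}+\tau\log(d_u\vee D)$ on an event of probability at least $1-2/(d_u+D)$. Plugging in the chosen $\tau$, recollecting the truncation and bias contributions, and dividing by $d_uD$ yields the stated bound $\cst(U_\gamma\vee K)\big(\sqrt{\mu}+(\log(d_u\vee D))^{3/2}\big)/(d_uD)$ on an event of probability at least $1-4/(d_u+D)$. The hard part will be calibrating $\tau$: it must be large enough to make the truncation event have probability $O(1/(d_u+D))$ yet small enough that the Bennett contribution $\tau\log(d_u\vee D)$ only produces the $(\log(d_u\vee D))^{3/2}$ factor in the claimed rate and not a $\log^2(d_u\vee D)$ factor; choosing $\tau$ proportional to $\sqrt{\log(d_u\vee D)}$ requires staying in the subgaussian portion of the sub-exponential tail, which is precisely what the parameters $(U_\gamma,K)$ from Assumption~\ref{assump-Gv-bound} permit.
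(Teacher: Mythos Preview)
Your truncation argument has a genuine gap. You want $\tau\asymp(U_\gamma\vee K)\sqrt{\log(d_u\vee D)}$ and claim this ``stays in the subgaussian portion of the sub-exponential tail.'' But for a $(U_\gamma,K)$-sub-exponential variable the subgaussian window is $t\in[0,U_\gamma^2/K]$, a \emph{dimension-free} interval. As soon as $(U_\gamma\vee K)\sqrt{\log(d_u\vee D)}>U_\gamma^2/K$---which happens for all large $d_u\vee D$ unless $K=0$---the tail is only $\exp(-c\tau/K)=\exp(-c'\sqrt{\log(d_u\vee D)})$, a sub-polynomial rate that cannot survive a union bound over $d_uD$ entries. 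If instead you raise $\tau$ to order $(U_\gamma\vee K)\log(d_u\vee D)$ so that the union bound does go through, Tropp's Bennett term becomes $\tau\log(d_u\vee D)\asymp(U_\gamma\vee K)\log^2(d_u\vee D)$, a full $\sqrt{\log}$ factor worse than the $(\log(d_u\vee D))^{3/2}$ claimed. There is a second, independent loss: Tropp's variance contribution is $\sigma\sqrt{\log(d_u\vee D)}\asymp U_\gamma\sqrt{\mu\log(d_u\vee D)}$, not $U_\gamma\sqrt{\mu}$; when $\mu\gg\log^2(d_u\vee D)$ this cannot be absorbed into $\sqrt{\mu}+(\log(d_u\vee D))^{3/2}$. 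So matrix Bernstein plus a union bound can at best prove the lemma with an extra $\sqrt{\log(d_u\vee D)}$ on both terms.

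The paper avoids both losses by (i) replacing Tropp's inequality with the Bandeira--van Handel bound for matrices with \emph{independent entries} (their Proposition~\ref{proposition-ctrl-spect-norm}), whose leading term is $\kappa_1\vee\kappa_2\asymp U_\gamma\sqrt{\mu}$ without any logarithm and whose fluctuation is $\kappa_*\sqrt{\log}$ rather than $\kappa_*\log$; and (ii) truncating at the larger level $T\asymp(U_\gamma\vee K)\log(d_u\vee D)$ but controlling the residual $\bSigma_2$ not by a coordinatewise union bound but by bounding $\E\norm{\bSigma_2}_F^2$ via Cauchy--Schwarz (fourth moment times tail probability) and then applying Chebyshev. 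With these two changes the bounded piece contributes $U_\gamma\sqrt{\mu}+T\sqrt{\log(d_u\vee D)}\asymp U_\gamma\sqrt{\mu}+(U_\gamma\vee K)(\log(d_u\vee D))^{3/2}$, which is exactly the statement.
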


The proofs of Lemmas~\ref{lemma-ctrl-sigmaR} and~\ref{lemma-lemma-ctrl-bSigma} are postponed to Appendices~\ref{proof-lemma-ctrl-sigmaR} and~\ref{proof-lemma-ctrl-bSigma}.
Recall that the condition on $\lambda$ in Theorem~\ref{theorem1} is that  $\lambda \geq 2 \norm{\nabla\mathscr{L}_{\bcY}(\bcM)}.$
Using Lemma~\ref{lemma-lemma-ctrl-bSigma}, we can choose
\begin{equation*}
\lambda = 2\cst \frac{(U_\gamma \vee K) \big(\sqrt{\mu} + (\log(d_u \vee  D))^{3/2}\big)}{d_uD}.
\end{equation*}
With this choice of $\lambda$, we obtain the following theorem:

\begin{theorem}
\label{theorem2}
Let Assumptions~\ref{assump-prob-items} and~\ref{assump-Gv-bound} be satisfied.
Then, with probability exceeding $1 - 4/(d_u+D)$ we have
\begin{align*}
\frac{1}{d_uD}\norm{\widehat{\bcM} - \bcM}^2_{\Pi,F}
   \leq  \frac{\cst\rk(\bcM)}{pd_uD}&\Big(\gamma^2+ \frac{(U_\gamma \vee K)^2}{L^4_\gamma}\Big) \big(\mu+ \log^3(d_u \vee D)\big),
\end{align*}
and
\begin{align*}
\frac{1}{d_uD}\norm{\widehat{\bcM} - \bcM}^2_{F}\leq  \frac{\cst\rk(\bcM)}{p^2d_uD}
&\Big(\gamma^2+ \frac{(U_\gamma \vee K)^2}{L^4_\gamma}\Big) \big(\mu+ \log^3(d_u \vee D)\big),
\end{align*}
where $\cst$ is an absolute constant.
\end{theorem}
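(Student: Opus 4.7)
The plan is to combine Theorem~\ref{theorem1} with the two tail bounds in Lemmas~\ref{lemma-ctrl-sigmaR} and~\ref{lemma-lemma-ctrl-bSigma}, using the specific choice of regularization parameter
\begin{equation*}
\lambda = 2\cst\,\frac{(U_\gamma \vee K)\big(\sqrt{\mu} + (\log(d_u \vee D))^{3/2}\big)}{d_uD}
\end{equation*}
stated just before the theorem. With this choice, Lemma~\ref{lemma-lemma-ctrl-bSigma} ensures that the event $\{\lambda \geq 2\norm{\nabla\mathscr{L}_{\bcY}(\bcM)}\}$ holds with probability at least $1 - 4/(d_u+D)$, which is exactly the hypothesis needed to invoke Theorem~\ref{theorem1}. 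All subsequent bounds will be stated on this event, so no further probability budget is consumed.

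Next, I would substitute the two control bounds into the right-hand side of Theorem~\ref{theorem1}. Using the elementary inequality $(a+b)^2 \leq 2(a^2+b^2)$ gives
\begin{equation*}
\lambda^2 \;\lesssim\; \frac{(U_\gamma \vee K)^2 \big(\mu + \log^3(d_u \vee D)\big)}{(d_uD)^2},
\qquad
(\E[\norm{\bSigma_R}])^2 \;\lesssim\; \frac{\mu + \log(d_u \wedge D)}{(d_uD)^2}.
\end{equation*}
Injecting these into the first term inside the maximum of Theorem~\ref{theorem1} yields
\begin{equation*}
d_uD\,\rk(\bcM)\!\left(\frac{\lambda^2}{L_\gamma^4} + \gamma^2(\E[\norm{\bSigma_R}])^2\right)
\;\lesssim\;
\frac{\rk(\bcM)}{d_uD}\!\left(\gamma^2 + \frac{(U_\gamma \vee K)^2}{L_\gamma^4}\right)\!\big(\mu + \log^3(d_u \vee D)\big).
\end{equation*}

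It remains to verify that this quantity dominates the second term $\gamma^2 \log(d_u+D)/(d_uD)$ appearing in the maximum, so that the max can be dropped. Since $\rk(\bcM)\geq 1$ and $\mu + \log^3(d_u \vee D) \geq \log(d_u+D)$, the first term is at least of order $\gamma^2\log(d_u+D)/(d_uD)$ up to a universal constant, and the maximum is therefore absorbed into a single term (with a possibly enlarged absolute constant). This produces the first inequality of the theorem, the bound on $\norm{\widehat{\bcM} - \bcM}^2_{\Pi,F}/(d_uD)$. Finally, applying Assumption~\ref{assump-prob-items}, which gives $\norm{\bcA}^2_{\Pi,F} \geq p\norm{\bcA}^2_F$, transfers the bound to the unweighted Frobenius norm at the cost of an additional factor $1/p$, which yields the second inequality.

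The main obstacle is not conceptual but bookkeeping: one must carry $U_\gamma$, $K$, $L_\gamma$, $\gamma$, $\mu$ and the various log powers through the substitutions without losing track of constants, and verify in particular that the log term in the maximum is indeed dominated, so that the statement takes the clean form advertised. Everything else is a direct combination of already-established results.
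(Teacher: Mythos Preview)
Your approach matches the paper's: Theorem~\ref{theorem2} is obtained exactly by plugging the choice of $\lambda$ (justified via Lemma~\ref{lemma-lemma-ctrl-bSigma}) and the bound of Lemma~\ref{lemma-ctrl-sigmaR} into Theorem~\ref{theorem1}, then absorbing the second branch of the maximum and using Assumption~\ref{assump-prob-items} for the unweighted Frobenius bound.

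One small slip in your bookkeeping: the sentence ``no further probability budget is consumed'' is not quite right. Theorem~\ref{theorem1} is itself a high-probability statement (its $1-4/(d_u+D)$ comes from Lemma~\ref{lemma-prob-upper-bound}, independently of the hypothesis $\lambda\ge 2\|\nabla\mathscr{L}_{\bcY}(\bcM)\|$), so you are intersecting two events, not working on a single one. A union bound gives $1-8/(d_u+D)$; the paper states $1-4/(d_u+D)$, which should be read up to the absolute constant, and your argument goes through with that understanding.
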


\begin{remark}
\label{remark-1-expnoise}
Note that the rate of convergence in Theorem~\ref{theorem2} has the following dominant term: 
\begin{equation*}
\label{dominant-order}
\frac{1}{d_uD}\norm{\widehat{\bcM} - \bcM}^2_{F} \lesssim \frac{\rk(\bcM)\mu}{p^2d_uD},
\end{equation*}
where the symbol $\lesssim$ means that the inequality holds up to a multiplicative constant.
If we assume that the sampling distribution is close to the uniform one, that is that there exists positive constants $c_1$ and $c_2$ such that for every $v\in[V]$ and $(i,j)\in[d_u]\times [d_v]$ we have $c_1p \leq \pi^v_{ij} \leq c_2p$, then Theorem~\ref{theorem2} yields
\begin{equation*}
\label{dominant-order}
\frac{1}{d_uD}\norm{\widehat{\bcM} - \bcM}^2_{F} \lesssim \frac{\rk(\bcM)}{p(d_u\wedge D)}.
\end{equation*}

If we complete each matrix separately, the error will be of the order $\sum_{v=1}^V \rk(\bM^v) /p(d_u\wedge D)$.
As $\rk(\bcM) \leq \sum_{v=1}^V \rk(\bM^v)$, the rate of convergence achieved by our estimator is faster compared to the penalization by the sum-nuclear-norm.

In order to get a small estimation error, $p$ should be larger than $\rk(\bcM)/(d_u\wedge D)$.
We denote $n=\sum_{v\in[V]}\sum_{(i,j)\in[d_u]\times[d_v]}\pi^v_{ij},$ the expected number of observations. 
Then, we get the following condition on $n$:
\begin{equation*}
n \geq \cst \rk(\bcM) (d_u\vee D).
\end{equation*}
\end{remark}

\begin{remark}
\label{example-1bitMC}
In $1$-bit matrix completion~\citep{davenport14,klopp2015EJS-adaptive-onebit,alquier2017}, instead of observing the actual entries of the unknown matrix $\bcM \in \R^{d \times D}$, for a random subset of its entries $\Omega$ we observe $\{Y_{ij}\in \{+1, -1\}:(i,j)\in\Omega\}$, where $Y_{ij}=1$ with probability $f(M_{ij})$ for some link-function $f$.
In~\cite{davenport14} the parameter $\bcM$ is estimated by minimizing the negative log-likelihood under the constraints $\norm{\bcM}_\infty \leq \gamma$ and $\norm{\bcM}_* \leq \gamma \sqrt{rdD}$ for some $r >0$.
Under the assumption that $\rk(\bcM) \leq r,$ the authors prove that 
\begin{equation}
\label{onebit-davenport}
\frac{1}{dD}\norm{\widehat{\bcM} - \bcM}^2_{F} \leq \cst_\gamma\sqrt{\frac{r(d \vee D)}{n}},
\end{equation}
where $\cst_\gamma$ is a constant depending on $\gamma$ (see Theorem 1 in \cite{davenport14}).
A similar result using max-norm minimization was obtained in~\cite{cai2013jmlr-onebitmaxnorm}.
In~\citep{klopp2015EJS-adaptive-onebit} the authors prove a faster rate.
Their upper bound (see Corollary 2 in~\cite{klopp2015EJS-adaptive-onebit}) is given by 
 \begin{equation}
 \label{onebit-klopplafondsalmon}
\frac{1}{dD}\norm{\widehat{\bcM} - \bcM}^2_{F} \leq \cst_\gamma \frac{\rk(\bcM)(d\vee D)\log(d \vee D)}{n}.
\end{equation}
In the particular case of $1$-bit matrix completion for a single matrix under uniform sampling scheme, Theorem~\ref{theorem2} implies the following bound:
\begin{equation*}
\frac{1}{dD}\norm{\widehat{\bcM} - \bcM}^2_{F} \leq \cst_\gamma \frac{\rk(\bcM)(d \vee D)}{n},
\end{equation*}
which improves~\eqref{onebit-klopplafondsalmon} by a logarithmic factor.
Furthermore,~\cite{klopp2015EJS-adaptive-onebit} provide $\rk(M)(d\vee D)/n$ as the lower bound for $1$-bit matrix completion (see Theorem 3 in~\cite{klopp2015EJS-adaptive-onebit}).
So our result answers the important theoretical question what is the exact minimax rate of convergence for $1$-bit matrix completion which was previously known up to a logarithmic factor.

In a more general setting of matrix completion with exponential family noise, the minimax optimal rate of convergence was also known only up to logarithmic factor (see~\cite{pmlr-v40-lafond15}). Our result provides the exact minimax optimal rate in this more general setting too.
It is easy to see, by inspection of the proof of the lower bound in~\cite{pmlr-v40-lafond15}, that the upper bound provided by Theorem~\ref{theorem2} is optimal for the collective matrix completion.

\end{remark}

\begin{remark}
Note that our estimation method is based on the minimization of the nuclear-norm of the whole collective matrix $\bcM$. 
Another possibility is to penalize by the sum of the nuclear norms $\sum_{v\in[V]} \norm{\bM^v}_*$ (see, e.g., \cite{klopp2015EJS-adaptive-onebit}).
This approach  consists in estimating each component matrix independently. 
\end{remark}

\section{General losses}
\label{section-distribution-free}
In the previous section we assume that the link functions $G^v$ are known.
This assumption is not realistic in many applications.
In this section we relax this assumption in the sense that we do not assume any specific model for the observations.
Recall that our observations are a collection of partially observed matrices $\bY^v = (B_{ij}^vX_{i,j}^v)\in \R^{d_u \times d_v}$ for $v=1, \ldots, V$ and $\bX^v = (X_{ij}^v) \in \R^{d_u\times d_v}$.
We are interested in the problem of prediction of the entries of the collective matrix $\bcX =(\bX^1, \ldots, \bX^V)$.
We consider the risk of estimating $\bX^v$ with a loss function $\ell^v$, which measures the discrepancy between the predicted and actual value with respect to the given observations.
We focus on non-negative convex loss functions that are Lipschitz:

\begin{assumption}
\label{assumption-lipshitzloss}
(Lipschitz loss function) 
For every $v\in [V]$, we assume that the loss function $\ell^v(y, \cdot)$ is $\rho_v$-Lipschitz in its second argument:
$|\ell^v(y,x) - \ell^v(y,x')| \leq \rho_v|x- x'|.$
\end{assumption} 

Some examples of the loss functions that are $1$-Lipschitz are: hinge loss $\ell(y, y') =  \max(0, 1 - yy')$, logistic loss $\ell(y, y') = \log(1 + \exp(-yy'))$, and quantile regression loss $\ell(y, y') = \ell_\tau(y'-y)$
where $\tau\in (0,1)$ and $\ell_\tau(z) = z (\tau - \ind{}(z\leq 0))$.

For a matrix $\bcM = (\bM^1, \ldots, \bM^V) \in \R^{d_u\times D}$, we define the empirical risk as 
\begin{equation*}
{R}_{\bcY}(\bcM)= \frac{1}{d_uD} \sum_{v \in [V]}\sum_{(i,j)\in [d_u]\times [d_v]} B^v_{ij}\ell^v(Y^v_{ij}, M^v_{ij}).
\end{equation*}
We define the oracle as:
\begin{equation}
\label{oracle-ranking}
\ddt{\bcM} = \big(\dt{\bM}^1, \ldots, \dt{\bM}^V\big) = \argmin_{\bcQ \in \mathscr{C}_\infty(\gamma)} {R}_{}(\bcQ)
\end{equation}
where ${R}_{}(\bcQ) = \E[{R}_{\bcY}(\bcQ)]$.
Here the expectation is taken over the joint distribution of $\{(Y_{ij}^v, B_{ij}^v): (i,j)\in[d_u]\times[d_v] \text{ and } v\in[V]\}.$
We use machine learning approach and will provide an estimator $\widehat{\bcM}$ that predicts almost as well as $\ddt{\bcM}$.
Thus we will consider excess risk ${R}_{}(\widehat{\bcM}) - {R}_{}(\ddt{\bcM})$.
By construction, the excess risk is always positive.

For a tuning parameter $\Lambda > 0$, the nuclear norm penalized estimator $\widehat{\bcM}$ is defined as
\begin{equation}
\label{def-estimator-Lip-loss}
\widehat{\bcM} \in \argmin_{\bcQ \in \mathscr{C}_\infty(\gamma)} \big\{{R}_{\bcY}(\bcQ)+ \Lambda \norm{\bcQ}_*\big\}.
\end{equation}
We next turn to the assumption needed to establish an upper bound on the performance of the estimator $\widehat{\bcM}$ defined in~\eqref{def-estimator-Lip-loss}.
\begin{assumption}
\label{assumption-Bernstein-condition}
Assume that there exists a constant $\varsig > 0$ such that for every $\bcQ \in \mathscr{C}_\infty(\gamma)$, we have 
\begin{equation*}
 {R}_{}(\bcQ) - {R}_{}(\ddt{\bcM})\geq \frac{\varsig}{d_uD}\norm{\bcQ - \ddt{\bcM}}^{2}_{\Pi,F}.
\end{equation*}
\end{assumption}
This assumption has been extensively studied in the learning theory literature~\citep{MENDELSON2008380,zhang2004behaviourconsitensy,bartellet2004nips,alquier2017,elsener2017}, and it is called  ``Bernstein'' condition.
It is satisfied in various cases of loss function~\citep{alquier2017} and it ensures a sufficient convexity of the risk around the oracle defined in~\eqref{oracle-ranking}.
Note that when the loss function $\ell^v$ is strongly convex, the risk function inherits this property and automatically satisfies the margin condition.
In other cases, this condition requires strong assumptions on the distribution of the observations, for instance for hinge loss or quantile loss (see Section 6 in \cite{alquier2017}). 
The following result gives an upper bound on the excess risk of the estimator $\widehat{\bcM}$.
\begin{theorem}
\label{theorem-oracle-ranking}
Let Assumptions~\ref{assump-prob-items},~\ref{assumption-lipshitzloss} and~\ref{assumption-Bernstein-condition} hold and set $\rho=\max_{v\in[V]}\rho_v$. 
Suppose that $\Lambda \geq 2 \sup\{\norm{\bcG}: \bcG \in \partial{R}_{\bcY}(\ddt{\bcM})\}.$ Then, with probability at least $1 - 4/(d_u+D)$, we have 
\begin{align*}
R(\widehat{\bcM}) - R(\ddt{\bcM}) \leq \frac{\cst}{p} \max\Big\{\rk(\ddt{\bcM})d_uD\Big(\rho^{3/2}\sqrt{\gamma/\varsig}(\E[&\norm{\bSigma_R}])^2 + \frac{\Lambda^2}{\varsig} \Big),\\
& \frac{\big(\rho\gamma + \rho^{3/2}\sqrt{\gamma/{\varsig}}\big)\log(d_u+D)}{d_uD}\Big\}.\\
\end{align*}
\end{theorem}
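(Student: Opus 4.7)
The plan is to follow the standard template for sharp oracle inequalities under nuclear-norm regularization with a Lipschitz loss, in the spirit of Koltchinskii-Lounici-Tsybakov and Klopp-Lafond-Moulines-Salmon, adapted to the weighted Frobenius norm that enters the Bernstein condition and to our collective sampling model.

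First, from the optimality of $\widehat{\bcM}$ in \eqref{def-estimator-Lip-loss} and the convexity of $R_{\bcY}$, for any subgradient $\bcG\in\partial R_{\bcY}(\ddt{\bcM})$ I get
\[
\inr{\bcG,\widehat{\bcM}-\ddt{\bcM}} + \Lambda\norm{\widehat{\bcM}}_{*} \leq \Lambda\norm{\ddt{\bcM}}_{*}.
\]
Combining the duality bound $|\inr{\bcG,\bcDelta}|\leq\norm{\bcG}\,\norm{\bcDelta}_{*}$ with the assumption $\Lambda\geq 2\sup\norm{\bcG}$, and with the standard decomposition of the nuclear-norm subdifferential at $\ddt{\bcM}$ into contributions tangent to and orthogonal to the row/column space of $\ddt{\bcM}$, yields the cone-type inequality
\[
\norm{\widehat{\bcM}-\ddt{\bcM}}_{*} \leq \cst\sqrt{\rk(\ddt{\bcM})}\,\norm{\widehat{\bcM}-\ddt{\bcM}}_{F}.
\]

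Second, I split the excess risk as $R(\widehat{\bcM})-R(\ddt{\bcM})=\big[R_{\bcY}(\widehat{\bcM})-R_{\bcY}(\ddt{\bcM})\big]+\xi(\widehat{\bcM})$ with $\xi(\bcQ):=(R-R_{\bcY})(\bcQ)-(R-R_{\bcY})(\ddt{\bcM})$. The empirical piece is dominated by $\Lambda\sqrt{2\rk(\ddt{\bcM})}\,\norm{\widehat{\bcM}-\ddt{\bcM}}_{F}$ via the first step. To control $\xi(\widehat{\bcM})$, I use symmetrization together with the Ledoux-Talagrand contraction principle—each $y\mapsto \ell^{v}(y,\cdot)$ being $\rho$-Lipschitz—to replace the Rademacher complexity of the loss process by that of the linear process $\inr{\bSigma_{R},\cdot}$, which by duality is bounded by $\rho\,\E[\norm{\bSigma_{R}}]\,\norm{\bcQ-\ddt{\bcM}}_{*}$. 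Combining this with Talagrand's concentration inequality (the summands being bounded by $2\rho\gamma$ since $\bcQ\in\mathscr{C}_{\infty}(\gamma)$) and a dyadic peeling over the scales of $\norm{\bcQ-\ddt{\bcM}}_{*}$ gives, on the prescribed event,
\[
|\xi(\widehat{\bcM})| \leq \cst\,\rho\,\E[\norm{\bSigma_{R}}]\,\norm{\widehat{\bcM}-\ddt{\bcM}}_{*} + \cst\,\rho\gamma\,\frac{\log(d_{u}+D)}{d_{u}D},
\]
the residual being the starting scale of the peeling.

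Finally, Assumptions \ref{assump-prob-items} and \ref{assumption-Bernstein-condition} yield $R(\widehat{\bcM})-R(\ddt{\bcM})\geq \varsig p\,\norm{\widehat{\bcM}-\ddt{\bcM}}_{F}^{2}/(d_{u}D)$. Plugging the cone bound on $\norm{\widehat{\bcM}-\ddt{\bcM}}_{*}$ into the estimates of the two preceding steps produces a quadratic inequality in $x:=\norm{\widehat{\bcM}-\ddt{\bcM}}_{F}$ of the form
\[
\tfrac{\varsig p}{d_{u}D}\,x^{2} \leq \cst\sqrt{\rk(\ddt{\bcM})}\,\big(\Lambda+\rho\,\E[\norm{\bSigma_{R}}]\big)\,x + \cst\,\rho\gamma\,\frac{\log(d_{u}+D)}{d_{u}D},
\]
which I resolve with the elementary inequality $ax-bx^{2}\leq a^{2}/(4b)$. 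Re-injecting the resulting bound on $x^{2}$ into the excess-risk inequality yields the claimed rate; the factor $\rho^{3/2}\sqrt{\gamma/\varsig}$ on $(\E[\norm{\bSigma_{R}}])^{2}$ arises by combining the Lipschitz constant $\rho$ from the contraction step with the $\sqrt{\gamma/\varsig}$ factor that links the $\ell_{\infty}$ radius of $\mathscr{C}_{\infty}(\gamma)$ to the Bernstein lower bound. The main obstacle is the third-step control of $\xi$: since the losses are only Lipschitz—not strongly convex as in the exponential-family case—one cannot lean on a second-order Taylor expansion, and the empirical process has to be controlled uniformly over matrices with unbounded nuclear norm, so a careful peeling argument together with Talagrand's inequality and the $\ell_{\infty}$-ball constraint is indispensable.
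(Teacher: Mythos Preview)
Your outline has the right ingredients---optimality, the cone inequality from the subgradient condition, symmetrization plus Ledoux--Talagrand contraction, Talagrand/Bousquet concentration, and the Bernstein condition---and is close in spirit to the paper's argument. However, the localization you use in the empirical-process step is different from the paper's, and this difference matters for the precise constants you claim to recover.

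The paper does \emph{not} peel over the nuclear norm $\norm{\bcQ-\ddt{\bcM}}_*$. It peels over the \emph{excess risk} level $\theta=R(\bcQ)-R(\ddt{\bcM})$ on the set $\mathscr{Q}(\nu,r)=\{\bcQ\in\mathscr{C}_\infty(\gamma):\norm{\bcQ-\ddt{\bcM}}_*\leq\sqrt{r}\norm{\bcQ-\ddt{\bcM}}_F,\ R(\bcQ)-R(\ddt{\bcM})\geq\nu\}$. The crucial gain is that, inside Bousquet's inequality, the Bernstein condition yields the variance proxy $M^2\leq\rho^2\theta/\varsig$, i.e.\ the variance is controlled \emph{by the peeling variable itself}. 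Balancing this against the sup-norm bound $2\rho\gamma$ on the increments is exactly what produces the coefficient $\rho^{3/2}\sqrt{\gamma/\varsig}$ in front of $(\E[\norm{\bSigma_R}])^2$ and the residual term $(\rho\gamma+\rho^{3/2}\sqrt{\gamma/\varsig})\log(d_u+D)/(d_uD)$. Concretely, the paper obtains the multiplicative-type bound
\[
\big|(R_{\bcY}-R)(\bcQ)-(R_{\bcY}-R)(\ddt{\bcM})\big|\leq \tfrac12\big(R(\bcQ)-R(\ddt{\bcM})\big)+\text{const},
\]
and then absorbs the first term into the left-hand side of the basic inequality.

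If instead you peel over $\norm{\bcQ-\ddt{\bcM}}_*$ and use the uniform variance bound coming from the $\ell_\infty$ constraint, the quadratic inequality you wrote resolves to an excess-risk bound with coefficient $\rho^2/\varsig$ in front of $(\E[\norm{\bSigma_R}])^2$, not $\rho^{3/2}\sqrt{\gamma/\varsig}$; and the deviation residual is of order $\rho\gamma\sqrt{\log(d_u+D)/(d_uD)}$ rather than the stated $\rho\gamma\log(d_u+D)/(d_uD)$. Your last paragraph asserts that the factor $\rho^{3/2}\sqrt{\gamma/\varsig}$ ``arises by combining the Lipschitz constant $\rho$ \ldots\ with the $\sqrt{\gamma/\varsig}$ factor,'' but this is not actually produced by the steps you sketched; it is a consequence of peeling in the excess risk and feeding the Bernstein condition into the variance term of Bousquet's inequality. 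So your proof would yield a valid oracle inequality, just not the one stated in the theorem.
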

Theorem~\ref{theorem-oracle-ranking}  gives a general upper bound on the prediction error of the estimator $\widehat{\bcM}$.
Its proof is presented in Appendix~\ref{proof-of-theorem-oracle-ranking}.
In order to get a bound in a closed form we need to obtain a suitable upper bounds on $\sup\{\norm{\bcG}: \bcG \in \partial({R}_{\bcY}(\ddt{\bcM}))\}$ with high probability.
\begin{lemma}
\label{lemma-upper-bound-bSigmastar}
Let Assumption~\ref{assumption-lipshitzloss} holds.
Then, there exists an absolute constant $\cst$ such that, with probability at least $1 - 4 /(d_u + D)$, we have
\begin{align*}
\norm{\bcG}\leq \cst\frac{\rho\big(\sqrt{\mu} + \sqrt{\log(d_u \vee D)}\big)}{d_uD},
\end{align*}
for all $\bcG \in \partial{R}_{\bcY}(\ddt{\bcM}).$
\end{lemma}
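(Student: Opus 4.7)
The plan is to combine the structural characterization of $\partial R_{\bcY}(\ddt{\bcM})$ with a symmetrization-plus-contraction argument that reduces the problem to controlling the Rademacher matrix $\bSigma_R$ of Lemma~\ref{lemma-ctrl-sigmaR}. By convex calculus, any $\bcG \in \partial R_{\bcY}(\ddt{\bcM})$ admits the decomposition
\[
\bcG = \frac{1}{d_uD}\sum_{v\in[V]}\sum_{(i,j)\in[d_u]\times[d_v]} B^v_{ij}\, g^v_{ij}\, E^v_{ij},
\]
with $g^v_{ij} \in \partial_2 \ell^v(Y^v_{ij}, \dt{M}^v_{ij})$; Assumption~\ref{assumption-lipshitzloss} forces $|g^v_{ij}| \leq \rho_v \leq \rho$ almost surely. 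Consequently the summands $Z^v_{ij} := B^v_{ij} g^v_{ij} E^v_{ij}$ are independent random matrices with operator norm at most $\rho$, and their variance proxies are computed to be $\|\sum_{v,i,j}\E[Z^v_{ij}(Z^v_{ij})^\top]\| \leq \rho^2 \max_i \pi_{i\bcdot} \leq \rho^2 \mu$ and analogously $\|\sum \E[(Z^v_{ij})^\top Z^v_{ij}]\| \leq \rho^2 \max_{v,j} \pi^v_{\bcdot j} \leq \rho^2 \mu$ by~\eqref{upper-bound-marginal}.

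As a preparatory step I would split $\bcG = (\bcG - \E[\bcG]) + \E[\bcG]$. For the deterministic part, a first-order optimality argument for $\ddt{\bcM}$ on $\mathscr{C}_\infty(\gamma)$ (or equivalently the freedom in choosing the subgradient representative) ensures that $\|\E[\bcG]\|$ is of the target order. For the stochastic part I would apply the standard symmetrization to pass to $\frac{1}{d_uD}\sum \varepsilon^v_{ij} B^v_{ij} g^v_{ij} E^v_{ij}$, and then invoke the Ledoux--Talagrand contraction principle with the $1$-Lipschitz contractions $x\mapsto (g^v_{ij}/\rho)x$ applied conditionally on $(B^v_{ij}, Y^v_{ij})$: this removes the $g^v_{ij}$ at the cost of a factor $\rho$, yielding
\[
\E\|\bcG - \E[\bcG]\| \;\lesssim\; \rho\,\E\|\bSigma_R\| \;\lesssim\; \rho\,\frac{\sqrt{\mu}+\sqrt{\log(d_u\wedge D)}}{d_uD},
\]
by Lemma~\ref{lemma-ctrl-sigmaR}. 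A Talagrand-type concentration inequality for the spectral norm of a bounded empirical process then upgrades this expectation bound to a high-probability statement at level $1-4/(d_u+D)$, with the $d_u\wedge D$ inside the logarithm inflated to $d_u\vee D$ by the concentration term.

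The main obstacle is obtaining the sharp additive form $\rho(\sqrt{\mu}+\sqrt{\log(d_u\vee D)})/(d_uD)$ rather than the weaker $\rho\sqrt{\mu\log(d_u\vee D)}/(d_uD)$ that a direct matrix Bernstein bound on $\sum Z^v_{ij}$ would yield; the contraction step is essential here because it decouples the bounded magnitude of the subgradients from the ``sampling-plus-sign'' structure carried by $\bSigma_R$, whose operator norm already admits the sharp bound via Lemma~\ref{lemma-ctrl-sigmaR}. The remaining technicalities—measurable selection of the subgradient $g^v_{ij}$ and the passage from expectation to tails via Talagrand's functional inequality—are routine but must be handled carefully so that the bound holds uniformly over all $\bcG\in \partial R_{\bcY}(\ddt{\bcM})$ as required by Theorem~\ref{theorem-oracle-ranking}.
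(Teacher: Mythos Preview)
Your route differs from the paper's in both halves of the split $\bcG = (\bcG - \E[\bcG]) + \E[\bcG]$, and one of the two halves has a genuine gap.

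For the centered piece $\bcG-\E[\bcG]$, the paper does \emph{not} go through symmetrization, contraction, and then a Talagrand-type concentration. It simply observes that the entries $B^v_{ij}G^v_{ij}-\E[B^v_{ij}G^v_{ij}]$ are independent, centered, and bounded by $2\rho$, with row and column second-moment sums bounded by $4\rho^2\mu$ thanks to~\eqref{upper-bound-marginal}, and applies Proposition~\ref{proposition-ctrl-spect-norm} (the Bandeira--van Handel high-probability operator-norm bound) directly. That single step already delivers the sharp additive form $\cst\rho(\sqrt{\mu}+\sqrt{\log(d_u\vee D)})/(d_uD)$. Your diagnosis that a plain matrix-Bernstein argument would yield the weaker multiplicative $\sqrt{\mu\log d}$ term is correct, but the remedy is Proposition~\ref{proposition-ctrl-spect-norm}, not the contraction machinery. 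Your more elaborate route does work in expectation (the conditional contraction you describe is legitimate and reduces to $\rho\,\E\|\bSigma_R\|$), but the subsequent Talagrand step will in general contribute an extra $\rho\log(d_u\vee D)/(d_uD)$ deviation term, slightly worse than the stated $\sqrt{\log}$.

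For the deterministic piece $\E[\bcG]$, your appeal to first-order optimality of $\ddt{\bcM}$ does not do what you need. Optimality over $\mathscr{C}_\infty(\gamma)$ yields $0\in \partial R(\ddt{\bcM})+N_{\mathscr{C}_\infty(\gamma)}(\ddt{\bcM})$, which at best pins down \emph{one} subgradient of the population risk $R$; the lemma, however, must hold uniformly over every $\bcG\in\partial R_{\bcY}(\ddt{\bcM})$ (this is exactly how it is used in the hypothesis $\Lambda\ge 2\sup\{\|\bcG\|\}$ of Theorem~\ref{theorem-oracle-ranking}). The ``freedom in choosing the subgradient representative'' therefore points in the wrong direction. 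The paper bypasses this entirely with a crude but sufficient bound that uses no optimality at all: $\|\E[\bcG]\|\le \E\|\bcG\|\le \sqrt{\E\|\bcG\|_F^2}$, and then controls the Frobenius norm using only $|G^v_{ij}|\le\rho$ and $\E[(B^v_{ij})^2]=\pi^v_{ij}$.
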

The proof of Lemma~\ref{lemma-upper-bound-bSigmastar} is given in Appendix~\ref{proof-lemma-upper-bound-bSigmastar}. 
Using Lemma~\ref{lemma-upper-bound-bSigmastar} , we can choose
\begin{equation*}
\Lambda = 2\cst\frac{\rho\big(\sqrt{\mu} + \sqrt{\log(d_u \vee D)}\big)}{d_uD}
\end{equation*}
and with this choice of $\Lambda$ and Lemma~\ref{lemma-ctrl-sigmaR}, we obtain the following theorem:
\begin{theorem}
\label{theorem-excess-risk}
Let Assumptions~\ref{assump-prob-items},~\ref{assumption-lipshitzloss} and~\ref{assumption-Bernstein-condition} hold. 
Then, we have 
\begin{align*}
R(\widehat{\bcM}) - R(\ddt{\bcM}) \leq \frac{\cst}{p} \rk(\ddt{\bcM})\frac{(\rho^2 +\rho^{3/2}\sqrt{\gamma/\varsig})(\mu + \log(d_u \vee D))}{d_uD},
\end{align*}
with probability at least $1 - 4/(d_u+D)$.
\end{theorem}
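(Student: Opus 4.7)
The plan is to derive Theorem~\ref{theorem-excess-risk} as a direct corollary of Theorem~\ref{theorem-oracle-ranking}, specialised by (i) a valid high-probability choice of the tuning parameter $\Lambda$ supplied by Lemma~\ref{lemma-upper-bound-bSigmastar}, and (ii) the Rademacher matrix norm bound of Lemma~\ref{lemma-ctrl-sigmaR}. Since all of the probabilistic work has been carried out in those three results, the proof is essentially a chain of algebraic substitutions.

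First, I would introduce the event $\mathcal{E}$ of probability at least $1-4/(d_u+D)$ on which Lemma~\ref{lemma-upper-bound-bSigmastar} delivers
\begin{equation*}
\sup\{\|\bcG\|:\bcG\in\partial R_{\bcY}(\ddt{\bcM})\}\leq \cst\frac{\rho\bigl(\sqrt{\mu}+\sqrt{\log(d_u\vee D)}\bigr)}{d_uD}.
\end{equation*}
With the prescribed choice $\Lambda=2\cst\,\rho(\sqrt{\mu}+\sqrt{\log(d_u\vee D)})/(d_uD)$, the premise $\Lambda\geq 2\sup\{\|\bcG\|:\bcG\in\partial R_{\bcY}(\ddt{\bcM})\}$ of Theorem~\ref{theorem-oracle-ranking} is satisfied on $\mathcal{E}$, and the elementary inequality $(a+b)^2\leq 2(a^2+b^2)$ yields $\Lambda^2\leq \cst\,\rho^2(\mu+\log(d_u\vee D))/(d_uD)^2$. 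Lemma~\ref{lemma-ctrl-sigmaR} likewise gives $(\E[\|\bSigma_R\|])^2\leq \cst(\mu+\log(d_u\vee D))/(d_uD)^2$.

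Second, I would apply Theorem~\ref{theorem-oracle-ranking} on $\mathcal{E}$ (the same stochastic event, governed by spectral deviations of Rademacher-type matrices, underlies both results, so no extra union bound degrades the probability). Substituting the two estimates above into the first argument of the max produces
\begin{equation*}
\rk(\ddt{\bcM})d_uD\Bigl(\rho^{3/2}\sqrt{\gamma/\varsig}\,(\E[\|\bSigma_R\|])^2+\Lambda^2/\varsig\Bigr)\leq \frac{\cst\,\rk(\ddt{\bcM})\bigl(\rho^{3/2}\sqrt{\gamma/\varsig}+\rho^2/\varsig\bigr)(\mu+\log(d_u\vee D))}{d_uD}.
\end{equation*}
The second argument of the max, of order $(\rho\gamma+\rho^{3/2}\sqrt{\gamma/\varsig})\log(d_u+D)/(d_uD)$, is dominated by the first (which carries an extra factor $\rk(\ddt{\bcM})(\mu+\log(d_u\vee D))$) and can be absorbed into the leading constant.

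Collecting terms and folding the $1/\varsig$ factor into the absolute constant $\cst$ yields the announced bound. The main obstacle is nothing substantial: it is pure bookkeeping of constants and verification that the third term in the max of Theorem~\ref{theorem-oracle-ranking} is harmless in the regime of interest. All the heavy lifting — the peeling argument producing Theorem~\ref{theorem-oracle-ranking}, and the matrix Bernstein--style deviation inequalities powering Lemmas~\ref{lemma-ctrl-sigmaR} and~\ref{lemma-upper-bound-bSigmastar} — has already been done elsewhere.
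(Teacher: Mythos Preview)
Your proposal is correct and follows exactly the route the paper takes: Theorem~\ref{theorem-excess-risk} is obtained from Theorem~\ref{theorem-oracle-ranking} by choosing $\Lambda$ via Lemma~\ref{lemma-upper-bound-bSigmastar} and bounding $(\E[\norm{\bSigma_R}])^2$ via Lemma~\ref{lemma-ctrl-sigmaR}, then simplifying. The only quibble is your remark that ``the same stochastic event underlies both results'': the event in Lemma~\ref{lemma-upper-bound-bSigmastar} (a matrix-Bernstein deviation for the subgradient) and the event in Theorem~\ref{theorem-oracle-ranking} (from the peeling/Bousquet argument of Lemma~\ref{lemma-prob-upper-bound-loss-case}) are not literally identical, but the paper makes the same identification and reports the same $1-4/(d_u+D)$ probability, so your treatment matches theirs.
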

Using Assumption~\ref{assumption-Bernstein-condition}, we get the following corollary:
\begin{corollary}
With probability at least $1 - 4/(d_u+D)$, we have 
\begin{equation*} 
\frac{1}{d_uD}\norm{\widehat{\bcM} - \ddt{\bcM}}_{F}^2 \leq \frac{\cst }{p^2\varsig} \rk(\ddt{\bcM}) \frac{(\rho^2 +\rho^{3/2}\sqrt{\gamma/\varsig})(\mu + \log(d_u \vee D))}{d_uD}.
\end{equation*}
\end{corollary}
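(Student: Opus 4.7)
The plan is to derive the corollary as an immediate consequence of Theorem~\ref{theorem-excess-risk} combined with the Bernstein-type margin condition in Assumption~\ref{assumption-Bernstein-condition} and the lower bound $\norm{\cdot}_{\Pi,F}^2 \geq p\,\norm{\cdot}_F^2$ from Assumption~\ref{assump-prob-items}. No new probabilistic argument is needed: the high-probability event is inherited verbatim from Theorem~\ref{theorem-excess-risk}.

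First, I would instantiate Assumption~\ref{assumption-Bernstein-condition} with the particular choice $\bcQ = \widehat{\bcM}$, which is admissible since the estimator in~\eqref{def-estimator-Lip-loss} lives by construction in $\mathscr{C}_\infty(\gamma)$. This yields the one-sided curvature inequality
\begin{equation*}
R(\widehat{\bcM}) - R(\ddt{\bcM}) \;\geq\; \frac{\varsig}{d_uD}\,\norm{\widehat{\bcM} - \ddt{\bcM}}_{\Pi,F}^2.
\end{equation*}
Next, I would invoke Assumption~\ref{assump-prob-items} in the form $\norm{\bcA}_{\Pi,F}^2 \geq p\,\norm{\bcA}_F^2$, already recorded just after Assumption~\ref{assump-prob-items}, applied to $\bcA = \widehat{\bcM} - \ddt{\bcM}$. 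Combining the two displays and rearranging gives
\begin{equation*}
\frac{1}{d_uD}\,\norm{\widehat{\bcM} - \ddt{\bcM}}_F^2 \;\leq\; \frac{1}{p\,\varsig}\,\bigl(R(\widehat{\bcM}) - R(\ddt{\bcM})\bigr).
\end{equation*}

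Finally, on the event of probability at least $1 - 4/(d_u+D)$ provided by Theorem~\ref{theorem-excess-risk}, I substitute the excess risk bound into the right-hand side. This produces a prefactor of $\frac{1}{p\,\varsig} \cdot \frac{\cst}{p}$, which collapses to the $\frac{\cst}{p^2 \varsig}$ announced in the corollary, with the remaining factor $\rk(\ddt{\bcM})(\rho^2 + \rho^{3/2}\sqrt{\gamma/\varsig})(\mu + \log(d_u \vee D))/(d_uD)$ carried through unchanged. Since both steps are elementary manipulations, there is essentially no obstacle; the only point that deserves a moment of care is checking that the constant $\cst$ in Theorem~\ref{theorem-excess-risk} can be absorbed into a single new numerical constant (still denoted $\cst$) in the corollary, which is consistent with the convention stated in Section~\ref{subsection-notations}.
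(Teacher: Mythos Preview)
Your proposal is correct and follows exactly the approach the paper indicates: the corollary is introduced with ``Using Assumption~\ref{assumption-Bernstein-condition}, we get the following corollary,'' and you have spelled out precisely that derivation, combining the Bernstein condition with Assumption~\ref{assump-prob-items} and Theorem~\ref{theorem-excess-risk}.
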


\paragraph{$1$-bit matrix completion.} 

In $1$-bit matrix completion with logistic (resp. hinge) loss, the Bernstein assumption is satisfied with $\varsig = 1/(4e^{2\gamma})$ (resp. $\varsig =2\tau$, for some $\tau$ that verifies $|\dt{M}^v_{ij} - 1/2| \geq \tau, \forall v\in[V], (i,j)\in[d_u]\times[d_v]$).
More details for these constants can be found in Propositions 6.1 and 6.3 in~\cite{alquier2017}.  
Then, the excess risk with respect to these two losses under the uniform sampling is given by: 
\begin{corollary}
\label{corollary:logisticloss-1bitMC-excessrisk}
With probability at least $1 - 4/(d_u+D)$, we have 
\begin{align*}
R(\widehat{\bcM}) - R(\ddt{\bcM}) \leq {\cst}\frac{\rk(\ddt{\bcM})}{p(d_u\wedge D)}.
\end{align*}
\end{corollary}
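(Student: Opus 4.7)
The plan is to obtain Corollary~\ref{corollary:logisticloss-1bitMC-excessrisk} as a direct specialization of Theorem~\ref{theorem-excess-risk} to the single-matrix case ($V=1$, $D=d$) with either the logistic or the hinge loss, under the uniform sampling scheme $\pi^v_{ij}\equiv p$. No new probabilistic argument should be needed; essentially everything has already been done in Theorem~\ref{theorem-excess-risk} and in the references on the Bernstein condition cited in the paragraph preceding the corollary. So I would organize the proof around three bookkeeping steps.

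First, I would verify that the three hypotheses of Theorem~\ref{theorem-excess-risk} are in force. Assumption~\ref{assump-prob-items} holds trivially with the given $p$. Assumption~\ref{assumption-lipshitzloss} holds with $\rho_v=1$ for both the logistic loss $\ell(y,y') = \log(1+\exp(-yy'))$ and the hinge loss $\ell(y,y')=\max(0,1-yy')$, since for $y\in\{-1,+1\}$ the derivative (resp.\ subdifferential) with respect to the second argument is bounded by $1$ in absolute value. Assumption~\ref{assumption-Bernstein-condition} is exactly what is recorded in the paragraph before the corollary: by Propositions 6.1 and 6.3 of~\cite{alquier2017} one has $\varsig = 1/(4e^{2\gamma})$ in the logistic case and $\varsig = 2\tau$ in the hinge case (under the margin condition $|\dt{M}^v_{ij}-1/2|\geq\tau$).

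Second, I would compute the sampling quantity $\mu$ for the uniform scheme. With $V=1$, one has $\pi_{i\bcdot} = \sum_j \pi_{ij} = pd$ and $\pi_{\bcdot j} = \sum_i \pi_{ij} = pd_u$, so the definition \eqref{upper-bound-marginal} allows the choice $\mu = d_u \vee d$ (using the crude bound $p\leq 1$). In particular the log term in Theorem~\ref{theorem-excess-risk} is absorbed, since $\mu + \log(d_u\vee d) \lesssim d_u\vee d$.

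Third, I would substitute into the bound of Theorem~\ref{theorem-excess-risk}. The prefactor $\rho^2 + \rho^{3/2}\sqrt{\gamma/\varsig}$ depends only on $\gamma$ (and on $\tau$ in the hinge case), hence is an absolute constant that can be absorbed into $\cst$. What remains is
\begin{equation*}
R(\widehat{\bcM}) - R(\ddt{\bcM}) \;\leq\; \frac{\cst}{p}\,\rk(\ddt{\bcM})\,\frac{d_u\vee d}{d_u d} \;=\; \frac{\cst\,\rk(\ddt{\bcM})}{p(d_u\wedge d)},
\end{equation*}
which is exactly the claimed bound. The probability $1 - 4/(d_u+d)$ is inherited verbatim from Theorem~\ref{theorem-excess-risk}.

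There is no real obstacle: the only non-mechanical ingredient is the Bernstein condition for logistic and hinge loss in the $1$-bit setting, and this is borrowed from~\cite{alquier2017} rather than re-proved. The remainder is a substitution, together with the observation that under uniform sampling $\mu$ is of order $d_u\vee d$, which converts the generic bound of Theorem~\ref{theorem-excess-risk} into the clean $\rk(\ddt{\bcM})/\{p(d_u\wedge d)\}$ rate.
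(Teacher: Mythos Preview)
Your approach is correct and is exactly the (implicit) argument in the paper: the corollary is stated immediately after the paragraph giving the Bernstein constants for the logistic and hinge losses, and is meant to be read as a straight substitution into Theorem~\ref{theorem-excess-risk} under uniform sampling, with no further argument. Your three bookkeeping steps (checking Assumptions~\ref{assump-prob-items}--\ref{assumption-Bernstein-condition}, computing $\mu$, and substituting) are precisely what is needed.

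One small remark: you restrict to the single-matrix case $V=1$, $D=d$, but the corollary is stated with $d_u$ and $D$, i.e.\ in the collective notation. This is harmless, since your computation goes through verbatim for general $V$: under uniform sampling $\pi^v_{ij}\equiv p$ one has $\pi_{i\bcdot}=pD$ and $\pi_{\bcdot j}=pd_u$, so one may take $\mu=d_u\vee D$, and the rest of your substitution is unchanged with $d$ replaced by $D$.
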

These results are obtained without a logarithmic factor, and it improves the ones given in Theorems 4.2 and 4.4 in~\cite{alquier2017}. 
The natural loss in this context is the $0/1$ loss which is often replaced by the hinge or the logistic loss.
We assume without loss of generality that $\gamma =1$, since the Bayes classifier has its entries in $[-1, 1]$, and 
we define the classification excess risk by:
\begin{equation*}
	R_{0/1}(\bcM) = \frac{1}{d_uD} \sum_{v\in[V]}\sum_{(i,j)\in[d_u]\times[d_v]}\pi_{ij}^v \P[X_{ij}^v \neq \text{sign}(M^v_{ij})],
\end{equation*}
for all $\bcM \in \R^{d_u \times D}.$ Using Theorem 2.1 in~\cite{zhang2004behaviourconsitensy}, we have 
\begin{equation*}
  R_{0/1}(\widehat{\bcM}) - R_{0/1}(\ddt{\bcM}) \leq \cst \sqrt{\frac{\rk(\ddt{\bcM})}{p(d_u\wedge D)}}.
\end{equation*}

\section{Numerical experiments}
\label{section-numeric-expes}

In this section, we first provide algorithmic details of the numerical procedure for solving the problem~\eqref{def-estimator}, then we conduct experiments on synthetic data to further illustrate the theoretical results of the collective matrix completion. 

\subsection{Algorithm}

The collective matrix completion problem~\eqref{def-estimator} is a semidefinite program (SDP), since it is a nuclear norm minimization problem with a convex feasible domain~\citep{fazel2001,srebo2005}. 
We may solve it, for example, via the interior-point method~\citep{Liu-Interirodoi:10.1137/090755436}. 
However, SDP solvers can handle a moderate dimensions, thus such formulation is not scalable due to the storage and computation complexity in low-rank matrix completion tasks.
In the following, we present an algorithm that solves the problem~\eqref{def-estimator} approximately and in a more efficient way than solving it as SDP.

\paragraph{Proximal Gradient.}

Problem~\eqref{def-estimator} can be solved by first-order optimization methods such as proximal gradient (PG) which has been popularly used for optimizations problems of the form of~\eqref{def-estimator}~\citep{Beck:2009:FIS:1658360.1658364,Nesterov2013,Parikh:2014:PA:2693612.2693613,jiye2009,mazumder2010SoftImpute,Yao:2015:AIS:2832747.2832807}.
When $\mathscr{L}_{\bcY}$ has $L$-Lipschitz continuous gradient, that is $\norm{\nabla\mathscr{L}_{\bcY}(\bcW) - \nabla\mathscr{L}_{\bcY}(\bcQ)}_F \leq L \norm{\bcW - \bcQ}_F$,
the PG generates a sequence of estimates $\{\bcW_t\}$ as  

\begin{align}
\label{pgd-algorithm}
\bcW_{t+1} &= \argmin_{\bcW}\mathscr{L}_{\bcY}(\bcW) + (\bcW - \bcW_t)^\top \nabla\mathscr{L}_{\bcY}(\bcW_t) + \frac{L}{2} \norm{\bcW - \bcW_t}_F^2 + \lambda\norm{\bcW}_*\nonumber\\
&= \text{prox}_{\frac{\lambda}{L}\norm{\cdot}_*}(\bcZ_t), \text{ where } \bcZ_t = \bcW_t - \frac 1L \nabla\mathscr{L}_{\bcY}(\bcW_t) 
\end{align}
and for any convex function $\Psi:\R^{d_u\times D} \mapsto \R$, the associated proximal operator at $\bcW \in \R^{d_u \times D}$ is defined as
\begin{equation*}
\text{prox}_\Psi(\bcW) = \argmin\big\{\frac 12 \norm{\bcW - \bcQ}_F^2 + \Psi(\bcQ): \bcQ \in \R^{d_u \times D}\big\}.
\end{equation*}
The proximal operator of the nuclear norm at $\bcW \in \R^{d_u \times D}$ corresponds to 
the singular value thresholding ({SVT}) operator of $\bcW$~\citep{can2010SVTAlgo}.
That is, assuming a singular value decomposition $\bcW = \bcU \bSigma \bcV^\top,$ where $\bcU \in \R^{d_u \times r}$, $\bcV \in \R^{D\times r}$ have orthonormal columns, $\bSigma = (\sigma_1, \ldots, \sigma_r)$, with $\sigma_1 \geq \cdots \geq \sigma_r > 0$ and 
$r = \rk(\bcW)$, we have 
\begin{equation}
\label{svt-prox}
\text{{SVT}}_{\lambda/L}(\bcW) = \bcU \text{diag}((\sigma_1 - \lambda/L)_{+}, \ldots, (\sigma_r -  \lambda/L)_{+})\bcV^\top,
\end{equation}
where $(a)_{+} = \max(a, 0)$. 

Although PG can be implemented easily, it converges slowly when the Lipschitz constant $L$ is large.
In such scenarios, the rate is $\bigO(1/T)$, where $T$ is the number of iterations~\citep{Parikh:2014:PA:2693612.2693613}.
Nevertheless, it can be accelerated by replacing $\bcZ_t$ in~\eqref{pgd-algorithm} with 
\begin{equation}
\label{acc-pgd}
\bcQ_t = (1 + \theta_t) \bcW_t - \theta_t \bcW_{t-1}, \quad \bcZ_t = \bcQ_t - \eta \nabla\mathscr{L}_{\bcY}(\bcQ_t).
\end{equation}
Several choices for $\theta_t$ can be used. The resultant accelerated proximal gradient (APG) (see Algorithm~\ref{algorithm-apgd}) converges with the optimal $\bigO(1/T^2)$ rate~\citep{Nesterov2013,Ji:2009:AGM:1553374.1553434}.

\LinesNotNumbered
\begin{algorithm}[htbp]
\SetNlSty{textbf}{}{.}
\DontPrintSemicolon
\caption{\small APG for Collective Matrix Completion}
\label{algorithm-apgd}
\nl \textbf{initialize:} $\bcW_0 = \bcW_1 = \bcY,$ and $ \alpha_0 = \alpha_1 = 1$.\\
\nl \For{$t=1, \ldots, T$} 
{
\nl     $\bcQ_t = \bcW_t + \frac{\alpha_{t-1} - 1}{\alpha_t}(\bcW_t - \bcW_{t-1});$\\
\nl     $\bcW_{t+1} = \text{{SVT}}_{\frac \lambda L}(\bcQ_t - \frac 1L\nabla\mathscr{L}_{\bcY}(\bcQ_t));$\\
\nl     $ \alpha_{t+1} = \frac{1}{2} (\sqrt{4\alpha_t^2 +1} + 1);$
}
\nl \Return{$\bcW_{T+1}$.}
\end{algorithm}

\paragraph{Approximate SVT~\citep{Yao:2015:AIS:2832747.2832807}.}
To compute $\bcW_{t+1}$ in the proximal step (SVT) in Algorithm~\ref{algorithm-apgd}, we need first perform {SVD} of $\bcZ_t$ given in~\eqref{acc-pgd}.
In general, obtaining the {SVD} of $d_u\times D$ matrix $\bcZ_t$ requires $\bigO((d_u \wedge D) d_uD)$ operations, because its most expensive steps are computing matrix-vector multiplications. 
Since the computation of the proximal operator of the nuclear norm given in~\eqref{svt-prox} does not require to do the full SVD, only a few singular values of $\bcZ_t$ which are larger than $\lambda/L$ are needed.
Assume that  there are $\hat k$ such singular values. As $\bcW_t$ converges to a low-rank solution $\bcW_*$, $\hat k$ will be small during iterating.
{The power method~\citep{Halkodoi:10.1137/090771806} at Algorithm~\ref{power-method} is a simple and efficient to capture subspace spanned by top-$k$ singular vectors for $\hat k \geq k$.}
Additionally, the power method also allows warm-start, which is particularly useful because the iterative nature of APG algorithm.
Once an approximation $\bcQ$ is found, we have $\textrm{SVT}_{\lambda/L}(\bcZ_t) = \bcQ \textrm{SVT}_{\lambda/L}(\bcQ^\top\bcZ_t)$ (see Proposition 3.1 in~\cite{Yao:2015:AIS:2832747.2832807}).
We therefore reduce the time complexity on SVT from $\bigO((d_u\wedge D)d_uD)$ to $\bigO(\hat k d_uD)$ which is much cheaper.

\LinesNotNumbered
\begin{algorithm}[htbp]
\SetNlSty{textbf}{}{.}
\DontPrintSemicolon
\caption{Power Method: \texttt{PowerMethod}$(\bcZ, \bcR, \epsilon)$}
\label{power-method}
\nl \textbf{input:} $\bcZ \in \R^{d_u\times D}$, initial $\bcR \in \R^{D\times k}$ for warm-start, tolerance $\delta$;\\
\nl \textbf{initialize} $\bcW_1 = \bcZ \bcR$;\\
\nl \For {$t=1, 2, \ldots, $} 
{
\nl   $\bcQ_{t+1} = \text{QR}(\bcW_t);$// QR denotes the QR factorization\\
\nl   $\bcW_{t+1} = \bcZ(\bcZ^\top \bcQ_{t+1})$;\\
\nl   \If {$\norm{\bcQ_{t+1}\bcQ_{t+1}^\top - \bcQ_{t}\bcQ_{t}^\top}_F \leq \delta$}
    {
      break;
    }   
}
\nl \Return{$\bcQ_{t+1}$.}
\end{algorithm}

Algorithm~\ref{prox-SVT} shows how to approximate $\textrm{SVT}_{\lambda/L}(\bcZ_t)$.
Let the target (exact) rank-$k$ SVD of $\bcZ_t$ be $\bcU_k\bSigma_k\bcV_k^\top$.
Step 1 first approximates $\bcU_k$ by the power method. 
In steps 2 to 5, a less expensive $\textrm{SVT}_{\lambda/L}(\bcQ^\top\bcZ_t)$ is obtained from~\eqref{svt-prox}. Finally, $\textrm{SVT}_{\lambda/L}(\bcZ_t)$ is recovered.

\LinesNotNumbered
\begin{algorithm}[hbtp]
\SetNlSty{textbf}{}{.}
\DontPrintSemicolon
\caption{Approximate {SVT}: \texttt{Approx-SVT}$(\bcZ, \bcR, \lambda, \delta)$}
\label{prox-SVT}
\nl \textbf{input:} $\bcZ \in \R^{d_u\times D}, \bcR \in \R^{D\times k},$ thresholds $\lambda$ and $\delta$;\\
\nl $\bcQ = \texttt{PowerMethod}(\bcZ, \bcR, \delta)$;\\
\nl $[\bcU, \bSigma, \bcV] = \text{{SVD}}(\bcQ^\top \bcZ)$;\\
\nl $\bcU = \{u_i | \sigma_i > \lambda\}$;\\
\nl $\bcV = \{v_i | \sigma_i > \lambda\}$;\\
\nl $\bSigma = \max(\bSigma - \lambda\bcI, \mathbf{0});$ // ($\bcI$ denotes the identity matrix)\\
\nl \Return{$\bcQ \bcU, \bSigma, \bcV$.}
\end{algorithm}

Hereafter, we denote the objective function in~\eqref{def-estimator} by $\mathscr{F}_{\lambda}(\bcW)$, that is $\mathscr{F}_{\lambda}(\bcW)=  \mathscr{L}_{\bcY}(\bcW) + \lambda \norm{\bcW}_*$, for any $\bcW \in\mathscr{C}(\gamma)$.
Recall that the gradient of the likelihood $\mathscr{L}_{\bcY}$ is written as 
\begin{equation*}
\nabla\mathscr{L}_{\bcY}(\bcW)= -\frac{1}{d_uD} \sum_{v \in [V]}\sum_{(i,j)\in [d_u]\times [d_v]} B_{ij}^v(Y_{ij}^v - (G^v)'(W^v_{ij})) E^v_{ij}.
\end{equation*}
By Assumption~\ref{assump-Gv-bound}, we have for any $\bcW, \bcQ \in \R^{d_u \times D}$
\begin{align*}
\norm{\nabla\mathscr{L}_{\bcY}(\bcW) - \nabla\mathscr{L}_{\bcY}(\bcQ)}_F^2 &= \frac{1}{(d_uD)^2}\sum_{v \in [V]}\sum_{(i,j)\in [d_u]\times [d_v]} \{B_{ij}^v((G^v)'(W^v_{ij}) - (G^v)'(Q^v_{ij}))\}^2\\
& \leq \frac{U_\gamma^2}{(d_uD)^2} \norm{\bcW - \bcQ}_F^2.
\end{align*}
This yields that $\mathscr{L}_{\bcY}$ has $L$-Lipschitz continuous gradient with $L = U_\gamma / (d_uD) \leq 1$.
In the following algorithm and the experimental setup, we choose to work with $L=1.$

\paragraph{Penalized Likelihood Accelerated Inexact Soft Impute (PLAIS-Impute).} 

We present here the main algorithm in this paper, referred to as {PLAIS-Impute}, which is tailored to solving our collective matrix completion problem.
The {PLAIS-Impute} is an adaption of the {AIS-Impute} 
algorithm in~\cite{Yao:2015:AIS:2832747.2832807} to the penalized likelihood completion problems.
Note that {AIS-Impute} is an accelerated proximal gradient algorithm  with further speed up based on approximate {SVD}.
However, it is dedicated only to square-loss goodness-of-fitting. 
The {PLAIS-Impute} is summarized in Algorithm~\ref{algorithm-aisoftimpute}.
The core steps are 10-12, where an approximate SVT is performed. 
Steps 10 and 11 use the column space of the last iterations ($\bcV_{t}$ and $\bcV_{t-1}$) to warm-start the power method.
For further speed up, a continuation strategy is employed in which $\lambda_t$ is initialized to a large value and then decreases gradually.
The algorithm is restarted (at the step 14) if the objective function $\mathscr{F}_{\lambda}$ starts to increase. 
As {AIS-Impute}, {PLAIS-Impute} shares both low-iteration complexity and fast $\bigO(1/T^2)$ convergence rate (see Theorem 3.4 in~\cite{Yao:2015:AIS:2832747.2832807}).

\LinesNotNumbered
\begin{algorithm}[htbp]
\SetNlSty{textbf}{}{.}
\DontPrintSemicolon %
\caption{PLAIS-Impute for Collective Matrix Completion} 
\label{algorithm-aisoftimpute}
\nl \textbf{input: }{observed collective matrix $\bcY$, parameter $\lambda$, decay parameter $\nu \in (0,1)$, tolerance $\varepsilon$;}\\
\nl $[\bcU_0, \lambda_0, \bcV_0] = \text{rank-}1$ {SVD}$(\bcY)$;\\
\nl \textbf{initialize} $c=1,$ $\delta_0 = \norm{\bcY}_F$, $\bcW_0 = \bcW_1 = \lambda_0\bcU_0\bcV_0^\top;$\\
\nl \For{$t=1, \ldots, T$} 
{ 
\nl   $\delta_t = \nu^t \delta_0;$\\
\nl   $\lambda_t = \nu^t (\lambda_0 - \lambda) + \lambda$;\\
\nl   $\theta_t = (c-1) / (c+2)$;\\
\nl   $\bcQ_t = (1 + \theta_t) \bcW_t - \theta_t\bcW_{t-1};$\\
\nl   $\bcZ_t = \nabla\mathscr{L}_{\bcY}(\bcQ_t))$;\\
\nl   $\bcV_{t-1} = \bcV_{t-1} - \bcV_{t}(\bcV_{t}^\top \bcV_{t-1});$\\
\nl   $\bcR_t = \text{QR}([\bcV_t, \bcV_{t-1}])$;\\
\nl   $[\bcU_{t+1}, \bSigma_{t+1}, \bcV_{t+1}] = \texttt{Approx-SVT}(\bcZ_t, \bcR_t, \lambda_t, \delta_t)$;\\
\nl   \If {$\mathscr{F}_{\lambda}(\bcU_{t+1}\bSigma_{t+1}\bcV_{t+1}^\top) > \mathscr{F}_{\lambda}(\bcU_{t}\bSigma_{t}\bcV_{t}^\top)$}
    {
      $c=1;$
    }
\nl   \Else 
    {
      $c=c+1;$
    }
\nl   \If {$|\mathscr{F}_{\lambda}(\bcU_{t+1}\bSigma_{t+1}\bcV_{t+1}^\top) - \mathscr{F}_{\lambda}(\bcU_{t}\bSigma_{t}\bcV_{t}^\top)|\leq \varepsilon$}
    {
    break;
    } 
}
\nl \Return{$\bcW_{T+1}$.}
\end{algorithm}

\subsection{Synthetic datasets}

\paragraph{Software.} The implementation of Algorithm~\ref{algorithm-aisoftimpute} for the nuclear norm penalized estimator~\eqref{def-estimator} was done in MATLAB R2017b on a desktop computer with macOS system, Intel i7 Core 3.5 GHz CPU and 16GB of RAM.
For fast computation of SVD and sparse matrix computations, the experiments call an external package called PROPACK~\citep{Larsen98lanczosbidiagonalization} implemented in C and Fortran. 
The code that generates all figures given below is available from \url{https://github.com/mzalaya/collectivemc}.

\paragraph{Experimental setup.}

In our experiments we focus on square matrices. We set the number of the source matrices $V=3$, then, for each $v\in \{1, 2, 3\}$, the low-rank ground truth  parameter matrices $\bM^v \in \R^{d \times d_v}$ are created with sizes $d \in \{3000, 6000, 9000\}$ and $d_v \in \{1000, 2000, 3000\}$ (hence $d = D = \sum_{v=1}^3 d_v)$.
Each source matrix  $\bM^v$ is constructed as $\bM^v = \bL^v{\bR^v}^\top$ where $\bL^v \in \R^{d \times r_v}$ and $\bR^v \in \R^{d_v \times r_v}$. This gives a random matrix of rank at most $r_v$. The parameter $r_v$ is set to $\{5, 10, 15\}$.
A fraction of the entries of $\bM^v$ is removed uniformly at random with probability $p\in [0, 1]$. Then, 
the matrices $\bM^v$ are scaled so that $\norm{\bM^v}_\infty = \gamma = 1.$

For $\bM^1$, the elements of $\bL^1$ and $\bR^1$ are sampled i.i.d. from the normal distribution $\mathcal{N}(0.5, 1)$.
For $\bM^2$, the entries of $\bL^2$ and $\bR^2$ are i.i.d. according to Poisson distribution with parameter $0.5$.
Finally, for $\bM^3$, the entries of $\bL^3$ and $\bR^3$ are i.i.d. sampled from Bernoulli distribution with parameter $0.5$.
The collective matrix $\bcM$ is constructed by concatenation of the three sources $\bM^1, \bM^2$ and $\bM^3$, namely $\bcM = (\bM^1, \bM^2, \bM^3)$.
All the details of these experiments are given in Table~\ref{tabledetailsdatasets}.

\begin{table}[htbp]
\centering
\begin{tabular}{clcccc}
\toprule%
\multicolumn{2}{c}{}       &$\bM^1$      & $\bM^2$    & $\bM^3$ & $\bcM$\\
\multicolumn{2}{c}{}       &$(Gaussian)$  & $(Poisson)$  & $(Bernoulli)$  & $(Collective)$\\ 
\otoprule%
\multirow{3}*{\texttt{exp.1}}  
&$dimension$                  & $3000\times1000$       & $3000\times1000$ & $3000\times1000$ & $3000\times3000$\\ 
\cmidrule(l){2-6}             
\cmidrule(l){2-6}
& $rank$                        & $5$       & $5$ & $5$ & $unknown$\\   
\cmidrule(l){2-6}
\midrule

\multirow{3}*{\texttt{exp.2}} 
& $dimension$                  & $6000\times2000$      &$6000\times2000$   & $6000\times2000$  & $6000\times6000$  \\ 
\cmidrule(l){2-6}
& $rank$                   & $10$        & $10$ & $10$ & $unknown$\\ 
\cmidrule(l){2-6}
\midrule

\multirow{3}*{\texttt{exp.3}} 
& $dimension$                  & $9000\times3000$      &$9000\times3000$   & $9000\times3000$  & $9000\times9000$  \\ 
\cmidrule(l){2-6}
& $rank$                   & $15$       & $15$ & 15 & $unknown$\\ 
\cmidrule(l){2-6}

\bottomrule 
\end{tabular}
\captionsetup{labelformat=default, skip=10pt}
\caption{Details of the synthetic data in the three experiments.}\label{tabledetailsdatasets}
\end{table}

The details of our experiments are summarized in Figures~\ref{fig:objectives} and~\ref{fig:learning-ranks}. In Figure~\ref{fig:objectives}, we plot the convergence of the objective function $\mathscr{F}_{\lambda}$ versus time in the three experiments.
Note that PLAIS-Impute inherits the speed of AIS-Impute as it does not require performing SVD and it has both low per-iteration cost and fast convergence rate.
In Figure~\ref{fig:objectives}, we plot also the convergence of the objective function $\mathscr{F}_{\lambda}$ versus $-\log(\lambda)$ in the three experiments.
The regularization parameter in the PLAIS-Impute is initialized to a large value and decreased gradually.
In Figure~\ref{fig:learning-ranks}, we illustrate a learning rank curve obtained by PLAIS-Impute, where the green color corresponds to the input rank and the cyan color to the recovered rank of the collective matrix $\bcM$.

\begin{figure}[htbp]%
\centering
\includegraphics[width=\linewidth=0.90\textwidth]{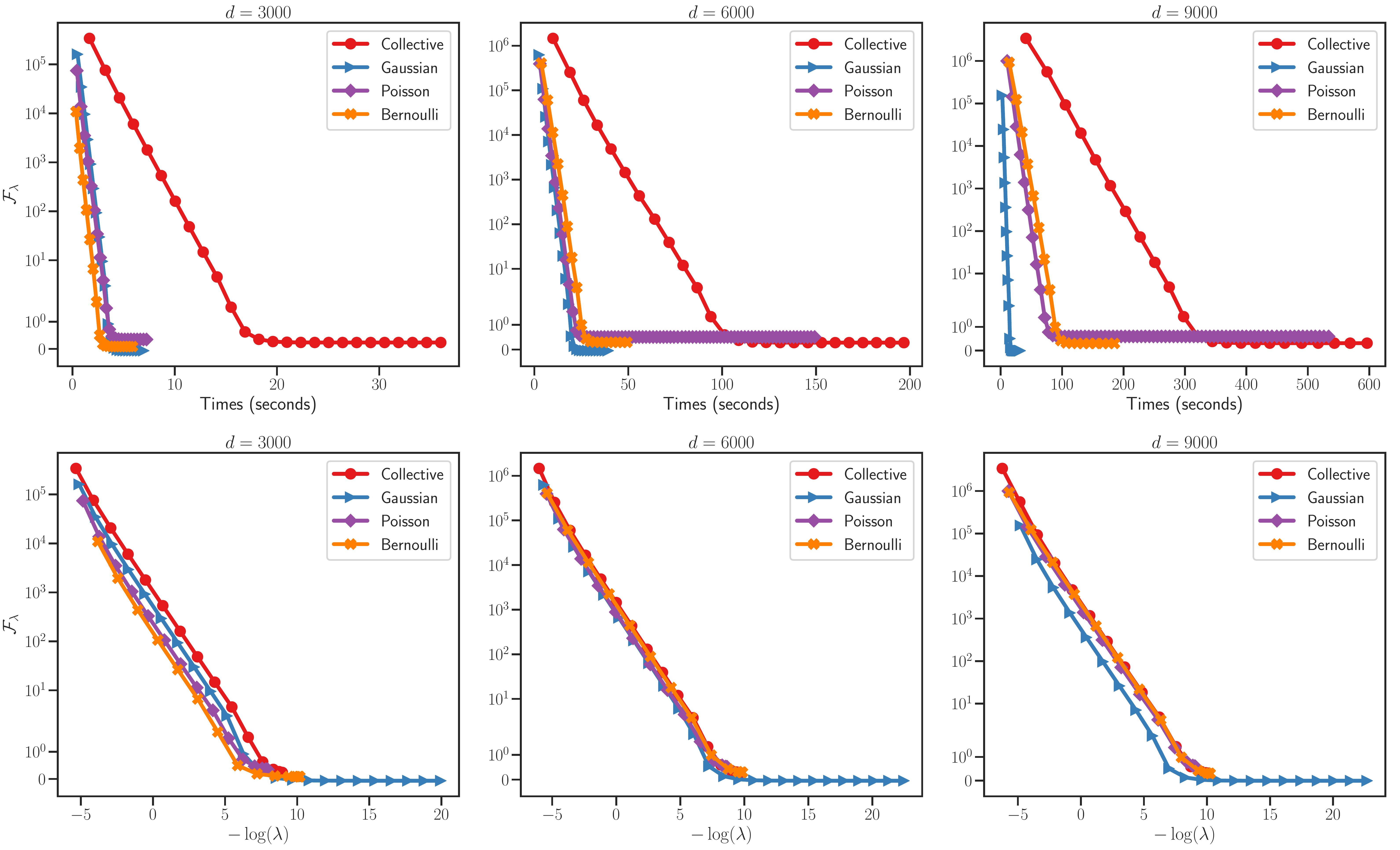}
\caption{Convergence of the objective function $\mathscr{F}_{\lambda}$ in problem~\eqref{def-estimator} versus time (top) and versus $-\log(\lambda)$ (bottom) in the three experiments with $p=0.6$; left for \texttt{exp.1}; middle for \texttt{exp.2}; right for \texttt{exp.3}.
Note that the objective functions for Gaussian, Poisson and Bernoulli distributions are calculated separately by the algorithm.}
\label{fig:objectives}
\end{figure}

\begin{figure}[htbp]%
\centering
\includegraphics[width=\linewidth=0.90\textwidth]{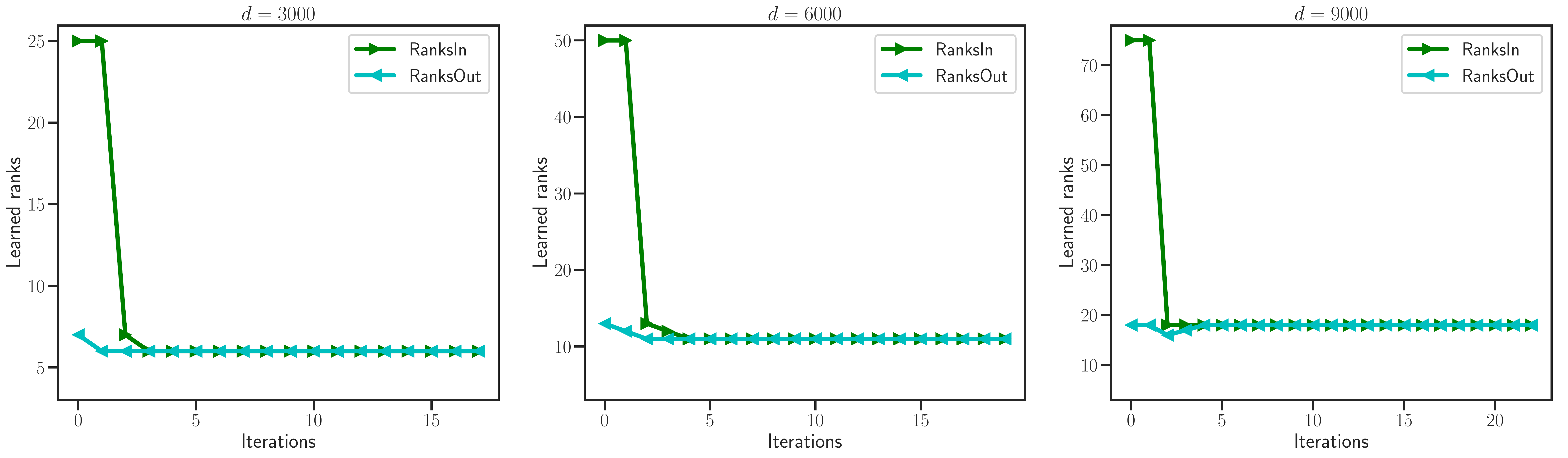}
\caption{Learning ranks curve versus iterations in the three experiments with $p=0.6$; left for \texttt{exp.1}; middle for \texttt{exp.2}; right for \texttt{exp.3}. We initialize the algorithm by setting a rank $r_0 = 5 r$ where $r \in \{5, 10, 15\}.$
The green color corresponds to the input rank while the cyan to the recovered rank of the collective matrix at each iteration.
As can be seen, the two ranks gradually converge to the final recovered rank.}
\label{fig:learning-ranks}
\end{figure}

\paragraph{Evaluation.} 

In our experiments, the PLAIS-Impute algorithm terminates when the absolute difference in the cost function values between two consecutive iterations is less than $\epsilon= 10^{-6}.$
We set the regularization parameter  $\lambda \propto \norm{\nabla\mathscr{L}_{\bcY}(\bcM)}$ as given by Theorem~\ref{theorem1}.  
Note that in step 12 of PLAIS-Impute, the threshold in SVT is given by $\lambda_t$ (defined in step 6), which is decreasing from one iteration to another. This allows to tune the first regularization parameter $\lambda$ in the program~\eqref{def-estimator}. 
We randomly sample $80\%$ of the observed entries for training, and the rest for testing.

In order to measure the the accuracy of our estimator, we employ the relative error (as, e.g., in~\cite{can2010SVTAlgo,davenport14,cai2013jmlr-onebitmaxnorm}) which is widely used metric in matrix completion and is defined by
\begin{equation*}
	\text{RE}(\widehat{\bW}, \bW) = \frac{\norm{\widehat{\bW} - \bW^{o}}_F}{\norm{\bW^{o}}_F},
\end{equation*}
where $\widehat{\bW}$ is the recovered matrix and $\bW^{o}$ is the original full data matrix.

We run the PLAIS-Impute algorithm in each experiment by varying the percentage of known entries $p$ from $0$ to $1$.
In Figure~\ref{fig:relative_errors}, we plot the relative errors as a functions of $p$.
We observe in Figure~\ref{fig:relative_errors} that the relative errors are decaying with $p$.
Note that for each $v\in \{1, 2, 3\}$, the estimator $\widehat{\bM^v}$ is calculated separately using the same program~\eqref{def-estimator}.
The results shown in Figure~\ref{fig:relative_errors} confirm that collective matrix completion approach outperforms the approach that consists in estimating each component source independently.

\begin{figure}[htbp]%
\centering
\includegraphics[width=\linewidth=0.90\textwidth]{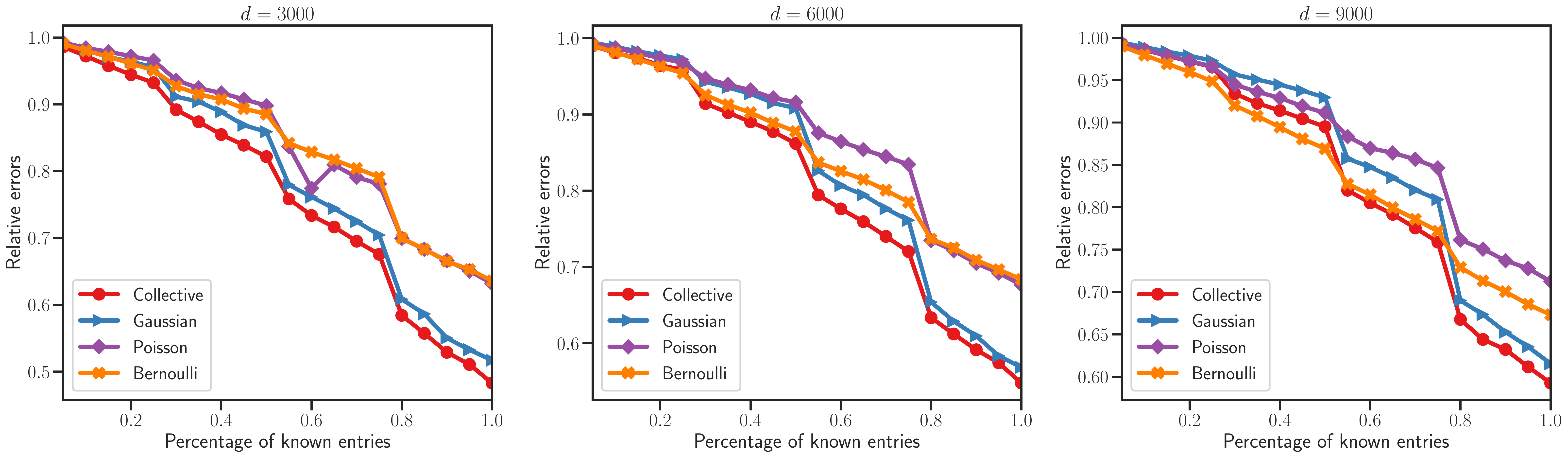}
\caption{Performance on the synthetic data in terms of relative errors between the target and the estimator matrices as a function of the percentage of known entries $p$ from $0$ to $1$.}\label{fig:relative_errors}
\end{figure}

\paragraph{Cold-start problem.} 

To simulate cold-start scenarios,  we choose one of the source matrices $\bM^v$ to be ``cold'' by increasing its sparsity. More precisely, we proceed in the following way: we extract vector of known entries of the chosen matrix and we set the first $1/5$ fraction of its entries to be equal to $0.$ We denote the obtained matrix by $\bM^v_{\text{cold}}$ and the collective matrix by $\bcM^v_\text{cold}$. In $\texttt{exp.1}$, $\texttt{exp.2}$ and $\texttt{exp.3}$, we increase the sparsity of $\bM^1$, $\bM^2$, and $\bM^3$, respectively. Hence, we get the ``cold'' collective matrices $\bcM^1_\text{cold} = (\bM_{\text{cold}}^1, \bM^2, \bM^3)$, $\bcM^2_\text{cold} = (\bM^1, \bM_{\text{cold}}^2, \bM^3)$, and $\bcM^3_{\text{cold}} = (\bM^1, \bM^2, \bM_{\text{cold}}^3)$.

We run $10$ times the PLAIS-Impute algorithm for recovering the source $\bM^v_{\text{cold}}$ and the collective $\bcM^v_{\text{cold}}$ for each $v=1, 2, 3$.
We denote by $\widehat{\bM^v_{\text{comp}}}$ the estimator of $\bM^v_{\text{cold}}$ obtained by running the PLAIS-Impute algorithm only for this component. 
Analogously, we denote $\widehat{\bM^v_{\text{collect}}}$ the estimator of $\bM^v_\text{cold}$ obtained by extracting the $v$-th source of the collective estimator $\widehat{\bcM^v_{\text{cold}}}$.

In Figure~\ref{fig:relative_errors_cold_star}, we report the relative errors $\text{RE}(\widehat{\bM^v_{\text{comp}}},\bM^v_{\text{cold}})$ and $\text{RE}(\widehat{\bM^v_{\text{collect}}},\bM^v_{\text{cold}})$ in the three experiments.
We see that, the collective matrix completion approach compensates the lack of informations in the ``cold'' source matrix.
Therefore, this shared structure among the sources is useful to get better predictions.

\begin{figure}[htbp]%
\centering 
\includegraphics[width=\linewidth=0.90\textwidth]{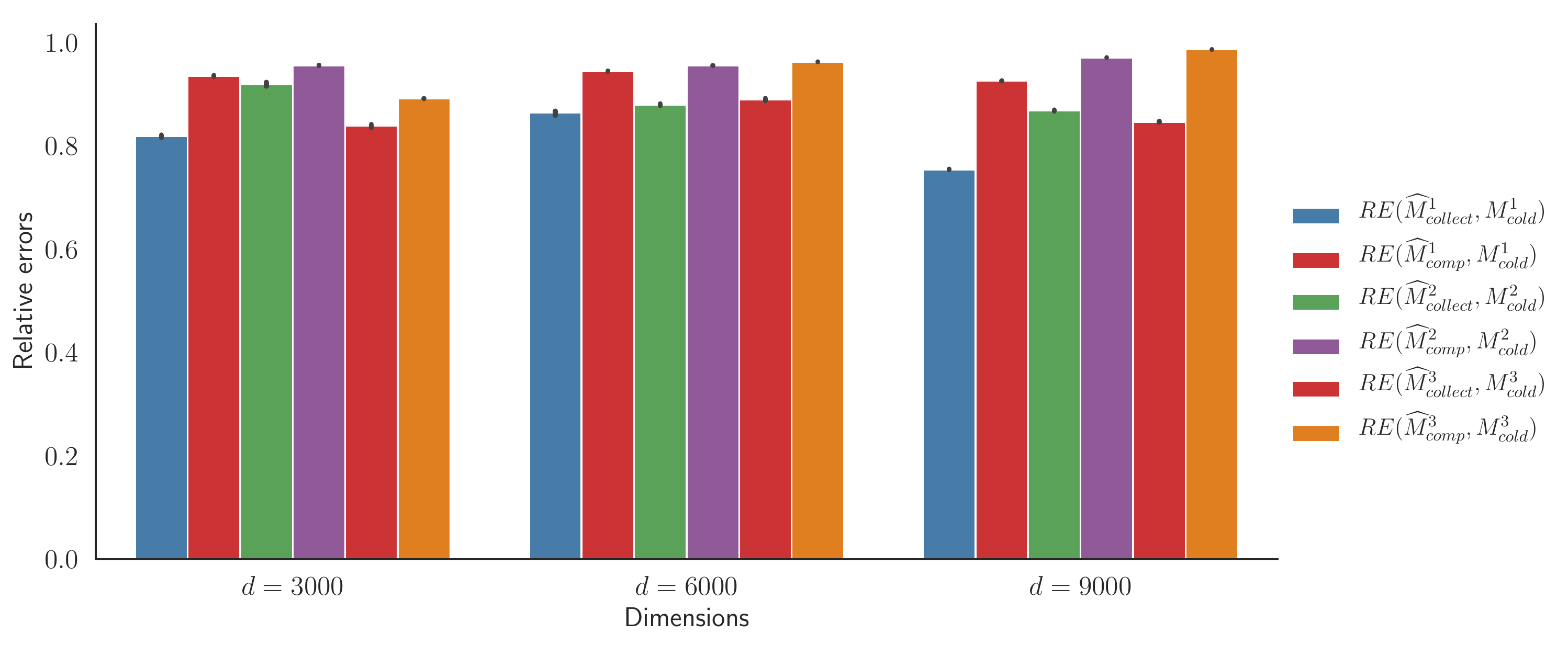}\\
\caption{Relative errors over a set of $10$ randomly generated datasets according to the cold-start scenarios
(with the black lines representing $\pm$ the standard deviation) between the target and the estimator matrices.}
\label{fig:relative_errors_cold_star}
\end{figure}

\section{Conclusion}

This paper studies the problem of recovering a low-rank matrix when the data are collected from multiple and heterogeneous source matrices.
We first consider the setting where, for each source, the matrix entries are sampled from an exponential family distribution.
We then relax this assumption. 
The proposed estimators are based on minimizing  the sum of a goodness-of-fit term and the nuclear norm penalization of the whole collective matrix.
Allowing for non-uniform sampling, we establish upper bounds on the prediction risk of our estimator.
As a by-product of our results, we provide exact minimax optimal rate of convergence for $1$-bit matrix completion which previously was known upto a logarithmic factor.
We present the proximal algorithm PLAIS-Impute to solve the corresponding convex programs.
The empirical study provides evidence of the efficiency of the collective matrix completion approach in the case of joint low-rank structure compared to estimate each source matrices separately.

\section*{Acknowledgments}

 We would like to thank the Associated Editor and the two anonymous Referees for extremely valuable comments and remarks that helped us greatly to improve the paper.  This work was supported by grants from DIM Math Innov Région Ile-de-France \url{https://www.dim-mathinnov.fr} 

\newpage
\appendix

\begin{appendices}
\numberwithin{equation}{section}

\section{Proofs}
\label{proofs}

We provide proofs of the main results, Theorems~\ref{theorem1} and~\ref{theorem-oracle-ranking}, in this section.
The proofs of a few technical lemmas including Lemmas~\ref{lemma-ctrl-sigmaR}, ~\ref{lemma-ctrl-bSigma} and~\ref{lemma-upper-bound-bSigmastar} are also given.
Before that, we recall some basic facts about matrices.

\paragraph{Basic facts about matrices.}

The singular value decomposition (SVD) of $\bA$ has the form $\bA = \sum_{l=1}^{\rk(\bA)}\sigma_l(\bA)u_l(\bA)v_l^\top(\bA)$ with orthonormal vectors $u_1(\bA), \ldots, u_{\rk(\bA)}(\bA)$, orthonormal vectors $v_1(\bA), \ldots, v_{\rk(\bA)}(\bA)$, and real numbers $\sigma_1(\bA) \geq \cdots \geq \sigma_{\rk(\bA)}(\bA)> 0$ (the singular values of $\bA$).
Let $(\mathcal{S}_1(\bA),\mathcal{S}_2(\bA))$ be the pair of linear vectors spaces, where $\mathcal{S}_1(\bA)$ is the linear span space of $\{u_1(\bA), \ldots, u_{\rk(\bA)}(\bA)\}$, and $\mathcal{S}_2(\bcA)$ is the linear span space of $\{v_1(\bA), \ldots, v_{\rk(\bA)}(\bA)\}$.
We denote by $\mathcal{S}_j^\perp(\bA)$ the orthogonal complements of $\mathcal{S}_j(\bA)$, for $j=1, 2$ and by $P_\mathcal{S}$ the projector on the linear subspace $\mathcal{S}$ of $\R^{n}$ or $\R^{m}$.

For two matrices $\bA$ and $\bB$, we set $\mathscr{P}^\perp_{\bA}(\bB) = P_{\mathcal{S}_1^\perp(\bA)}\bB P_{\mathcal{S}_2^\perp(\bA)}$ and $\mathscr{P}_{\bA} (\bB) = \bB - \mathscr{P}^\perp_{\bA}(\bB)$.
Since $\mathscr{P}_{\bA} (\bB) = P_{\mathcal{S}_1(\bA)}\bB + P_{\mathcal{S}_1^\perp(\bA)}\bB P_{\mathcal{S}_2(\bA)}$, and $\rk(P_{\mathcal{S}_j(\bA)}\bB) \leq \rk(\bA)$, we have that
\begin{equation}
\label{fact-1-matrix}
\rk(\mathscr{P}_{\bA} (\bB)) \leq 2 \rk(\bA).
\end{equation}
It is easy to see that for two matrices $\bA$ and $\bB$~\citep{klopp2014}
\begin{equation}
\label{klopp-thm3}
\norm{\bA}_* - \norm{\bB}_* \leq \norm{\mathscr{P}_{\bA}(\bA - \bB)}_* - \norm{\mathscr{P}^\perp_{\bA}(\bA - \bB)}_*.
\end{equation}
Finally, we recall the well-known trace duality property: for all $\bA, \bB \in \R^{n \times m}$, we have 
\begin{equation*}
\label{trace duality}
|\inr{\bA, \bB}| \leq \norm{\bB}\norm{\bA}_*.
\end{equation*}

\subsection{Proof of Theorem~\ref{theorem1}}
\label{proof-theorem1}

First, noting that $\widehat{\bcM}$ is optimal and $\bcM$ is feasible for the convex optimization problem~\eqref{def-estimator}, we thus have the basic inequality that   
\begin{align*}
\frac{1}{d_uD}\sum_{v\in [V]}\sum_{(i,j)\in [d_u]\times[d_v]} &B_{ij}^v\big(G^v(\hat{M}^v_{ij}) - Y^v_{ij}\hat{M}^v_{ij}\big) + \lambda\norm{\widehat{\bcM}}_* \\
&\qquad \qquad \leq \frac{1}{d_uD}\sum_{v\in [V]}\sum_{(i,j)\in [d_u]\times[d_v]} B_{ij}^v\big(G^v({M}^v_{ij}) - Y^v_{ij}{M}^v_{ij}\big) + \lambda\norm{{\bcM}}_*.
\end{align*}
It yields
\begin{equation*}
\frac{1}{d_uD}\sum_{v\in [V]}\sum_{(i,j)\in [d_u]\times[d_v]} B_{ij}^v\Big(\big(G^v(\hat{M}^v_{ij})  - G^v({M}^v_{ij})\big) - Y^v_{ij}\big(\hat{M}^v_{ij} - {M}^v_{ij}\big)\Big) \leq \lambda(\norm{{\bcM}}_* - \norm{\widehat{\bcM}}_*).
\end{equation*}
Using the Bregman divergence associated to each $G^v$, we get
\begin{align*}
\frac{1}{d_uD}\sum_{v\in [V]}&\sum_{(i,j)\in [d_u]\times[d_v]}B_{ij}^v d_{G^v}(\hat{M}^v_{ij}, {M}^v_{ij})\\
&\leq \lambda(\norm{{\bcM}}_* - \norm{\widehat{\bcM}}_*) - \frac{1}{d_uD}\sum_{v\in [V]}\sum_{(i,j)\in [d_u]\times[d_v]} B_{ij}^v\big((G^v)'({M}^v_{ij}) - Y^v_{ij} \big)\big(\hat{M}^v_{ij} - {M}^v_{ij}\big).
\end{align*}
Therefore, using the duality between $\norm{\cdot}_*$ and $\norm{\cdot}$, we arrive at 
\begin{align*}
\frac{1}{d_uD}\sum_{v\in [V]}\sum_{(i,j)\in [d_u]\times[d_v]} B_{ij}^v d_{G^v}(\hat{M}^v_{ij}, {M}^v_{ij})
&\leq \lambda(\norm{{\bcM}}_* - \norm{\widehat{\bcM}}_*) - \inr{\nabla\mathscr{L}_{\bcY}(\bcM), \widehat{\bcM} - \bcM}\\
&\leq \lambda(\norm{{\bcM}}_* - \norm{\widehat{\bcM}}_*) + \norm{\nabla\mathscr{L}_{\bcY}(\bcM)} \norm{\widehat{\bcM} - \bcM}_*.
\end{align*}
Besides, using the assumption $\lambda \geq  2\norm{\nabla\mathscr{L}_{\bcY}(\bcM)}$ and inequality~\eqref{klopp-thm3} lead to 
\begin{equation*}
\frac{1}{d_uD}\sum_{v\in [V]}\sum_{(i,j)\in [d_u]\times[d_v]} B_{ij}^v d_{G^v}(\hat{M}^v_{ij}, {M}^v_{ij}) \leq \frac{3\lambda}{2}\norm{\mathscr{P}_{\bcM}\big(\widehat{\bcM} - \bcM\big)}_*.
\end{equation*}
Since $\norm{\mathscr{P}_{\bcA}(\bcB)}_* \leq \sqrt{2\rk(\bcA)} \norm{\bcB}_F$ for any two matrices $\bcA$ and $\bcB$, we obtain
\begin{equation}
\label{important-inequ}
\frac{1}{d_uD}\sum_{v\in [V]}\sum_{(i,j)\in [d_u]\times[d_v]} B_{ij}^vd_{G^v}(\hat{M}^v_{ij}, {M}^v_{ij}) \leq \frac{3\lambda}{2}\sqrt{2\rk(\bcM)}\norm{\widehat{\bcM} - \bcM}_F.
\end{equation}
Now, Assumption~\ref{assump-Gv-bound} implies that the Bregman divergence satisfies $L^2_\gamma(x - y)^2 \leq 2 d_G^{v}(x, y) \leq U^2_\gamma(x - y)^2,$ then we get 
\begin{align}
\label{second-important-inequ}
\Delta^2_{\bcY}(\widehat{\bcM}, \bcM)
&\leq \frac{2}{L^2_\gamma}\frac{1}{d_uD}\sum_{v\in [V]}\sum_{(i,j)\in [d_u]\times[d_v]} B_{ij}^vd_{G^v}(\hat{M}^v_{ij}, {M}^v_{ij}),
\end{align}
where 
\begin{equation*}
\Delta^2_{\bcY}(\widehat{\bcM}, \bcM) = \frac{1}{d_uD}\sum_{v\in [V]}\sum_{(i,j)\in [d_u]\times[d_v]} B_{ij}^v(\hat{M}^v_{ij} - {M}^v_{ij})^2.
\end{equation*}
Combining~\eqref{important-inequ} and~\eqref{second-important-inequ}, we arrive at 
\begin{equation}
\label{Delta-Frob-norm}
\Delta^2_{\bcY}(\widehat{\bcM}, \bcM) \leq \frac{3\lambda}{L_\gamma^2} \sqrt{2\rk(\bcM)}\norm{\widehat{\bcM} - \bcM}_F.
\end{equation}

Let us now define the threshold $\beta = \frac{946\gamma^2\log (d_u + D)}{pd_uD }$ and distinguish the two following cases that allows us to obtain an upper bound for the estimation error:\\
\noindent{{\it Case 1:\,\,}} 
if $(d_uD)^{-1}\norm{\widehat{\bcM} - \bcM}_{\Pi, F}^2 < \beta$, then the statement of Theorem~\ref{theorem1} is true.\\
\noindent{{\it Case 2:\,\,}} it remains to consider the case $(d_uD)^{-1}\norm{\widehat{\bcM} - \bcM}_{\Pi, F}^2 \geq \beta$.
Lemma~\ref{lemma-crtl-nucl-2} in Appendix~\ref{appendix-useful-lemmas} implies $\norm{\widehat{\bcM} - \bcM}_F \geq \frac{1}{4 \sqrt{2\rk(\bcM)}}\norm{\widehat{\bcM} - \bcM}_*$, then we obtain
\begin{align*}
\label{hatbcm-belongs-tpC}
\norm{\widehat{\bcM}- \bcM}_* \leq \sqrt{32\rk(\bcM)}\norm{\widehat{\bcM} - \bcM}_{F}.
\end{align*}
This leads to $\widehat{\bcM} \in \mathscr{C}\big(\beta, 32{}\rk(\bcM)\big),$ where the set 
\begin{align}
\mathscr{C}(\beta, r) = \bigg\{\bcW \in \mathscr{C}_\infty(\gamma):  &\norm{\bcM - \bcW}_* \leq \sqrt{r}\norm{{\bcW} - \bcM}_{F}\text{ and } (d_uD)^{-1}\norm{{\bcW} - \bcM}_{\Pi,F}^2 \geq \beta\bigg\}.
\end{align}
Using Lemma~\ref{lemma-prob-upper-bound} in Appendix~\ref{appendix-useful-lemmas}, we have
\begin{equation}
\label{thme1-import-ineq}
\Delta^2_{\bcY}(\widehat{\bcM}, \bcM) \geq \frac{\norm{{\bcW} - \bcM}_{\Pi,F}^2}{2d_uD}
- {44536\rk(\bcM)\gamma^2}(\E[\norm{\bSigma_R}])^2 - \frac{5567\gamma^2}{{d_uD}p}.
\end{equation}
Together~\eqref{thme1-import-ineq} and~\eqref{Delta-Frob-norm} imply 
\begin{align*}
\frac{1}{2d_uD}\norm{\widehat{\bcM} - \bcM}^2_{\Pi,F} &\leq \frac{3\lambda}{L^2_\gamma} \sqrt{2\rk(\bcM)}\norm{\widehat{\bcM} - \bcM}_F\\
&\qquad  + {44536\rk(\bcM)\gamma^2}(\E[\norm{\bSigma_R}])^2 + \frac{5567\gamma^2}{{pd_uD}}\\
& \leq \frac{18\lambda^2d_uD}{pL^4_\gamma} {\rk(\bcM)} + \frac{1}{4d_uD}\norm{\widehat{\bcM} - \bcM}^2_{\Pi,F}\\
&\qquad  + {44536\rk(\bcM)\gamma^2}(\E[\norm{\bSigma_R}])^2 + \frac{5567\gamma^2}{{pd_uD}}.
\end{align*}
Then, 
\begin{align*}
\frac{1}{4d_uD}\norm{\widehat{\bcM} - \bcM}^2_{\Pi,F} &\leq \frac{18\lambda^2d_uD}{pL^4_\gamma} {\rk(\bcM)} \\
&\qquad  + {44536p^{-1}d_uD\rk(\bcM)\gamma^2}(\E[\norm{\bSigma_R}])^2 +\frac{5567\gamma^2}{{d_uD}p},
\end{align*}
and,
\begin{align*}
\frac{1}{d_uD}\norm{\widehat{\bcM} - \bcM}^2_{\Pi,F}
\leq p^{-1}\max\bigg(d_uD\rk(\bcM)\bigg(\frac{c_1\lambda^2}{L^4_\gamma} + c_2\gamma^2(\E[\norm{\bSigma_R}])^2\bigg), \frac{c_3\gamma^2}{{d_uD}}\bigg),
\end{align*}
where $c_1,c_2$ and $c_3$ are numerical constants.
This concludes the proof of Theorem~\ref{theorem1}.

\subsection{Proof of Lemma~\ref{lemma-ctrl-sigmaR}}
\label{proof-lemma-ctrl-sigmaR}

We use the following result:
\begin{proposition}(Corollary 3.3 in~\cite{bandeira2016})
\label{proposition-ctrl-expect-spectral-norm}
Let $\bW$ be the $n \times m$ rectangular matrix whose entries $W_{ij}$ are independent centered bounded random variables.
Then there exists a universal constant $\cst$ such that 
\begin{equation*}
\E[\norm{\bW}] \leq \cst\Big(\kappa_1 \vee \kappa_2 + \kappa_*\sqrt{\log(n\wedge m)}\Big),
\end{equation*}
where we have defined
\begin{align*}
\kappa_1 = \max_{i\in[n]} \sqrt{\sum_{j\in[m]} \E[W_{i,j}^2]},\quad 
\kappa_2 = \max_{j\in[m]} \sqrt{\sum_{i\in[n]} \E[W_{i,j}^2]}, \quad \text{ and }\quad  
\kappa_* = \max_{(i,j)\in[n]\times[m]}|W_{ij}|.
\end{align*}
\end{proposition}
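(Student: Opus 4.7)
The plan is to reduce the rectangular case to a sum of independent symmetric random matrices via Hermitian dilation, and then invoke a sharp non-commutative Bernstein-type inequality. First, introduce the Hermitian dilation
\[
\widetilde{\bW} = \begin{pmatrix} \mathbf{0}_{n\times n} & \bW \\ \bW^\top & \mathbf{0}_{m\times m} \end{pmatrix} \in \R^{(n+m)\times(n+m)},
\]
which is symmetric and satisfies $\norm{\widetilde{\bW}} = \norm{\bW}$. Decompose $\widetilde{\bW} = \sum_{i\in[n], j\in[m]} \bZ_{ij}$, where $\bZ_{ij}$ is the symmetric elementary matrix placing $W_{ij}$ at positions $(i, n+j)$ and $(n+j, i)$ and zero elsewhere. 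The $\bZ_{ij}$ are independent, centered, and bounded.

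Next I would compute the two ingredients required by the sharp matrix-concentration inequality of Bandeira--van Handel. Since $\widetilde{\bW}^2$ is block-diagonal with blocks $\bW\bW^\top$ and $\bW^\top\bW$, and since the off-diagonal entries of $\E[\bW\bW^\top]$ and $\E[\bW^\top\bW]$ vanish by the independence and centering of the $W_{ij}$, we have
\[
\bigl\|\E[\widetilde{\bW}^2]\bigr\| = \max\!\Big(\max_{i\in[n]} \sum_{j\in[m]} \E[W_{ij}^2],\;\max_{j\in[m]} \sum_{i\in[n]} \E[W_{ij}^2]\Big) = (\kappa_1 \vee \kappa_2)^2.
\]
On the other hand, $\max_{i,j}\norm{\bZ_{ij}} = \max_{i,j}|W_{ij}| = \kappa_*$. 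Applying the sharp spectral-norm bound for sums of independent centered symmetric random matrices (Theorem~1.1 in~\cite{bandeira2016}) to $\widetilde{\bW}$ then yields
\[
\E[\norm{\bW}] \;=\; \E[\norm{\widetilde{\bW}}] \;\leq\; \cst\bigl(\kappa_1 \vee \kappa_2 \;+\; \kappa_* \sqrt{\log(n+m)}\bigr),
\]
which already matches the structure of the claimed inequality and reproduces it up to replacing $\log(n\wedge m)$ by $\log(n+m)$.

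The main obstacle is the refinement from $\log(n+m)$ to $\log(n\wedge m)$, which is precisely the distinctive content of Corollary~3.3 in~\cite{bandeira2016}. The structural reason is that the non-zero eigenvalues of the dilation $\widetilde{\bW}$ come in signed pairs $\pm\sigma_\ell(\bW)$, so the effective dimension of $\widetilde{\bW}$ is $2(n\wedge m)$ rather than $n+m$; the Gaussian interpolation and Slepian-type comparison arguments that underlie the sharp version of the Bandeira--van Handel inequality can then be tightened to depend on this effective dimension. I would invoke this sharpening as a black box rather than reproduce the interpolation machinery, since the novel probabilistic content of the proposition lies entirely in this step.
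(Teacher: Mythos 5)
There is nothing in the paper to compare your attempt against: Proposition~\ref{proposition-ctrl-expect-spectral-norm} is imported verbatim as Corollary~3.3 of \cite{bandeira2016} and used as a black box (in the proof of Lemma~\ref{lemma-ctrl-sigmaR}); the authors give no proof of it. Judged on its own terms, your sketch correctly reproduces the standard reduction used in that reference: the Hermitian dilation preserves the operator norm, your identification of $\norm{\E[\widetilde{\bW}^2]}=(\kappa_1\vee\kappa_2)^2$ is right (independence and centering kill the off-diagonal entries of $\E[\bW\bW^\top]$ and $\E[\bW^\top\bW]$, so the block-diagonal matrix $\E[\widetilde{\bW}^2]$ has norm equal to the larger of the two row/column variance maxima), and applying the sharp symmetric-matrix bound to the $(n+m)$-dimensional dilation legitimately yields the inequality with $\log(n+m)\asymp\log(n\vee m)$ in place of $\log(n\wedge m)$.

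The one genuine gap is the step you flag yourself. The passage from $\log(n\vee m)$ to $\log(n\wedge m)$ \emph{is} the content of Corollary~3.3, so invoking ``this sharpening'' as a black box makes the argument circular at the only point where the proposition goes beyond a routine dilation argument; and the refinement is not cosmetic in this paper, since Lemma~\ref{lemma-ctrl-sigmaR} uses $\log(d_u\wedge D)$ for a $d_u\times D$ matrix that may be far from square. If you want the step to be self-contained, the correct mechanism is the moment method rather than the Slepian/interpolation comparison you cite: one bounds $\E\norm{\bW}^{2p}\le \E\,\tr\big[(\bW\bW^\top)^p\big]$ (equivalently $\tr[(\bW^\top\bW)^p]$), and since this trace is taken over a space of dimension $n\wedge m$, the dimensional prefactor that converts the $2p$-th moment bound into an expectation bound is $n\wedge m$; optimizing over $p\sim\log(n\wedge m)$ produces the claimed logarithm. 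Your remark that the nonzero eigenvalues of $\widetilde{\bW}$ come in pairs $\pm\sigma_\ell(\bW)$ is the right intuition for why the effective dimension is $n\wedge m$, but the place this enters the proof is the trace, not a Gaussian comparison.
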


We apply Proposition~\ref{proposition-ctrl-expect-spectral-norm} to $\bSigma_R = \frac{1}{d_uD}\sum_{v\in [V]}\sum_{(i,j) \in [d_u]\times[d_v]}\varepsilon^v_{ij}B_{ij}^vE^v_{ij}$.
We compute
\begin{align*}
\kappa_1 = \frac{1}{d_uD}\max_{i\in[d_u]} \sqrt{\sum_{v\in[V]}\sum_{j\in[d_v]} \E[(\varepsilon^v_{ij})^2(B_{ij}^v)^2]} &= \frac{1}{d_uD}\max_{i\in[d_u]} \sqrt{\sum_{v\in[V]}\sum_{j\in[d_v]} \pi^v_{ij}}\\
&= \frac{1}{d_uD}\max_{i\in[d_u]} \sqrt{\pi_{i\bcdot}},
\end{align*}
\begin{align*}
\kappa_2 = \frac{1}{d_uD}\max_{v\in[V]}\max_{j\in[d_v]} \sqrt{\sum_{i\in[d_u]} \E[(\varepsilon^v_{ij})^2(B_{ij}^v)^2]} &=\frac{1}{d_uD} \max_{v\in[V]}\max_{j\in[d_v]} \sqrt{\sum_{i\in[d_u]} \pi^v_{ij}}\\
&\leq \frac{1}{d_uD}\max_{j\in[d_v]} \sqrt{\max_{v\in[V]}\sum_{i\in[d_u]} \pi^v_{ij}}\\
& \leq \frac{1}{d_uD}\max_{j\in[d_v]} \sqrt{\pi_{\bcdot j}},
\end{align*}
and $\kappa_* = \frac{1}{d_uD}\max_{v\in[V]}\max_{(i,j)\in[d_u]\times[d_v]} |\varepsilon^v_{ij}B_{ij}| \leq \frac{1}{d_uD}.$
Using inequality~\eqref{upper-bound-marginal}, we have $\kappa_1 \leq \frac{\sqrt{\mu}}{d_uD}$ and $\kappa_2 \leq \frac{\sqrt{\mu}}{d_uD}$.
Then, $\kappa_1 \vee \kappa_2 \leq \frac{\sqrt{\mu}}{d_uD}$, which establishes Lemma~\ref{lemma-ctrl-sigmaR}.

\subsection{Proof of Lemma~\ref{lemma-lemma-ctrl-bSigma}} 
\label{proof-lemma-ctrl-bSigma}

We  write $\nabla\mathscr{L}_{\bcY}(\bcM) = -\frac{1}{d_uD}\sum_{v\in [V]}\sum_{(i,j) \in [d_u]\times[d_v]} H^v_{ij}E^v_{ij}$, with $H^v_{ij} = B_{ij}^v\big(X^v_{ij} - (G^v)'(M^v_{ij})\big)$. 
For a truncation level $T > 0$ to be chosen, we decompose $\nabla\mathscr{L}_{\bcY}(\bcM)= \bSigma_1 + \bSigma_2$, where
\begin{equation*}
\bSigma_1 = -\frac{1}{d_uD}\sum_{v\in [V]}\sum_{(i,j) \in [d_u]\times[d_v]} \big(H^v_{ij}\ind{}{_{((X^v_{ij} - \E[X^v_{ij}])\leq T)}} - \E\big[H^v_{ij}\ind{}{_{((X^v_{ij} - \E[X^v_{ij}])\leq T)}}\big]\big)E^v_{ij},
\end{equation*}
and
\begin{equation*}
\bSigma_2 = -\frac{1}{d_uD}\sum_{v\in [V]}\sum_{(i,j) \in [d_u]\times[d_v]} \big(H^v_{ij}\ind{}{_{((X^v_{ij} - \E[X^v_{ij}]) > T)}} - \E\big[H^v_{ij}\ind{}{_{((X^v_{ij} - \E[X^v_{ij}]) > T)}}\big]\big)E^v_{ij},
\end{equation*}
then, the triangular inequality implies $\norm{\nabla\mathscr{L}_{\bcY}(\bcM)} \leq \norm{\bSigma_1} + \norm{\bSigma_2}.$
Then, the proof is divided on two steps:

\noindent{{\it Step 1: control of $\norm{\bSigma_1}$.\,\,}}
In order to control $\norm{\bSigma_1}$, we use the following bound on the spectral norms of random matrices.
It is obtained by extension to rectangular matrices via self-adjoint dilation of Corollary 3.12 and Remark 3.13 in \cite{bandeira2016}.
\begin{proposition}\citep{bandeira2016}
\label{proposition-ctrl-spect-norm}
Let $\bW$ be the $n \times m$ rectangular matrix whose entries $W_{ij}$ are independent centered bounded random variables.
Then, for any $0\leq \epsilon \leq 1/2$ there exists a universal constant $\cst_\epsilon$ such that for every $x \geq 0$,
\begin{equation*}
\P\big[\norm{\bW} \geq 2\sqrt{2}(1+\epsilon)(\kappa_1 \vee \kappa_2) + x\big] \leq (n \wedge m) \exp\Big(-\frac{x^2}{\cst_\epsilon \kappa^2_*}\Big),
\end{equation*}
where $\kappa_1$, $\kappa_2$, and $\kappa_*$ are defined as in Proposition~\ref{proposition-ctrl-expect-spectral-norm}.
\end{proposition}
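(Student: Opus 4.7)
The plan is to reduce to a concentration statement about the top eigenvalue of a symmetric random matrix via Hermitian dilation, and then to combine a sharp bound on $\E[\norm{\bW}]$ with a sub-Gaussian concentration inequality around that expectation.

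First, I would introduce the self-adjoint dilation $\widetilde{\bW} = \begin{pmatrix} \bO & \bW \\ \bW^\top & \bO \end{pmatrix}$, a symmetric $(n+m)\times(n+m)$ matrix with independent centered bounded entries satisfying $\norm{\widetilde{\bW}} = \norm{\bW}$. Writing $\widetilde{\bW} = \sum_{(i,j)} W_{ij}\widetilde{\bE}_{ij}$, one computes the matrix variance parameter $\norm{\E[\widetilde{\bW}^2]} = (\kappa_1 \vee \kappa_2)^2$. The classical non-commutative Khintchine or matrix Bernstein bounds would only give $\E[\norm{\bW}] \leq C(\kappa_1 \vee \kappa_2)\sqrt{\log(n \wedge m)}$; to obtain the sharp leading constant $2\sqrt{2}$ I would invoke the Gaussian interpolation technique of Bandeira--van Handel, comparing $\widetilde{\bW}$ with a Gaussian matrix having the same second-moment structure and exploiting the fact that the spectral edge of the corresponding Gaussian ensemble lies at $2\sqrt{2}(\kappa_1 \vee \kappa_2)$.

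Next, I would observe that $\bW \mapsto \norm{\bW}$ is convex and $1$-Lipschitz with respect to the Frobenius norm and that the entries of $\bW$ are independent and bounded by $\kappa_*$. Talagrand's concentration inequality for convex Lipschitz functions of independent bounded variables (equivalently, bounded differences applied to $\lambda_{\max}(\widetilde{\bW})$) then yields
\[
\P\big[\norm{\bW} \geq \E[\norm{\bW}] + t\big] \leq \exp({-}c\, t^2/\kappa_*^2).
\]
Combining this with the expectation bound, and absorbing the residual $\sqrt{\log(n \wedge m)}$ factor into $\epsilon(\kappa_1 \vee \kappa_2)$ at the cost of inflating $\cst_\epsilon$ as $\epsilon \to 0$, produces the announced inequality. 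The $(n \wedge m)$ prefactor reflects the dimension dependence of the matrix Laplace transform method applied to the Hermitian dilation.

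The main obstacle is obtaining the sharp constant $2\sqrt{2}$ in front of $\kappa_1 \vee \kappa_2$, which strictly improves on standard matrix Bernstein (that leaves an unavoidable $\sqrt{\log(n\wedge m)}$ factor). This step requires the delicate noncommutative moment calculations of Bandeira--van Handel, controlling $\E[\tr(\widetilde{\bW}^{2p})]$ to high order and exploiting cancellations that are invisible in the scalar picture. A secondary technical point is tracking the dependence of $\cst_\epsilon$ on $\epsilon$: as $\epsilon \to 0$ the absorption of the $\sqrt{\log(n \wedge m)}$ remainder forces a deterioration of $\cst_\epsilon$, which must be computed carefully to match Corollary 3.12 of \cite{bandeira2016}.
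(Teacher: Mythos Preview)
Your proposal is correct and aligns with the paper's treatment. The paper does not give a self-contained proof of this proposition; it simply states that the result ``is obtained by extension to rectangular matrices via self-adjoint dilation of Corollary 3.12 and Remark 3.13 in \cite{bandeira2016}.'' Your sketch unpacks exactly this: Hermitian dilation reduces to the symmetric case, the sharp expectation bound with leading constant $2\sqrt{2}$ is the moment-method result of Bandeira--van Handel (their Corollary 3.12), and the sub-Gaussian tail around the expectation comes from Talagrand's convex-Lipschitz concentration, with the dimensional prefactor arising from the trace bound in the matrix moment method. One minor point: you might check that the prefactor is indeed $n\wedge m$ rather than $n+m$ after dilation --- this refinement is what Remark 3.13 in \cite{bandeira2016} addresses, by observing that the rank of the relevant variance matrix is governed by the smaller dimension.
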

We apply Proposition~\ref{proposition-ctrl-spect-norm} to $\bSigma_1$.
We compute 
\begin{equation*}
\kappa_1 = \frac{1}{d_uD}\max_{i\in[d_u]} \sqrt{\sum_{v\in[V]}\sum_{j\in[d_v]} \E\Big[\big(H^v_{ij}\ind{}{_{((X^v_{ij} - \E[X^v_{ij}])\leq T)}} - \E\big[H^v_{ij}\ind{}{_{((X^v_{ij} - \E[X^v_{ij}])\leq T)}}\big]\big)^2\Big]}.
\end{equation*}
Besides, we have 
\begin{align*}
\E\Big[\big(H^v_{ij}\ind{}{_{((X^v_{ij} - \E[X^v_{ij}])\leq T)}} - \E\big[H^v_{ij}\ind{}{_{((X^v_{ij} - \E[X^v_{ij}])\leq T)}}\big]\big)^2\Big] \leq \E\big[(H^v_{ij})^2\ind{}{_{((X^v_{ij} - \E[X^v_{ij}]) \leq T)}}\big],
\end{align*}
and
\begin{align*}
\E\big[(H^v_{ij})^2\ind{}{_{((X^v_{ij} - \E[X^v_{ij}]) \leq T)}}\big]
&= \E\big[(B_{ij}^v)^2\big(X^v_{ij} - \E[X_{ij}^v])^2\ind{}{_{((X^v_{ij} - \E[X^v_{ij}]) \leq T)}}\big]\\
&\leq \pi^v_{ij} \V ar[X^v_{ij}]\\
&= \pi^v_{ij} (G^v)''(M^v_{ij}).
\end{align*}
By Assumption~\ref{assump-Gv-bound}, we obtain $\E\big[(H^v_{ij})^2\ind{}{_{((X^v_{ij} - \E[X^v_{ij}]) \leq T)}}\big] \leq\pi^v_{ij}U_\gamma^2$ for all $v\in[V], (i,j)\in[d_u]\times [d_v]$.
Then,
\begin{align*}
\kappa_1 \leq \frac{{U_\gamma}}{d_uD}\max_{i\in[d_u]} \sqrt{\sum_{v\in[V]}\sum_{j\in[d_v]}\pi^v_{ij}} \leq \frac{{U_\gamma}}{d_uD}\max_{i\in[d_u]} \sqrt{\pi^v_{i\bcdot}} \leq \frac{U_\gamma\sqrt{\mu}}{d_uD},
\end{align*}
and
\begin{align*}
\kappa_2 & 
\leq \frac{U_\gamma}{d_uD}\max_{j\in[d_v]}\sqrt{\max_{v\in[V]}\sum_{i\in[d_u]}  \pi^v_{ij}}
\leq \frac{U_\gamma}{d_uD}\max_{j\in[d_v]}\sqrt{\pi_{\bcdot j}}
\leq \frac{U_\gamma\sqrt{\mu}}{d_uD}.
\end{align*}
It yields, $\kappa_1 \vee \kappa_2 \leq \frac{U_\gamma\sqrt{\mu}}{d_uD}$.
Moreover, we have $\E\big[H^v_{ij}\ind{}{_{((X^v_{ij} - \E[X^v_{ij}]) \leq T)}}\big] \leq T,$ which entails $\kappa_* \leq \frac{2T}{d_uD}$.
By choosing $\epsilon = 1/2$ in Proposition~\ref{proposition-ctrl-spect-norm}, we obtain, with probability at least $1 - 4 (d_u \wedge D) e^{-x^2},$
\begin{align*}
\norm{\bSigma_1}
&\leq \frac{3U_\gamma\sqrt{2\mu}+ 2\sqrt{\cst_{1/2}}xT}{d_uD}.
\end{align*}
Therefore, by setting $x = \sqrt{2\log(d_u +D)}$, we get with probability at least $1 - 4/(d_u +D)$,
\begin{align}
\label{final-ctrl-bSigma1-expnoise}
\norm{\bSigma_1} 
&\leq \frac{ 3U_\gamma\sqrt{2\mu}+ 2\sqrt{\cst_{1/2}}\sqrt{2\log(d_u +D)}T}{d_uD}.
\end{align}

\noindent{{\it Step 2: control of $\norm{\bSigma_2}$.\,\,}}
To control $\norm{\bSigma_2}$, we use Chebyshev's inequality, that is 
\begin{equation*}
\P\big[\norm{\bSigma_2} \geq \E[\norm{\bSigma_2}]+ x\big] \leq \frac{\V ar[\norm{\bSigma_2}]}{x^2}, \text{ for all } x > 0.
\end{equation*}
We start by estimating $\E[\norm{\bSigma_2}]$.
We use the fact that $\E[\norm{\bSigma_2}] \leq \E[\norm{\bSigma_2}_F]$:
\begin{align*}
\E\big[\norm{\bSigma_2}_F^2\big] &= \frac{1}{(d_uD)^2}\sum_{v\in [V]}\sum_{(i,j) \in [d_u]\times[d_v]} \E\big[\big(H^v_{ij}\ind{}{_{((X^v_{ij} - \E[X^v_{ij}]) > T)}} - \E\big[H^v_{ij}\ind{}{_{((X^v_{ij} - \E[X^v_{ij}]) > T)}}\big]\big)^2\big]\\
&\leq \frac{1}{(d_uD)^2}\sum_{v\in [V]}\sum_{(i,j) \in [d_u]\times[d_v]} \E\big[(H^v_{ij})^2\ind{}{_{((X^v_{ij} - \E[X^v_{ij}]) > T)}}\big]\\
&\leq \frac{1}{(d_uD)^2}\sum_{v\in [V]}\sum_{(i,j) \in [d_u]\times[d_v]} \pi^v_{ij}\E\big[(X^v_{ij} - \E[X^v_{ij}])^2\ind{}{_{((X^v_{ij} - \E[X^v_{ij}]) > T)}}\big]\\
&\leq \frac{1}{(d_uD)^2}\sum_{v\in [V]}\sum_{(i,j) \in [d_u]\times[d_v]} \pi^v_{ij}\sqrt{\E\big[(X^v_{ij} - \E[X^v_{ij}])^4\big]} \sqrt{\P\big[{}{{((X^v_{ij} - \E[X^v_{ij}]) > T)}}\big]}.
\end{align*}
By Lemma~\ref{lemma-subgaussian-tail-X}, we have that  $X_{ij}^v - \E[X_{ij}^v]$ is an $(U_\gamma, K)$-sub-exponential random variable for every $v \in[V]$ and $(i,j) \in [d_u]\times[d_v]$.
It yields, using $(2)$ in Theorem~\ref{lemma-properties-sub}, that 
\begin{align*}
\E\big[(X^v_{ij} - \E[X^v_{ij}])^p\big] \leq \cst {p}^p \norm{X^v_{ij}}_{\psi_1}^p, \text{ for every } p \geq 1,
\end{align*}
and by (1) in Theorem~\ref{lemma-properties-sub} 
\begin{equation*}
\P\big[{}{{|X^v_{ij}- \E[X^v_{ij}]|> T}}\big] \leq \exp\Big(1  - \frac{T}{\cst_{\text{se}} \norm{X^v_{ij}}_{\psi_1}}\Big),
\end{equation*}
where $\cst$ and $\cst_{\text{se}}$ are absolute constants.
Consequently,
\begin{align*}
\E\big[\norm{\bSigma_2}_F^2\big] &\leq \frac{\cst}{(d_uD)^2}\sum_{v\in [V]}\sum_{(i,j) \in [d_u]\times[d_v]} \pi^v_{ij} \sqrt{\norm{X^v_{ij}}_{\psi_1}^4}\sqrt{\exp\Big(1  - \frac{T}{\cst_{\text{se}} \norm{X^v_{ij}}_{\psi_1}}\Big)}\\
&\leq \frac{\cst }{(d_uD)^2}\sum_{v\in [V]}\sum_{(i,j) \in [d_u]\times[d_v]}(U_\gamma \vee K)^2 \pi^v_{ij}\sqrt{\exp\Big( - \frac{T}{\cst_{\text{se}} K}\Big)}. 
\end{align*}
We choose $T = T_* := 4\cst_{\text{se}} (U_\gamma \vee K) \log(d_u\vee D)$. It yields, 
\begin{align*}
\E\big[\norm{\bSigma_2}_F^2\big] &\leq \frac{\cst}{(d_uD)^2} \frac{1}{(d_u\vee D)^2}\sum_{v\in [V]}\sum_{(i,j) \in [d_u]\times[d_v]} (U_\gamma \vee K)^2\pi^v_{ij} \\
&\leq \frac{\cst (U_\gamma \vee K)^2}{(d_uD)^2} \frac{1}{(d_u\vee D)^2}\sum_{v\in [V]}\sum_{j \in[d_v]} \pi^v_{ij} \\
&\leq \frac{\cst (U_\gamma \vee K)^2}{(d_uD)^2} \frac{1}{(d_u\vee D)^2}(d_u\vee D) \mu\\
&\leq \frac{\cst (U_\gamma \vee K)^2\mu}{(d_uD)^2d_u\vee D}.
\end{align*}
Using the fact that $x \mapsto \sqrt{x}$ is concave, we obtain 
\begin{equation} 
\label{crtl-norm-bsigma2-expect}  
\E[\norm{\bSigma_2}] \leq \E[\norm{\bSigma_2}_F] \leq \sqrt{\E\big[\norm{\bSigma_2}_F^2\big]} \leq  \sqrt{\frac{\cst (U_\gamma \vee K)^2\mu}{(d_uD)^2d_u\vee D}}\leq \frac{\cst (U_\gamma \vee K) \sqrt{\mu}}{d_uD \sqrt{d_u \vee D}}.
\end{equation}

Let us now control the variance of $\norm{\bSigma_2}$. 
We have immediately, using~\eqref{crtl-norm-bsigma2-expect},
\begin{align*}
\V ar [\norm{\bSigma_2}] \leq \E[\norm{\bSigma_2}^2] \leq \E\big[\norm{\bSigma_2}_F^2\big] \leq \frac{\cst (U_\gamma \vee K)^2\mu}{(d_uD)^2d_u\vee D}.
\end{align*}
By Chebyshev's inequality and using~\eqref{crtl-norm-bsigma2-expect}, we have,  with probability at least $1 - 4 /(d_u + D)$,
\begin{align}
\label{final-ctrl-bSigma2}
\norm{\bSigma_2} \leq \frac{\cst (U_\gamma \vee K) \sqrt{\mu}}{d_uD \sqrt{d_u \vee D}} + \frac{\cst (U_\gamma \vee K) \sqrt{\mu}}{d_uD} \leq \frac{\cst (U_\gamma \vee K) \sqrt{\mu}}{d_uD}.
\end{align}
Finally, combining~\eqref{final-ctrl-bSigma1-expnoise} and~\eqref{final-ctrl-bSigma2}, we obtain, with probability at least $1 - 4/(d_u + D)$,
\begin{align*}
\norm{\nabla\mathscr{L}_{\bcY}(\bcM)} 
&\leq \frac{3U_\gamma\sqrt{2\mu}+ 8(U_\gamma \vee K)\cst_{\text{se}}\sqrt{2\cst_{1/2}\log(d_u +D)}  \log(d_u\vee D) + \cst (U_\gamma \vee K) \sqrt{\mu}}{d_uD}
\end{align*}
Then, 
\begin{align*}
\norm{\nabla\mathscr{L}_{\bcY}(\bcM)} 
&\leq \cst \bigg(\frac{(U_\gamma \vee K)\big(\sqrt{\mu} + (\log(d_u\vee D))^{3/2}\big)}{d_uD} \bigg),
\end{align*}
where $\cst$ is an absolute constant.
This finishes the proof of Lemma~\ref{lemma-lemma-ctrl-bSigma}.

\subsection{Proof of Theorem~\ref{theorem-oracle-ranking}}
\label{proof-of-theorem-oracle-ranking}

We start the proof with the following inequality using the fact that
$\widehat{\bcM}$ is the minimizer of the objective function in problem~\eqref{def-estimator-Lip-loss}
\begin{equation*}
0 \leq -({R}_{\bcY}(\widehat{\bcM})+ \Lambda \norm{\widehat{\bcM}}_*) + ({R}_{\bcY}(\ddt{\bcM})+ \Lambda \norm{\ddt{\bcM}}_*).
\end{equation*}
Then, by adding $R(\widehat{\bcM}) - R(\ddt{\bcM}) \geq 0$, we obtain
\begin{align*}
R(\widehat{\bcM}) - R(\ddt{\bcM}) 
&\leq -\big\{\big(R_{\bcY}(\widehat{\bcM}) - {R}_{\bcY}(\ddt{\bcM})\big) - \big(R(\widehat{\bcM}) - R(\ddt{\bcM})\big)\big\}+ \Lambda \big(\norm{\ddt{\bcM}}_* - \norm{\widehat{\bcM}}_*\big).
\end{align*}
\eqref{klopp-thm3} implies $\norm{\bA}_* - \norm{\bB}_* \leq \norm{\mathscr{P}_{\bA}(\bA - \bB)}_*$ and we get 
\begin{align}
\label{equation-excess-risk}
R(\widehat{\bcM}) - R(\ddt{\bcM})
&\leq -\big\{\big(R_{\bcY}(\widehat{\bcM}) - {R}_{\bcY}(\ddt{\bcM})\big) - \big(R(\widehat{\bcM}) - R(\ddt{\bcM})\big)\big\} + \Lambda \norm{\mathscr{P}_{\ddt{\bcM}}(\ddt{\bcM} - \widehat{\bcM})}_*\nonumber\\
&\leq -\big(R_{\bcY}(\widehat{\bcM}) - {R}_{\bcY}(\ddt{\bcM})\big) + \big(R(\widehat{\bcM}) - R(\ddt{\bcM})\big)\\
&\qquad + \Lambda \sqrt{2\rk(\ddt{\bcM})}\norm{\widehat{\bcM} - \ddt{\bcM}}_F\nonumber.
\end{align}

Let us now define the threshold $\nu = \frac{32\big(1 + e\sqrt{{3\rho}/{\varsig \gamma}}\big)\rho\gamma\log(d_u+D)}{3pd_uD}$ and distinguish the two following cases that allows us to obtain an upper bound for the prediction error:\\
\noindent{{\it Case 1:\,\,}} if $R(\widehat{\bcM}) - R(\ddt{\bcM})< \nu$, then the statement of Theorem~\ref{theorem-oracle-ranking} is true.\\
\noindent{{\it Case 2:\,\,}} it remains to consider the case $R(\widehat{\bcM}) - R(\ddt{\bcM}) \geq \nu$.
Lemma~\ref{lemma-diff-Empir-loss} implies
\begin{equation*}
\norm{\widehat{\bcM} - \ddt{\bcM}}_* \leq \sqrt{32\rk(\ddt{\bcM})}\norm{\widehat{\bcM} - \ddt{\bcM}}_{F},
\end{equation*}
then $\widehat{\bcM} \in \mathscr{Q}(\nu, 32\rk(\ddt{\bcM}))$ where  
\begin{align*}
\mathscr{Q}(\nu, r) = \bigg\{\bcQ \in \mathscr{C}_\infty(\gamma):  &\norm{\bcQ - \ddt{\bcM}}_* \leq \sqrt{r}\norm{{\bcQ} - \ddt{\bcM}}_{F}\text{ and } R(\bcQ) - R(\ddt{\bcM}) \geq \nu\bigg\}.
\end{align*}
Using Lemma~\ref{lemma-prob-upper-bound-loss-case}, we have 
\begin{align}
\label{lemma-prob-upper-oracle-ranking}
&R(\widehat{\bcM}) - R(\ddt{\bcM}) -\big(R_{\bcY}(\widehat{\bcM}) - {R}_{\bcY}(\ddt{\bcM})\big) \nonumber\\
&\qquad \leq \frac{R({\widehat{\bcM}}) - R(\ddt{\bcM})}{2} + \frac{\cst \rk(\ddt{\bcM})\rho^2\varsig^{-1}(\E[\norm{\bSigma_R}])^2}{(1/4e) + (1 - 1/\sqrt{4e})\sqrt{{3\rho}/{4\varsig\gamma}}}.
\end{align}
Now, plugging~\eqref{lemma-prob-upper-oracle-ranking} in~\eqref{equation-excess-risk}, we get
\begin{align*}
R(\widehat{\bcM}) - R(\ddt{\bcM}) &\leq \frac{\cst \rk(\ddt{\bcM})\rho^2\varsig^{-1}(\E[\norm{\bSigma_R}])^2}{(1/4e) + (1 - 1/\sqrt{4e})\sqrt{{3\rho}/{4\varsig\gamma}}} + 2\Lambda \sqrt{2\rk(\ddt{\bcM})}\norm{\widehat{\bcM} - \ddt{\bcM}}_F,
\end{align*}
where $\cst = 1024.$
Then using the fact that for any $a, b \in \R$, and $\epsilon > 0$, we have $2ab \leq a^2/(2\epsilon)  + 2\epsilon b^2$, we get for $\epsilon =p\varsig/4$
\begin{align*}
R(\widehat{\bcM}) - R(\ddt{\bcM}) &\leq \frac{\cst d_uDp^{-1}\rk(\ddt{\bcM})\rho^2\varsig^{-1}(\E[\norm{\bSigma_R}])^2}{(1/4e) + (1 - 1/\sqrt{4e})\sqrt{{3\rho}/{4\varsig\gamma}}}\\
&\quad + \Lambda^2d_uD(p\varsig/4)^{-1}\rk(\ddt{\bcM}) + \frac{p\varsig}{2d_uD}\norm{\widehat{\bcM} - \ddt{\bcM}}_F^2\\
&\leq \frac{\cst d_uDp^{-1}\rk(\ddt{\bcM})\rho^2\varsig^{-1}(\E[\norm{\bSigma_R}])^2}{(1/4e) + (1 - 1/\sqrt{4e})\sqrt{{3\rho}/{4\varsig\gamma}}}\\
&\quad + \Lambda^2d_uD(p\varsig/4)^{-1}\rk(\ddt{\bcM}) + \frac{\varsig}{2d_uD}\norm{\widehat{\bcM} - \ddt{\bcM}}_{\Pi,F}^2.
\end{align*}
Using Assumption~\ref{assumption-Bernstein-condition}, we obtain
\begin{align*}
R(\widehat{\bcM}) - R(\ddt{\bcM}) &\leq \frac{2\cst d_uDp^{-1}\rk(\ddt{\bcM})\rho^2\varsig^{-1}(\E[\norm{\bSigma_R}])^2}{(1/4e) + (1 - 1/\sqrt{4e})\sqrt{{3\rho}/{4\varsig\gamma}}}   + 8\Lambda^2d_uD(p\varsig)^{-1}\rk(\ddt{\bcM})\\
&\leq (p\varsig)^{-1}\rk(\ddt{\bcM})d_uD\Big(\frac{\rho^2(\E[\norm{\bSigma_R}])^2}{(1/4e) + (1 - 1/\sqrt{4e})\sqrt{{3\rho}/{4\varsig\gamma}}} + 8\Lambda^2 \Big).
\end{align*}
This finishes the proof of Theorem~\ref{theorem-oracle-ranking}.

\subsection{Proof of Lemma~\ref{lemma-upper-bound-bSigmastar}}
\label{proof-lemma-upper-bound-bSigmastar}

By the nonnegative factor and the sum properties of subdifferential calculus~\citep{boyd:2004:CO:993483}, we write 
\begin{equation*}
  \partial{R}_{\bcY}(\ddt{\bcM}) = \bigg\{\bcG = \frac{1}{d_uD}\sum_{v \in [V]}\sum_{(i,j)\in [d_u]\times [d_v]} B^v_{ij}G^v_{ij}E^v_{ij}: G^v_{ij}\in \partial\ell^v(Y^v_{ij}, \dt{M}^v_{ij})\bigg\}
\end{equation*}
Recall that the sudifferential of $\partial\ell^v(Y^v_{ij}, \dt{M}^v_{ij})$ at the point $\dt{M}^v_{ij}$ is defined as 
\begin{equation*}
  \partial\ell^v(Y^v_{ij}, \dt{M}^v_{ij}) = \{G^v_{ij}: \ell^v(Y^v_{ij}, {Q}^v_{ij}) \geq \ell^v(Y^v_{ij}, \dt{M}^v_{ij}) + G^v_{ij}({Q}^v_{ij} -\dt{M}^v_{ij})\}.
\end{equation*}
Thanks to Assumption~\ref{assumption-lipshitzloss}, we have, for all $G^v_{ij}\in \partial\ell^v(Y^v_{ij}, \dt{M}^v_{ij})$
\begin{equation*}
  |G^v_{ij}({Q}^v_{ij} -\dt{M}^v_{ij})| \leq |\ell^v(Y^v_{ij}, {Q}^v_{ij}) - \ell^v(Y^v_{ij}, \dt{M}^v_{ij})| \leq \rho_v |{Q}^v_{ij} - \dt{M}^v_{ij}|,
\end{equation*}
In particular, with ${Q}^v_{ij}\neq\dt{M}^v_{ij}$ for all $v\in[V]$ and $(i,j) \in [d_u]\times[d_v]$, we get $|G^v_{ij}| \leq \rho_v$. Then, any subgradient $\bcG$ of ${R}_{\bcY}$ has entries bounded by $\rho/(d_uD)$ (recall $\rho = \max_{v\in [V]} \rho_v$).
By a triangular inequality and the convexity of $\norm{\cdot}$, we have 
\begin{align*}
  \norm{\bcG} &\leq \norm{\bcG - \E[\bcG]} + \norm{\E[\bcG]}\\
              &\leq  \norm{\bcG - \E[\bcG]} + \E[\norm{\bcG}],
\end{align*}
for any subgradient $\bcG$ of ${R}_{\bcY}$.
On the one hand, we use the fact that $\E[\norm{\bcG}] \leq \E[\norm{\bcG}_F] \leq \sqrt{\E[\norm{\bcG}_F^2]}$.
Using~\eqref{upper-bound-marginal}, we have 
\begin{align*}
  \E[\norm{\bcG}_F^2] &\leq \frac 1{(d_uD)^2}\sum_{v\in[V]}\sum_{(i,j)\in[d_u]\times[d_v]}\rho_v^2\E[B_{ij}^v]\\
  &\leq \frac{\rho^2}{(d_uD)^2}\sum_{v\in[V]}\sum_{(i,j)\in[d_u]\times[d_v]}\pi^v_{ij}\\
  &\leq \frac{\rho^2\mu}{(d_uD)^2}.
\end{align*}
Now we apply Proposition~\ref{proposition-ctrl-spect-norm} to $\bcG - \E[\bcG]$.
Taking into account~\eqref{upper-bound-marginal}, we upper bound the constants $\kappa_1, \kappa_2$ and $\kappa_*$ as follows: 
\begin{align*}
\kappa_1  &= \frac{1}{d_uD}\max_{i \in[d_u]}\sqrt{\sum_{v\in[V]}\sum_{j\in[d_v]} \E[(B^v_{ij}G^v_{ij} - \E[B^v_{ij}G^v_{ij}								])^2]}\\
&\leq \frac{2\rho}{d_uD}\max_{i \in[d_u]}\sqrt{\sum_{v\in[V]}\sum_{j\in[d_v]} \pi^v_{ij}}\\
&\leq \frac{2\rho\sqrt{\mu}}{d_uD},
\end{align*}
\begin{align*}
\kappa_2  &= \frac{1}{d_uD}\max_{v \in[V]}\max_{j \in[d_v]}\sqrt{\sum_{i\in[d_u]} \E[(B^v_{ij}G^v_{ij} - \E[B^v_{ij}G^v_{ij}								])^2]}\\
&\leq \frac{2\rho}{d_uD}\max_{v \in[V]}\max_{j \in[d_v]}\sqrt{\sum_{i\in[d_u]} \pi^v_{ij}}\\
&\leq \frac{2\rho\sqrt{\mu}}{d_uD},
\end{align*}
and $\kappa_* = \frac{1}{d_uD}\max_{v\in[V]}\max_{(i,j)\in[d_u] \times[d_v]}|B^v_{ij}G^v_{ij} - \E[B^v_{ij}G^v_{ij}								]| \leq \frac{2\rho}{d_uD}$.
Now, choose $\epsilon = 1/2$ in Proposition~\ref{proposition-ctrl-spect-norm}, then we obtain, with probability at least $1 - 4(d_u \wedge D) e^{-x^2},$
\begin{align}
\label{inequltiybisgmastar}
\norm{\bcG - \E[\bcG]}
&\leq \frac{6\rho\sqrt{2\mu}+ 2\rho\sqrt{\cst_{1/2}}x}{d_uD}.
\end{align}
Setting $x = \sqrt{2\log(d_u +D)}$ in~\eqref{inequltiybisgmastar}, we get with probability at least $1 - 4/(d_u +D)$,
\begin{align}
\label{final-ctrl-bSigma1}
\norm{\bcG}
&\leq \frac{(1+6\sqrt{2})\rho\sqrt{\mu}+ 2\rho\sqrt{\cst_{1/2}}\sqrt{2\log(d_u+D)}}{d_uD},
\end{align}
for any subgradient $\bcG$ of ${R}_{\bcY}(\ddt{\bcM})$.

\section{Technical Lemmas}

In this section, we provide several technical lemmas, which are used for proving our main results.

\subsection{Useful lemmas for the proof of Theorem~\ref{theorem1}}
\label{appendix-useful-lemmas}

\begin{lemma}
\label{lemma-crtl-nucl-2}
Let $\bcA, \bcB \in \mathscr{C}_\infty(\gamma)$.
Assume that $\lambda \geq 2\norm{\nabla \mathscr{L}_{\bcY}(\bcB)}$, and $\mathscr{L}_{\bcY}(\bcA) + \lambda \norm{\bcA}_* \leq \mathscr{L}_{\bcY}(\bcB) + \lambda\norm{\bcB}_*.$ 
Then,
\begin{itemize}
\item [(i)] $\norm{\mathscr{P}^\perp_{\bcB}(\bcA - \bcB)}_ * \leq 3 \norm{\mathscr{P}_{\bcB}(\bcA - \bcB)}_*$,
\item [(ii)] $\norm{\bcA - \bcB}_* \leq 4\sqrt{2\rk(\bcB)}\norm{\bcA - \bcB}_F$.
\end{itemize}
\end{lemma}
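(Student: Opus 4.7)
The plan is to combine the optimality-type inequality with the convexity of $\mathscr{L}_{\bcY}$, and then exploit the decomposition properties of the nuclear norm around $\bcB$ recorded in the ``basic facts about matrices'' block.

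First I would translate the hypothesis $\mathscr{L}_{\bcY}(\bcA) + \lambda\norm{\bcA}_* \leq \mathscr{L}_{\bcY}(\bcB) + \lambda\norm{\bcB}_*$ into a nuclear-norm inequality. Since $\mathscr{L}_{\bcY}$ is convex (because each $G^v$ is convex), the first-order inequality yields
\begin{equation*}
\langle \nabla\mathscr{L}_{\bcY}(\bcB), \bcA - \bcB\rangle \leq \mathscr{L}_{\bcY}(\bcA) - \mathscr{L}_{\bcY}(\bcB) \leq \lambda\bigl(\norm{\bcB}_* - \norm{\bcA}_*\bigr).
\end{equation*}
Trace duality together with the hypothesis $\lambda \geq 2\norm{\nabla\mathscr{L}_{\bcY}(\bcB)}$ gives $|\langle \nabla\mathscr{L}_{\bcY}(\bcB), \bcA - \bcB\rangle| \leq \tfrac{\lambda}{2}\norm{\bcA - \bcB}_*$, hence
\begin{equation*}
\norm{\bcA}_* - \norm{\bcB}_* \leq \tfrac{1}{2}\norm{\bcA - \bcB}_*.
\end{equation*}

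Next I would apply inequality~\eqref{klopp-thm3} with $\bA = \bcB$ and $\bB = \bcA$ to get
\begin{equation*}
\norm{\bcB}_* - \norm{\bcA}_* \leq \norm{\mathscr{P}_{\bcB}(\bcA - \bcB)}_* - \norm{\mathscr{P}^\perp_{\bcB}(\bcA - \bcB)}_*,
\end{equation*}
and combine it with the triangle bound $\norm{\bcA - \bcB}_* \leq \norm{\mathscr{P}_{\bcB}(\bcA - \bcB)}_* + \norm{\mathscr{P}^\perp_{\bcB}(\bcA - \bcB)}_*$. Rearranging the two resulting inequalities gives exactly (i): $\norm{\mathscr{P}^\perp_{\bcB}(\bcA - \bcB)}_* \leq 3\norm{\mathscr{P}_{\bcB}(\bcA - \bcB)}_*$.

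For (ii), using (i) together with the triangle inequality yields $\norm{\bcA - \bcB}_* \leq 4\norm{\mathscr{P}_{\bcB}(\bcA - \bcB)}_*$. Then I would apply the standard bound $\norm{\bcW}_* \leq \sqrt{\rk(\bcW)}\,\norm{\bcW}_F$ to $\bcW = \mathscr{P}_{\bcB}(\bcA - \bcB)$, using the rank fact~\eqref{fact-1-matrix} that $\rk(\mathscr{P}_{\bcB}(\bcA - \bcB)) \leq 2\rk(\bcB)$, and finally use that $\mathscr{P}_{\bcB}$ is an orthogonal projector in Frobenius norm to bound $\norm{\mathscr{P}_{\bcB}(\bcA - \bcB)}_F \leq \norm{\bcA - \bcB}_F$, which closes the proof.

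The only delicate point, and the only one where one must be careful, is checking the direction of signs when invoking~\eqref{klopp-thm3}; swapping the roles of $\bA$ and $\bB$ in that identity is essential so that the projection is taken with respect to the reference matrix $\bcB$. Otherwise the argument is essentially a standard cone-condition computation from the low-rank matrix completion literature.
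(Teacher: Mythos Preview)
Your proposal is correct and follows essentially the same route as the paper's proof: convexity of $\mathscr{L}_{\bcY}$ plus trace duality to get $\norm{\bcA}_* - \norm{\bcB}_* \leq \tfrac{1}{2}\norm{\bcA - \bcB}_*$, then inequality~\eqref{klopp-thm3} (applied with the roles swapped so the projections are onto the spaces of $\bcB$) combined with the triangle decomposition of $\norm{\bcA - \bcB}_*$ to obtain (i), and finally the rank bound~\eqref{fact-1-matrix} together with $\norm{\mathscr{P}_{\bcB}(\cdot)}_F \leq \norm{\cdot}_F$ to obtain (ii). The only cosmetic difference is that the paper bundles the last two steps into the single inequality $\norm{\mathscr{P}_{\bcB}(\bcA - \bcB)}_* \leq \sqrt{2\rk(\bcB)}\,\norm{\bcA - \bcB}_F$, whereas you spell out the rank and Frobenius-projection pieces separately.
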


\begin{proof}
We have  $\mathscr{L}_{\bcY}(\bcB)  - \mathscr{L}_{\bcY}(\bcA) \geq \lambda(\norm{\bcA}_* - \norm{\bcB}_*)$. 
\eqref{klopp-thm3} implies 
\begin{equation*}
\mathscr{L}_{\bcY}(\bcB)  - \mathscr{L}_{\bcY}(\bcA) \geq \lambda\big(\norm{\mathscr{P}^\perp_{\bcB}(\bcA - \bcB)}_* - \norm{\mathscr{P}_{\bcB}(\bcA - \bcB)}_*\big).
\end{equation*}
Moreover, by convexity of $\mathscr{L}_{\bcY}(\cdot)$ and the duality between $\norm{\cdot}_*$ and $\norm{\cdot}$ we obtain
\begin{align*}
\mathscr{L}_{\bcY}(\bcB)  - \mathscr{L}_{\bcY}(\bcA) &\leq \inr{\nabla\mathscr{L}_{\bcY}(\bcB),\bcB - \bcA}
\leq \norm{\nabla\mathscr{L}_{\bcY}(\bcB)} \norm{\bcB - \bcA}_*
\leq \frac{\lambda}{2}\norm{\bcB - \bcA}_*.
\end{align*}
Therefore,
\begin{equation}
\label{proof-lemma-1i}
\norm{\mathscr{P}^\perp_{\bcB}(\bcA - \bcB)}_* \leq \norm{\mathscr{P}_{\bcB}(\bcA - \bcB)}_* + \frac{1}{2}\norm{\bcA - \bcB}_*
\end{equation}
Using the triangle inequality, we get 
\begin{equation*}
\norm{\mathscr{P}^\perp_{\bcB}(\bcA - \bcB)}_* \leq 3\norm{\mathscr{P}_{\bcB}(\bcA - \bcB)}_*,
\end{equation*}
which proves $(i)$.
To prove $(ii)$, note that $\norm{\mathscr{P}_{\bcB}(\bcA)}_* \leq \sqrt{2\rk(\bcB)}\norm{\bcA}_F$, and $(i)$ imply
\begin{align*}
\norm{\bcA  - \bcB}_* 
& \leq 4 \sqrt{2\rk(\bcB)}\norm{\bcA - \bcB)}_F.
\end{align*}
\end{proof}

\begin{lemma}
\label{lemma-prob-upper-bound}
Let $\beta = \frac{946\gamma^2\log (d_u + D)}{pd_uD}$. Then, for all $\bcW \in \mathscr{C}(\beta, r)$,
\begin{equation*}
  \Big|\Delta^2_{\bcY}({\bcW}, \bcM) - (d_uD)^{-1}\norm{{\bcW} - \bcM}_{\Pi,F}^2]\Big| \leq \frac{(d_uD)^{-1}\norm{{\bcW} - \bcM}_{\Pi,F}^2}{2} + {1392r\gamma^2}(\E[\norm{\bSigma_R}])^2 + \frac{5567\gamma^2}{{d_uD}p}
\end{equation*}
with probability at least $1 - 4/(d_u+D)$.
\end{lemma}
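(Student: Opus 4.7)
The plan is to bound, uniformly over $\bcW \in \mathscr{C}(\beta, r)$, the centered empirical process
\[
Z(\bcW) = \Delta^2_{\bcY}(\bcW,\bcM) - (d_uD)^{-1}\norm{\bcW - \bcM}_{\Pi,F}^2 = \frac{1}{d_uD}\sum_{v \in [V]}\sum_{(i,j) \in [d_u]\times[d_v]} (B_{ij}^v - \pi_{ij}^v)(W_{ij}^v - M_{ij}^v)^2,
\]
via the standard combination of symmetrization and contraction (to produce the spectral quantity $\bSigma_R$), Bousquet's version of Talagrand's concentration inequality (to upgrade expectations to high-probability statements), and a dyadic peeling argument (to remove an auxiliary localization radius). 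This is the workhorse strategy in the low-rank matrix completion literature; the delicate point here is tracking numerical constants so as to obtain the multiplicative factor $1/2$ on the first term of the right-hand side.

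First, I would fix $T \geq \beta$ and restrict to the slice $\mathscr{C}(\beta, r, T) = \{\bcW \in \mathscr{C}(\beta, r) : (d_uD)^{-1}\norm{\bcW - \bcM}_{\Pi,F}^2 \leq T\}$. By symmetrization with an i.i.d.\ Rademacher array $(\varepsilon_{ij}^v)$, $\E\sup_{\bcW}|Z(\bcW)| \leq \frac{2}{d_uD}\E\sup_{\bcW}\big|\sum_{v,(i,j)} \varepsilon_{ij}^v B_{ij}^v (W_{ij}^v - M_{ij}^v)^2\big|$. Conditionally on $(B_{ij}^v)$, each map $t \mapsto B_{ij}^v t^2$ is $4\gamma$-Lipschitz on $[-2\gamma, 2\gamma]$ and vanishes at $0$; the Ledoux--Talagrand contraction principle therefore yields $\E\sup_{\bcW}|Z(\bcW)| \leq 16\gamma\, \E\sup_{\bcW}|\inr{\bSigma_R, \bcW - \bcM}|$. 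Trace duality combined with the nuclear-norm constraint defining $\mathscr{C}(\beta, r, T)$ and Assumption~\ref{assump-prob-items} (which gives $\norm{\bcW - \bcM}_F \leq \sqrt{Td_uD/p}$) then produces $\E\sup_{\bcW}|Z(\bcW)| \leq 16\gamma \sqrt{rTd_uD/p}\,\E[\norm{\bSigma_R}]$.

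Next, I would invoke Bousquet's sharp form of Talagrand's inequality for the bounded empirical process over $\mathscr{C}(\beta, r, T)$. The summands are uniformly bounded by $4\gamma^2/(d_uD)$ and the total variance is controlled by $4\gamma^2T/(d_uD)$, so with deviation parameter of order $\log(d_u+D)$ the inequality yields, on an event of probability at least $1 - 2/(d_u+D)^2$,
\[
\sup_{\bcW \in \mathscr{C}(\beta, r, T)} |Z(\bcW)| \leq 2\,\E\sup_{\bcW}|Z(\bcW)| + \cst\,\gamma\sqrt{T\log(d_u+D)/(d_uD)} + \cst\,\gamma^2\log(d_u+D)/(d_uD).
\]
Two applications of the weighted AM--GM inequality $2ab \leq a^2/(2\alpha) + 2\alpha b^2$, with $\alpha$ tuned to a suitable fraction of $T$, absorb both $T$-dependent contributions into $T/2$, leaving a residual of order $r\gamma^2 d_uD\,(\E[\norm{\bSigma_R}])^2/p + \gamma^2\log(d_u+D)/(pd_uD)$.

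Finally, a dyadic peeling step eliminates the auxiliary radius $T$: tile $[\beta, 4\gamma^2]$ by shells $T_\ell = 2^\ell \beta$ with $\ell = 0, 1, \ldots, L$ where $L \lesssim \log(d_u+D)$, apply the slice-wise bound at each $T_\ell$ with the deviation parameter inflated by $\log L$ so that a union bound leaves a total failure probability of at most $4/(d_u+D)$, and note that every $\bcW \in \mathscr{C}(\beta, r)$ lies in some shell with $T_\ell \leq 2(d_uD)^{-1}\norm{\bcW - \bcM}_{\Pi,F}^2$. The calibration $\beta = 946\gamma^2\log(d_u+D)/(pd_uD)$ is designed precisely so that the $\log$-residual from Talagrand can be rewritten as the explicit constant $5567\gamma^2/(pd_uD)$ in the final inequality. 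The main technical obstacle is precisely this constant bookkeeping across Bousquet's inequality, the AM--GM splittings, and the peeling union bound, which must all be tuned simultaneously to yield both the sharp prefactor $1/2$ and the stated numerical constants $1392$ and $5567$.
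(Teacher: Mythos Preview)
Your proposal is correct and follows essentially the same route as the paper: a geometric peeling of $\mathscr{C}(\beta,r)$ by levels of $(d_uD)^{-1}\norm{\bcW-\bcM}_{\Pi,F}^2$, combined with a slice-wise deviation bound obtained via symmetrization, the Ledoux--Talagrand contraction principle, and a Talagrand/Bousquet-type concentration inequality. The paper packages the slice-wise step as a separate lemma (Lemma~\ref{lemma-ctrl-processZUpperbd}) whose proof it defers to Klopp (2015), whereas you spell out those ingredients explicitly; the paper also peels with base $\alpha=e$ and lets the exponent in the concentration bound grow with the level (so the geometric sum converges without any inflation of the deviation parameter), while you peel dyadically over a finite range and union-bound over $O(\log(d_u+D))$ shells. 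Both organizations yield the same final statement, and your AM--GM splittings to extract the factor $1/2$ mirror exactly what the paper does when converting $\boldsymbol{\kappa}$ with the choice $\alpha=e$, $\eta=1/(4e)$ into the constants $1392$ and $5567$.
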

\begin{proof}
We use a standard peeling argument. 
For any $\alpha > 1$ and $0 < \eta < 1 /2\alpha$, we define 
\begin{equation*}
\boldsymbol{\kappa} = \frac{1}{1/(2\alpha) - \eta}\Big(128\gamma^2r(\E[\norm{\bSigma_R}])^2 + \frac{512\gamma^2}{{d_uD}p}\Big)
\end{equation*}
and we consider the event
\begin{equation*}
\mathscr{W} = \bigg\{\exists\,\bcW \in \mathscr{C}(\beta, r): \Big|\Delta^2_{\bcY}({\bcW}, \bcM) - (d_uD)^{-1}\norm{{\bcW} - \bcM}_{\Pi,F}^2\Big| > \frac{(d_uD)^{-1}\norm{{\bcW} - \bcM}_{\Pi,F}^2}{2}  + \boldsymbol{\kappa} \bigg\}.
\end{equation*}
For $s \in \mathbb{N}^*$, set
\begin{equation*}
\mathcal{R}_s = \Big\{\bcW \in \mathscr{C}(\beta, r): \alpha^{s-1} \beta \leq (d_uD)^{-1}\norm{{\bcW} - \bcM}_{\Pi,F}^2 \leq \alpha^s\beta\Big\}. 
\end{equation*}
If the event $\mathscr{W}$ holds for some matrix $\bcW \in \mathscr{C}(\beta, r),$ then $\bcW$ belongs to some $\mathcal{R}_s$ and 
\begin{align*}
\Big|\Delta^2_{\bcY}({\bcW}, \bcM) - (d_uD)^{-1}\norm{{\bcW} - \bcM}_{\Pi,F}^2\Big| &\geq  \frac{(d_uD)^{-1}\norm{{\bcW} - \bcM}_{\Pi,F}^2}{2}  + \boldsymbol{\kappa}\\
  &\geq \frac{1}{2\alpha}\alpha^{s}\beta + \boldsymbol{\kappa}.
\end{align*}
For $\theta \geq \beta$ consider the following set of matrices
\begin{equation*}
\mathscr{C}(\beta, r, \theta) = \Big\{\bcW \in \mathscr{C}(\beta, r): (d_uD)^{-1}\norm{{\bcW} - \bcM}_{\Pi,F}^2 \leq \theta\Big\},
\end{equation*}
and the following event
\begin{equation*}
\mathscr{W}_s = \bigg\{\exists \, \bcW \in \mathscr{C}(\beta, r, \theta): \Big|\Delta^2_{\bcY}({\bcW}, \bcM) - (d_uD)^{-1}\norm{{\bcW} - \bcM}_{\Pi,F}^2\Big| \geq \frac{1}{2\alpha}\alpha^{s}\beta + \boldsymbol{\kappa}\bigg\}.
\end{equation*}
Note that $\bcW \in \mathscr{W}_s$ implies that $\bcW \in \mathscr{C}(\beta,r, \alpha^s\beta)$. 
Then, we get $\mathscr{W} \subset \cup_s \mathscr{W}_s$. 
Thus, it is enough to estimate the probability of the simpler event $\mathscr{W}_s$ and then apply a the union bound.
Such an estimation is given by the following lemma:
\begin{lemma}
\label{lemma-ctrl-processZUpperbd}
Let 
\begin{equation*}
\bcZ_\theta = \sup_{\bcW \in \mathscr{C}(\beta, r, \theta)} \Big|\Delta^2_{\bcY}({\bcW}, \bcM) - (d_uD)^{-1}\norm{{\bcW} - \bcM}_{\Pi,F}^2\Big|.
\end{equation*}
Then, we have  
\begin{equation*}
  \P\big[\bcZ_\theta > \frac{\theta}{2 \alpha}  +  \boldsymbol{\kappa}\big] \leq 4\exp\bigg(-\frac{pd_uD\eta^2\theta}{8\gamma^2}\bigg).
\end{equation*}
\end{lemma}
The proof of Lemma~\ref{lemma-ctrl-processZUpperbd} follows along the same lines of Lemma 10 in~\cite{klopp2015}.
We now apply an union bound argument combined to Lemma~\ref{lemma-ctrl-processZUpperbd}, we get 
\begin{align*}
\P[\mathscr{W}] \leq \P[\cup_{s=1}^\infty\mathscr{W}_s] &\leq 4\sum_{s=1}^\infty \exp\bigg(-\frac{pd_uD\eta^2\alpha^s\beta}{8\gamma^2}\bigg)\\
&\leq 4\sum_{s=1}^\infty \exp\bigg(-\frac{pd_uD\eta^2\beta\log \alpha}{8\gamma^2}s \bigg)\\
&\leq \frac{4\exp\bigg(-\frac{pd_uD\eta^2\beta\log \alpha}{8\gamma^2} \bigg)}{1 - \exp\bigg(-\frac{pd_uD\eta^2\beta\log \alpha}{8\gamma^2} \bigg)}.
\end{align*}
By choosing $\alpha = e, \eta= 1/4e$ and $\beta$ as stated we get the desired result.
\end{proof}

\subsection{Useful lemmas for the proof of Theorem~\ref{theorem-oracle-ranking}}
\label{appendix-useful-lemmas-theorem-orcale-ranking}

\begin{lemma}
\label{lemma-diff-Empir-loss}
Suppose $\Lambda \geq 2 \sup\{\norm{\bcG}: \bcG \in \partial{R}_{\bcY}(\ddt{\bcM})\}.$
Then
\begin{equation*}
\norm{\widehat{\bcM} - \ddt{\bcM}}_* \leq 4\sqrt{2\rk(\ddt{\bcM})}\norm{\widehat{\bcM}  - \ddt{\bcM}}_F.
\end{equation*}
\end{lemma}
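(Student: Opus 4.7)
The plan is to mirror the argument used for Lemma~\ref{lemma-crtl-nucl-2} in the smooth exponential family setting, replacing $\nabla\mathscr{L}_{\bcY}(\bcM)$ by an arbitrary subgradient $\bcG\in\partial R_{\bcY}(\ddt{\bcM})$, and then invoking the subgradient inequality for the convex functional $R_{\bcY}$.

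First, since $\widehat{\bcM}$ is optimal and $\ddt{\bcM}$ is feasible for problem~\eqref{def-estimator-Lip-loss}, I would write the basic inequality
\begin{equation*}
R_{\bcY}(\widehat{\bcM})+\Lambda\norm{\widehat{\bcM}}_{*}\le R_{\bcY}(\ddt{\bcM})+\Lambda\norm{\ddt{\bcM}}_{*},
\end{equation*}
so that $\Lambda(\norm{\widehat{\bcM}}_{*}-\norm{\ddt{\bcM}}_{*})\le R_{\bcY}(\ddt{\bcM})-R_{\bcY}(\widehat{\bcM})$. Next, by convexity of $R_{\bcY}$, for any $\bcG\in\partial R_{\bcY}(\ddt{\bcM})$ one has $R_{\bcY}(\widehat{\bcM})\ge R_{\bcY}(\ddt{\bcM})+\inr{\bcG,\widehat{\bcM}-\ddt{\bcM}}$, hence by trace duality and the hypothesis $\Lambda\ge 2\sup\{\norm{\bcG}:\bcG\in\partial R_{\bcY}(\ddt{\bcM})\}$,
\begin{equation*}
R_{\bcY}(\ddt{\bcM})-R_{\bcY}(\widehat{\bcM})\le |\inr{\bcG,\widehat{\bcM}-\ddt{\bcM}}|\le \norm{\bcG}\norm{\widehat{\bcM}-\ddt{\bcM}}_{*}\le \tfrac{\Lambda}{2}\norm{\widehat{\bcM}-\ddt{\bcM}}_{*}.
\end{equation*}

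Second, I would combine this with inequality~\eqref{klopp-thm3} applied to the pair $(\ddt{\bcM},\widehat{\bcM})$, which gives $\norm{\ddt{\bcM}}_{*}-\norm{\widehat{\bcM}}_{*}\le \norm{\mathscr{P}_{\ddt{\bcM}}(\widehat{\bcM}-\ddt{\bcM})}_{*}-\norm{\mathscr{P}^{\perp}_{\ddt{\bcM}}(\widehat{\bcM}-\ddt{\bcM})}_{*}$. Rearranging yields
\begin{equation*}
\norm{\mathscr{P}^{\perp}_{\ddt{\bcM}}(\widehat{\bcM}-\ddt{\bcM})}_{*}\le \norm{\mathscr{P}_{\ddt{\bcM}}(\widehat{\bcM}-\ddt{\bcM})}_{*}+\tfrac{1}{2}\norm{\widehat{\bcM}-\ddt{\bcM}}_{*}.
\end{equation*}
Using the triangle inequality $\norm{\widehat{\bcM}-\ddt{\bcM}}_{*}\le \norm{\mathscr{P}_{\ddt{\bcM}}(\widehat{\bcM}-\ddt{\bcM})}_{*}+\norm{\mathscr{P}^{\perp}_{\ddt{\bcM}}(\widehat{\bcM}-\ddt{\bcM})}_{*}$ on the right-hand side and simplifying gives the cone condition $\norm{\mathscr{P}^{\perp}_{\ddt{\bcM}}(\widehat{\bcM}-\ddt{\bcM})}_{*}\le 3\norm{\mathscr{P}_{\ddt{\bcM}}(\widehat{\bcM}-\ddt{\bcM})}_{*}$.

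Finally, I would conclude by invoking~\eqref{fact-1-matrix} through the standard bound $\norm{\mathscr{P}_{\bcA}(\bcB)}_{*}\le\sqrt{2\rk(\bcA)}\norm{\bcB}_{F}$: triangle inequality together with the cone condition gives $\norm{\widehat{\bcM}-\ddt{\bcM}}_{*}\le 4\norm{\mathscr{P}_{\ddt{\bcM}}(\widehat{\bcM}-\ddt{\bcM})}_{*}\le 4\sqrt{2\rk(\ddt{\bcM})}\norm{\widehat{\bcM}-\ddt{\bcM}}_{F}$, which is the desired inequality. The only delicate point is the use of an arbitrary $\bcG\in\partial R_{\bcY}(\ddt{\bcM})$ in place of the gradient; but since the hypothesis on $\Lambda$ is uniform over $\partial R_{\bcY}(\ddt{\bcM})$ and the subgradient inequality holds for every such $\bcG$, the argument goes through verbatim with no substantive new difficulty compared to the smooth case of Lemma~\ref{lemma-crtl-nucl-2}.
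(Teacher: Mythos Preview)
Your proof is correct and follows essentially the same route as the paper's own argument: optimality of $\widehat{\bcM}$ plus the subgradient inequality for $R_{\bcY}$ at $\ddt{\bcM}$ and trace duality give $\norm{\widehat{\bcM}}_*-\norm{\ddt{\bcM}}_*\le\tfrac12\norm{\widehat{\bcM}-\ddt{\bcM}}_*$, then~\eqref{klopp-thm3} and the triangle inequality yield the cone condition $\norm{\mathscr{P}^\perp_{\ddt{\bcM}}(\widehat{\bcM}-\ddt{\bcM})}_*\le 3\norm{\mathscr{P}_{\ddt{\bcM}}(\widehat{\bcM}-\ddt{\bcM})}_*$, and the rank bound~\eqref{fact-1-matrix} concludes. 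The paper proceeds identically, so there is nothing to add.
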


\begin{proof}
For any subgradient $\bcG$ of ${R}_{\bcY}(\ddt{\bcM})$, we have ${R}_{\bcY}(\widehat{\bcM} ) \geq {R}_{\bcY}(\ddt{\bcM}) + \inr{\bcG, \widehat{\bcM}  - \ddt{\bcM}}.$
Then, the definition of the estimator $\widehat{\bcM}$, entails $R_{\bcY}(\ddt{\bcM}) - {R}_{\bcY}(\widehat{\bcM}) \geq \Lambda(\norm{\widehat{\bcM}}_* - \norm{\ddt{\bcM}}_*),$
hence $\inr{\bcG,\ddt{\bcM}-\widehat{\bcM}} \geq \Lambda(\norm{\widehat{\bcM}}_*-\norm{\ddt{\bcM}}_*).$
The duality between $\norm{\cdot}_*$ and $\norm{\cdot}$ yields 
\begin{equation*}
  \Lambda(\norm{\widehat{\bcM}}_*-\norm{\ddt{\bcM}_*}) \leq \norm{\bcG} \norm{\ddt{\bcM}-\widehat{\bcM}}_* \leq \frac \Lambda2 \norm{\ddt{\bcM}-\widehat{\bcM}}_*
\end{equation*}
then $\norm{\widehat{\bcM}}_*-\norm{\ddt{\bcM}_*} \leq \frac 12 \norm{\ddt{\bcM}-\widehat{\bcM}}_*.$
Now,~\eqref{klopp-thm3} implies 
\begin{equation*}
  \norm{\mathscr{P}^\perp_{\ddt{\bcM}}(\ddt{\bcM} - \widehat{\bcM})}_* \leq \norm{\mathscr{P}_{\ddt{\bcM}}(\ddt{\bcM} - \widehat{\bcM})}_* + \frac 12 \norm{\ddt{\bcM}-\widehat{\bcM}}_* \leq 3 \norm{\mathscr{P}_{\ddt{\bcM}}(\ddt{\bcM} - \widehat{\bcM})}_*.
\end{equation*}
Therefore $\norm{\ddt{\bcM} - \widehat{\bcM}}_* \leq 4 \norm{\mathscr{P}_{\ddt{\bcM}}(\ddt{\bcM} - \widehat{\bcM})}_*$.
Since $\norm{\mathscr{P}_{\ddt{\bcM}}(\ddt{\bcM} - \widehat{\bcM})}_* \leq \sqrt{2\rk(\ddt{\bcM})} \norm{\ddt{\bcM} - \widehat{\bcM}}_F$, we establish the proof of Lemma~\ref{lemma-diff-Empir-loss}. 
\end{proof}

\begin{lemma}
\label{lemma-prob-upper-bound-loss-case}
Let 
\begin{equation*}
  \nu = \frac{32\big(1 + e\sqrt{{3\rho}/{\varsig \gamma}}\big)\rho\gamma\log(d_u+D)}{3pd_uD},
 \end{equation*} 
then, with probability at least $1 - 4/(d_u+D)$, the following holds uniformly over $\bcQ \in \mathscr{Q}(\nu, r)$
\begin{align*}
  \Big|\big(R_{\bcY}({\bcQ})& - {R}_{\bcY}(\ddt{\bcM})\big) - \big(R({\bcQ}) - R(\ddt{\bcM})\big)\Big|\\
  &\leq \frac{R({\bcQ}) - R(\ddt{\bcM})}{2} + \frac{16}{(1/4e) + (1 - 1/\sqrt{4e})\sqrt{{3\rho}/{4\varsig\gamma}}}{r\rho^2(p\varsig)^{-1}}(\E[\norm{\bSigma_R}])^2.
\end{align*}
\end{lemma}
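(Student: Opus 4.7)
The argument is a peeling-type empirical process bound, parallel in structure to the proof of Lemma~\ref{lemma-prob-upper-bound} but with the Bregman-type quadratic form replaced by the excess risk. Set
$$Z(\bcQ) \;=\; \big(R_{\bcY}(\bcQ) - R_{\bcY}(\ddt{\bcM})\big) - \big(R(\bcQ) - R(\ddt{\bcM})\big),$$
so that the object to control is $\sup_{\bcQ \in \mathscr{Q}(\nu,r)} |Z(\bcQ)|$. First I would localize: for $\alpha>1$ and $s\ge 1$, slice $\mathscr{Q}(\nu,r)$ into $\mathcal{R}_s = \{\bcQ : \alpha^{s-1}\nu \le R(\bcQ)-R(\ddt{\bcM}) \le \alpha^s\nu\}$ and consider, for each $\theta \ge \nu$, the set $T(\theta) = \{\bcQ \in \mathscr{Q}(\nu,r): R(\bcQ)-R(\ddt{\bcM}) \le \theta\}$. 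On $T(\theta)$ the Bernstein Assumption~\ref{assumption-Bernstein-condition} yields $\norm{\bcQ-\ddt{\bcM}}_{\Pi,F}^{2} \le d_uD\theta/\varsig$; Assumption~\ref{assump-prob-items} then converts this into $\norm{\bcQ-\ddt{\bcM}}_F^2 \le d_uD\theta/(p\varsig)$ and, via the cone constraint, into $\norm{\bcQ-\ddt{\bcM}}_* \le \sqrt{r d_uD\theta/(p\varsig)}$.

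Next I would control $W_\theta := \sup_{T(\theta)} |Z(\bcQ)|$ in expectation by symmetrization followed by the Ledoux--Talagrand contraction principle: since $x\mapsto \ell^v(Y_{ij}^v,x)$ is $\rho$-Lipschitz, one obtains
$$\E[W_\theta]\;\le\; 4\rho\,\E\Big[\sup_{T(\theta)} |\inr{\bSigma_R,\bcQ-\ddt{\bcM}}|\Big]\;\le\; 4\rho\,\E[\norm{\bSigma_R}]\,\sqrt{\tfrac{r d_uD\theta}{p\varsig}}$$
after trace duality and the Step~1 bound on the nuclear norm. A standard $ab\le a^2/(4\alpha)+\alpha b^2$ split turns this into $\E[W_\theta]\le \theta/(4\alpha)+ C_\alpha r\rho^{2}(\E\norm{\bSigma_R})^2/(p\varsig)$. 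For deviation, each summand of $Z$ is uniformly bounded by $2\rho\gamma/(d_uD)$ (since $\norm{\bcQ}_\infty,\norm{\ddt{\bcM}}_\infty\le\gamma$) and the variance proxy is at most $\rho^2\theta/(\varsig d_uD)$ on $T(\theta)$. Talagrand's inequality then yields, with probability $\ge 1-e^{-t}$,
$$W_\theta\;\le\; 2\,\E[W_\theta]+C\sqrt{\tfrac{\rho^{2}\theta\,t}{\varsig d_uD}}+\tfrac{C\rho\gamma t}{d_uD},$$
and a second application of $2ab\le \theta/(2\alpha)+\text{const}$ absorbs the subgaussian term.

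Finally I would take $t$ of order $\eta^{2}d_uD\theta/\gamma^{2}$ (with small $\eta$) and $\alpha=e$, giving a bound $\P[W_{\alpha^s\nu} \ge \alpha^s\nu/2 + \text{target rhs}] \le \exp(-c\,d_uD\alpha^s\nu/\gamma^{2})$ on each slice; summing the geometric series and using the stated value $\nu = 32(1+e\sqrt{3\rho/(\varsig\gamma)})\rho\gamma\log(d_u+D)/(3pd_uD)$ reduces the total probability to at most $4/(d_u+D)$, which is exactly the desired level. The main obstacle is arithmetic rather than conceptual: tuning the three constants ($\alpha$, $\eta$, and the hidden factor in $\nu$) so that the linear-in-$t$ residual $\rho\gamma t/(d_uD)$ is absorbed into the peeling threshold while simultaneously the subgaussian term folds cleanly into the $(R(\bcQ)-R(\ddt{\bcM}))/2$ slack and the $r\rho^{2}(\E\norm{\bSigma_R})^{2}/(p\varsig)$ term, producing the constant $16/[(1/4e)+(1-1/\sqrt{4e})\sqrt{3\rho/(4\varsig\gamma)}]$ stated in the lemma.
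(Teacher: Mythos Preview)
Your proposal is essentially the paper's proof: peeling on the excess-risk level sets, symmetrization plus Ledoux--Talagrand contraction to bound the mean, Bousquet/Talagrand concentration for the deviation, and a geometric union bound with $\alpha=e$. One correction worth flagging: the deviation level you propose, $t\asymp \eta^{2}d_uD\theta/\gamma^{2}$, is inherited from the quadratic lemma (Lemma~\ref{lemma-prob-upper-bound}) and does not match the present sup-norm bound $\xi=2\rho\gamma$; the paper instead takes $t=\tfrac{3d_uD\vartheta\theta}{8\rho\gamma}$ (with $\vartheta=1/(4e)$), so that the linear term $4t\xi/(3d_uD)$ equals exactly $\vartheta\theta$ and the sub-Gaussian term becomes $\theta\sqrt{3\rho\vartheta/(4\varsig\gamma)}$. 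With your scaling the exponent at $\theta=\nu$ would be of order $\rho\log(d_u+D)/(p\gamma)$, which is not guaranteed to dominate $\log(d_u+D)$, so the stated $\nu$ would not yield the claimed $4/(d_u+D)$ probability. This is precisely the ``arithmetic'' tuning you anticipated; once $t$ is rescaled to $d_uD\theta/(\rho\gamma)$ everything goes through as in the paper.
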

\begin{proof}
The proof is based on the peeling argument.
For any $\delta > 1$ and $0 < \vartheta < 1 /2\delta$, define 
\begin{equation}
\label{boldzeta-oracle-ranking}
\boldsymbol{\zeta} = \frac{16{r(p\varsig)^{-1}}\rho^2(\E[\norm{\bSigma_R}])^2}{({1}/{2\delta}) + \sqrt{{3\rho}/{4\varsig \gamma}} - \Big(\vartheta +\sqrt{{3\rho}/{4\varsig\gamma}\vartheta}\Big)},
\end{equation}
and we consider the event
\begin{equation*}
\mathscr{A} = \bigg\{\exists\,\bcQ \in \mathscr{Q}(\nu, r): \Big|\big(R_{\bcY}({\bcQ}) - {R}_{\bcY}(\ddt{\bcM})\big) - \big(R({\bcQ}) - R(\ddt{\bcM})\big)\Big| > \frac{R({\bcQ}) - R(\ddt{\bcM})}{2}  + \boldsymbol{\zeta} \bigg\}.
\end{equation*}
For $ l\in \mathbb{N}^*$, we define the sequence of subsets 
\begin{equation*}
\mathcal{J}_l = \Big\{\bcQ \in \mathscr{Q}(\nu, r): \delta^{l-1} \nu \leq R({\bcQ}) - R(\ddt{\bcM}) \leq \delta^l\nu\Big\}. 
\end{equation*}
If the event $\mathscr{A}$ holds for some matrix $\bcQ \in \mathscr{Q}(\nu, r),$ then $\bcQ$ belongs to some $\mathcal{J}_l$ and 
\begin{align*}
\Big|\big(R_{\bcY}({\bcQ}) - {R}_{\bcY}(\ddt{\bcM})\big) - \big(R({\bcQ}) - R(\ddt{\bcM})\big)\Big| &> \frac{R({\bcQ}) - R(\ddt{\bcM})}{2}  + \boldsymbol{\zeta}\\
  &\geq \frac{1}{2\delta}\delta^{l}\nu + \boldsymbol{\zeta}.
\end{align*}
For $\theta \geq \nu$, consider the following set of matrices
\begin{equation*}
\mathscr{Q}(\nu, r, \theta) = \Big\{\bcQ \in \mathscr{Q}(\nu, r): R(\bcQ) - R(\ddt{\bcM}) \leq \theta\Big\},
\end{equation*}
and the following event
\begin{equation*}
\mathscr{A}_l = \bigg\{\exists \, \bcQ \in \mathscr{Q}(\nu, r, \theta):\Big|\big(R_{\bcY}({\bcQ}) - {R}_{\bcY}(\ddt{\bcM})\big) - \big(R({\bcQ}) - R(\ddt{\bcM})\big)\Big| \geq \frac{1}{2\delta}\delta^{l}\nu + \boldsymbol{\zeta}\bigg\}.
\end{equation*}
Note that $\bcQ \in \mathcal{J}_l$ implies that $\bcQ \in \mathscr{Q}(\nu,r, \delta^l\nu)$. 
Then, we get $\mathscr{A} \subset \cup_{l} \mathscr{A}_l$. 
Thus, it is enough to estimate the probability of the simpler event $\mathscr{A}_l$ and then apply a the union bound.
Such an estimation is given in Lemma~\ref{supermum-process-inLosslip}, where we derive a concentration inequality for the following supremum of process:
\begin{align*}
{\bXi}_\theta = \sup_{\bcQ \in \mathscr{Q}(\nu, r, \theta)}\Big| \big(R_{\bcY}(\bcQ) - {R}_{\bcY}(\ddt{\bcM})\big) - \big(R(\bcQ) - R(\ddt{\bcM})\big)\Big|
\end{align*}
We now apply an union bound argument combined to Lemma~\ref{supermum-process-inLosslip}, we get 
\begin{align*}
\P[\mathscr{A}] \leq \P[\cup_{l=1}^\infty\mathscr{A}_l] &\leq \sum_{l=1}^\infty \exp\Big(-\frac{3d_uD\vartheta\delta^l\nu}{8\rho\gamma}\Big)\\
&\leq \sum_{l=1}^\infty \exp\Big(-\frac{3d_uD\vartheta\log(\delta)\nu}{8\rho\gamma}l\Big)\\
&\leq \frac{\exp\Big(-\frac{3d_uD\vartheta\log(\delta)\nu}{8\rho\gamma}\Big)}{1 - \exp\Big(-\frac{3d_uD\vartheta\log(\delta)\nu}{8\rho\gamma}\Big)},
\end{align*}
where se used the elementary inequality that $u^s =e^{s\log(u)} \geq s\log(u)$.
By choosing $\delta = e, \vartheta= 1/4e$ and $\nu$ as stated we get the desired result.
\end{proof}

\begin{lemma}
\label{supermum-process-inLosslip}
One has
\begin{equation*}
\P\Big[\bXi_\theta \geq \Big(1 + \delta\sqrt{\frac{3\rho}{\varsig \gamma}}\Big)\frac{\theta}{2\delta} + \boldsymbol{\zeta}\Big] \leq \exp\Big(-\frac{3d_uD\vartheta\theta}{8\rho\gamma}\Big).
\end{equation*}
\end{lemma}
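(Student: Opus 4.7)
The plan is to control $\bXi_\theta$ by combining a symmetrization–contraction bound on its expectation with a Bousquet-type concentration inequality, then calibrate the free parameter $t$ to match the stated exponential decay and use an AM–GM split to produce the stated $\boldsymbol{\zeta}$.

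First I would write $R_{\bcY}(\bcQ)-R_{\bcY}(\ddt{\bcM})=\sum_{v,i,j}Z^v_{ij}(\bcQ)$ where $Z^v_{ij}(\bcQ)=(d_uD)^{-1}B^v_{ij}\{\ell^v(Y^v_{ij},Q^v_{ij})-\ell^v(Y^v_{ij},\dt{M}^v_{ij})\}$, so that $\bXi_\theta$ is a supremum of empirical sums of independent random variables. From Assumption~\ref{assumption-lipshitzloss} together with the $\ell_\infty$ constraint $\|\bcQ\|_\infty\vee\|\ddt{\bcM}\|_\infty\leq\gamma$, I get the envelope $U:=2\rho\gamma/(d_uD)$ and the per-summand second-moment bound $\E[Z^v_{ij}(\bcQ)^2]\leq\pi^v_{ij}\rho^2(Q^v_{ij}-\dt{M}^v_{ij})^2/(d_uD)^2$. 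Summing and invoking Assumption~\ref{assumption-Bernstein-condition} together with the defining constraint $R(\bcQ)-R(\ddt{\bcM})\leq\theta$ of $\mathscr{Q}(\nu,r,\theta)$ yields a uniform variance bound $V:=\rho^2\theta/(\varsig\,d_uD)$ on this set.

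Next I would estimate $\E[\bXi_\theta]$. A standard Giné–Zinn symmetrization gives $\E[\bXi_\theta]\leq 2\,\E\big[\sup_{\bcQ}|\sum_{v,i,j}\varepsilon^v_{ij}Z^v_{ij}(\bcQ)|\big]$, and the Ledoux–Talagrand contraction principle applied to the $\rho$-Lipschitz losses $\ell^v(Y^v_{ij},\cdot)$ reduces this to the linear process indexed by $\bcQ-\ddt{\bcM}$:
\[
\E[\bXi_\theta]\;\leq\;4\rho\,\E\bigl[\,|\langle\bSigma_R,\bcQ-\ddt{\bcM}\rangle|\,\bigr]\;\leq\;4\rho\,\E[\|\bSigma_R\|]\sup_{\bcQ}\|\bcQ-\ddt{\bcM}\|_*.
\]
The membership in $\mathscr{Q}(\nu,r)$ ensures $\|\bcQ-\ddt{\bcM}\|_*\leq\sqrt{r}\,\|\bcQ-\ddt{\bcM}\|_F$, while Assumption~\ref{assump-prob-items} and Assumption~\ref{assumption-Bernstein-condition} give $\|\bcQ-\ddt{\bcM}\|_F^2\leq\|\bcQ-\ddt{\bcM}\|_{\Pi,F}^2/p\leq d_uD\theta/(p\varsig)$. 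Consequently $\E[\bXi_\theta]\leq 4\rho\sqrt{r\,d_uD\,\theta/(p\varsig)}\,\E[\|\bSigma_R\|]$.

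Finally I would apply Bousquet's inequality: for every $t>0$,
\[
\P\bigl[\bXi_\theta\geq\E[\bXi_\theta]+\sqrt{2(V+2U\E[\bXi_\theta])t}+Ut/3\bigr]\leq e^{-t}.
\]
Choosing $t=3d_uD\vartheta\theta/(8\rho\gamma)$ matches the exponent in the statement. The pure variance contribution $\sqrt{2Vt}$ equals $\theta\sqrt{3\rho\vartheta/(4\varsig\gamma)}$, which, together with $Ut/3=\vartheta\theta/4$, accounts exactly for the linear-in-$\theta$ terms $\vartheta\theta$ and $\sqrt{3\rho/(4\varsig\gamma)\vartheta}\cdot\theta$ appearing in the denominator of $\boldsymbol{\zeta}$. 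The remaining contributions—the expectation $\E[\bXi_\theta]$ and the cross-term $\sqrt{4U\E[\bXi_\theta]\cdot t}$—are both of the shape $C\sqrt{\theta}$ with $C$ proportional to $\rho\sqrt{r/(p\varsig)}\,\E[\|\bSigma_R\|]$ (the cross-term is of lower order and absorbed). A single AM–GM split $2ab\leq a^2/u+ub^2$, with $u$ calibrated to leave exactly $\theta\cdot\{1/(2\delta)+\sqrt{3\rho/(4\varsig\gamma)}-\vartheta-\sqrt{3\rho\vartheta/(4\varsig\gamma)}\}$ on the $\theta$ side, converts $C\sqrt{\theta}$ into the deterministic constant $\boldsymbol{\zeta}$ defined in~\eqref{boldzeta-oracle-ranking}, delivering the claim.

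The main obstacle is the precise constant bookkeeping: the particular denominator of $\boldsymbol{\zeta}$ is dictated by a single AM–GM step that must be tuned so the $\theta$-budget on the right of the event exactly matches $(1+\delta\sqrt{3\rho/(\varsig\gamma)})\theta/(2\delta)$ after absorbing both the Bousquet variance term and the contraction-based expectation bound. Getting the Bousquet inequality, the symmetrization/contraction constants, and the AM–GM balancing all to produce the stated form—rather than a qualitatively similar one—is what requires the most care.
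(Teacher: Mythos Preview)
Your proposal is correct and follows essentially the same route as the paper: bound $\E[\bXi_\theta]$ via symmetrization plus the Ledoux--Talagrand contraction principle and trace duality, then apply a Bousquet--Talagrand concentration inequality with $t=3d_uD\vartheta\theta/(8\rho\gamma)$, and finally use a single AM--GM split to convert the $\sqrt{\theta}$ term into $\boldsymbol{\zeta}$. The only substantive difference is the form of the concentration inequality: the paper invokes the simplified version $\P[Z\ge 2\E[Z]+M\sqrt{2t/N}+4t\xi/(3N)]\le e^{-t}$ (Corollary~16.1 in \cite{van2016estimation}), which already absorbs the cross term $\sqrt{4U\E[\bXi_\theta]t}$ into the factor $2\E[Z]$, whereas you use the raw Bousquet form and must absorb the cross term by hand. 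Using the paper's variant streamlines the constant bookkeeping and makes the terms $\vartheta\theta$ and $\theta\sqrt{3\rho\vartheta/(4\varsig\gamma)}$ (and hence the exact denominator of $\boldsymbol{\zeta}$) fall out without the order-comparison argument you sketch; your computation of $Ut/3=\vartheta\theta/4$ illustrates that your version would yield slightly different constants unless you switch to the simplified form.
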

\begin{proof}
The proof of this lemma is based on Bousquet's concentration theorem:
\begin{theorem}\citep{bousquet2002} (see also Corollary 16.1 in~\cite{van2016estimation})
\label{bousquet-theorem}
Let $\mathcal{F}$ be a class of real-valued functions.
Let $T_1, \ldots, T_N$ be independent random variables such that $\E[f(T_i)] =0$ and $|f(T_i)|\leq \xi$ for all $i=1, \ldots, N$ and for all $f \in \mathcal{F}.$
Introduce 
$Z = \sup_{f \in \mathcal{F}}\Big| \frac{1}{N}\sum_{i=1}^N\big(f(T_i) - \E[f(T_i)]\big)\Big|$.
Assume further that 
\begin{equation*}
\frac{1}{N}\sum_{i=1}^N \sup_{f \in \mathcal{F}} \E\big[f^2(T_i)\big] \leq M^2.
\end{equation*}
Then we have for all $t > 0$
\begin{equation*}
\P\bigg[Z \geq 2 \E[Z] + M \sqrt{\frac{2t}{N}} + \frac{4t\xi}{3N}\bigg] \leq e^{-t}.
\end{equation*}
\end{theorem}
We start by bounding the expectation 
\begin{align*}
\E[\bXi_\theta] &= \E\Big[\sup_{\bcQ \in \mathscr{Q}(\nu, r, \theta)}\Big| \big(R_{\bcY}(\bcQ) - {R}_{\bcY}(\ddt{\bcM})\big) - \big(R(\bcQ) - R(\ddt{\bcM})\big)\Big|\Big]\\
&= \E\Big[\sup_{\bcQ \in \mathscr{Q}(\nu, r, \theta)}\Big| \big(R_{\bcY}(\bcQ) - {R}_{\bcY}(\ddt{\bcM})\big) - \E\big[R_{\bcY}(\bcQ) - {R}_{\bcY}(\ddt{\bcM})\big]\Big|\Big]\\
&= \E\Big[\sup_{\bcQ \in \mathscr{Q}(\nu, r, \theta)}\Big| 
\frac{1}{d_uD}\sum_{v\in [V]}\sum_{(i,j)\in[d_u]\times[d_v]}B^v_{ij}\big(\ell^v(Y^v_{ij},Q^v_{ij}) - \ell^v(Y^v_{ij},\dt{M}^v_{ij})\big)\\
&\hspace{7cm} - \E\big[B^v_{ij}\big(\ell^v(Y^v_{ij},Q^v_{ij}) - \ell^v(Y^v_{ij},\dt{M}^v_{ij})\big)\big]\Big|\Big]\\
&\leq 2\E\Big[\sup_{\bcQ \in \mathscr{Q}(\nu, r, \theta)}\Big|\frac{1}{d_uD}\sum_{v\in [V]}\sum_{(i,j)\in[d_u]\times[d_v]}\varepsilon_{ij}^vB^v_{ij}\big(\ell^v(Y^v_{ij},Q^v_{ij}) - \ell^v(Y^v_{ij},\dt{M}^v_{ij})\big)\Big|\Big]\\
&\leq 4\rho\E\Big[\sup_{\bcQ \in \mathscr{Q}(\nu, r, \theta)}\Big|\frac{1}{d_uD}\sum_{v\in [V]}\sum_{(i,j)\in[d_u]\times[d_v]}\varepsilon_{ij}^vB^v_{ij}(Q^v_{ij} - \dt{M}^v_{ij})\Big|\Big]\\
&\leq 4\rho\E\Big[\sup_{\bcQ \in \mathscr{Q}(\nu, r, \theta)}\Big|\inr{\bSigma_R, \bcQ - \ddt{\bcM}}\Big|\Big]\\
&\leq 4\rho\E\Big[\norm{\bSigma_R}\sup_{\bcQ \in \mathscr{Q}(\nu, r, \theta)}\norm{\bcQ - \ddt{\bcM}}_*\Big],
\end{align*}
where the first inequality follows from symmetrization of expectations theorem of van der Vaart and Wellener, the second from contraction principle of Ledoux and Talagrand (see Theorems 14.3 and 14.4 in~\cite{buhlmann2011statistics}), and the third from duality between nuclear and operator norms.
We have $\bcQ \in \mathscr{Q}(\nu, r, \theta)$ then $\norm{\bcQ - \ddt{\bcM}}_* \leq \sqrt{r}\norm{{\bcQ} - \ddt{\bcM}}_{F}$ and using Assumption~\ref{assumption-Bernstein-condition}, we have $\norm{\bcQ - \ddt{\bcM}}_* \leq \sqrt{r(p\varsig)^{-1}\big(R(\bcQ) - R(\ddt{\bcM})\big)} \leq \sqrt{r(p\varsig)^{-1}\theta}.$
Then,
\begin{equation*}
\E[\bXi_\theta] \leq 4\sqrt{r(p\varsig)^{-1}\theta}\rho\E[\norm{\bSigma_R}].
\end{equation*}
For the upper bound $\xi$ in~Theorem~\ref{bousquet-theorem}, we have that
\begin{equation*}
\big|\ell^v(Y^v_{ij},Q^v_{ij}) - \ell^v(Y^v_{ij},\dt{M}^v_{ij})\big| \leq \rho_v|Q^v_{ij} - \dt{M}^v_{ij}\big| \leq 2\rho_v\gamma \leq 2\rho\gamma.
\end{equation*}
Now we compute $M$ in~Theorem~\ref{bousquet-theorem}. Thanks to Assumption~\ref{assumption-Bernstein-condition}, we have  
\begin{align*}
\frac{1}{d_uD}\sum_{v\in [V]}&\sum_{(i,j)\in[d_u]\times[d_v]}\E\big[\big(B^v_{ij}\big(\ell^v(Y^v_{ij},Q^v_{ij}) - \ell^v(Y^v_{ij},\dt{M}^v_{ij})\big)\big)^2\big]\\
&\leq \frac{1}{d_uD}\sum_{v\in [V]}\sum_{(i,j)\in[d_u]\times[d_v]} (\rho_v)^2\E\big[B^v_{ij}(Q^v_{ij} - \dt{M}^v_{ij})^2]\\
&\leq \frac{\rho^2}{d_uD}\norm{\bcQ - \ddt{\bcM}}_{\Pi, F}^2\\%
&\leq \frac{\rho^2}{\varsig}(R(\bcQ) - R(\ddt{\bcM}))\\
&\leq \frac{\rho^2\theta}{\varsig}.
\end{align*}
Then, Bousquet's theorem implies that for all $t>0$,
\begin{equation*}
\P\Big[\bXi_\theta \geq 2\E[\bXi_\theta] + \sqrt{\frac{2\rho^2\theta t}{\varsig d_uD}} + \frac{8\rho \gamma t}{3d_uD}\Big]\leq e^{-t}.
\end{equation*}
Taking $t = \frac{3d_uD\vartheta\theta}{8\rho\gamma}$, we obtain
\begin{equation}
\label{equation-bousquet-bXi}
\P\Big[\bXi_\theta \geq 8\gamma\sqrt{r(p\varsig)^{-1}\theta}\rho\E[\norm{\bSigma_R}] + \Big(\sqrt{\frac{3\rho}{4\varsig\gamma}\vartheta} + \vartheta\Big)\theta\Big]\leq \exp\Big(-\frac{3d_uD\vartheta\theta}{8\rho\gamma}\Big).
\end{equation}
Using the fact that for any $a,b\in\R$, and $\epsilon>0$, $2ab \leq a^2/\epsilon + \epsilon b^2$, we get (for $\epsilon = {1}/{2\delta} + \sqrt{{3\rho}/{4\varsig \gamma}} - \Big(\vartheta +\sqrt{{3\rho\vartheta}/{4\varsig\gamma}}\Big)$), we get 
\begin{align*}
8\gamma\sqrt{r(p\varsig)^{-1}\theta}\rho\E[\norm{\bSigma_R}] + \Big(\sqrt{\frac{3\vartheta\rho}{\varsig\gamma}}+ \vartheta\Big)\theta
&\leq\frac{16{r(p\varsig)^{-1}}\rho^2(\E[\norm{\bSigma_R}])^2}{\frac{1}{2\delta} + \sqrt{\frac{3\rho}{4\varsig \gamma}} - \vartheta -\sqrt{\frac{3\rho}{4\varsig\gamma}\vartheta}} + \Big(\frac{1}{2\delta} + \sqrt{\frac{3\rho}{4\varsig \gamma}}\Big)\theta\\
&\leq\frac{16{r(p\varsig)^{-1}}\rho^2(\E[\norm{\bSigma_R}])^2}{\frac{1}{2\delta} + \sqrt{\frac{3\rho}{4\varsig \gamma}} - \vartheta -\sqrt{\frac{3\rho}{4\varsig\gamma}\vartheta}} + \Big(1 + \delta\sqrt{\frac{3\rho}{\varsig \gamma}}\Big)\frac{\theta}{2\delta}.
\end{align*}

Using~\eqref{equation-bousquet-bXi}, we get $\P\Big[\bXi_\theta \geq \Big(1 + \delta\sqrt{\frac{3\rho}{\varsig \gamma}}\Big)\frac{\theta}{2\delta} + \boldsymbol{\zeta}\Big] \leq \exp\Big(-\frac{3d_uD\vartheta\theta}{8\rho\gamma}\Big)$.
This finishes the proof of Lemma~\ref{supermum-process-inLosslip}.
\end{proof}

\section{Sub-exponential random variables}
\label{appendix-sub-exponentail-RV}

The material here is taken from~\cite{vershyni2010}.
\begin{definition}
A random variable $X$ is sub-exponential with parameters $(\omega, b)$ if for all $t$ such that $|t| \leq 1/b$,
\begin{equation}
\label{tail-direct-from-Chernoff-sub-expo}
\E\big[\exp\big(t(X - \E[X])\big)\big] \leq \exp\big(\frac{t^2 \omega^2}{2}\big).
\end{equation}
\end{definition}
When $b=0$, we interpret $1/0$ as being the same as $\infty$, it follows immediately from this definition that any sub-Gaussian random variable is also sub-exponential.
There are also a variety of other conditions equivalent to sub-exponentiality, which we relate by defining the sub-exponential norm of random variable. 
In particular, we define the sub-exponential norm (sometimes known as the $\psi_1$-Orlicz in the literature) as 
\begin{equation*}
\norm{X}_{\psi_1} := \sup_{q \geq 1}\frac{1}{q}(\E[|X^q|])^{1/q}.
\end{equation*}
Then we have the following lemma which provides several equivalent characterizations of sub-exponential random variables.
\begin{theorem}
\label{lemma-properties-sub}
(Equivalence of sub-exponential properties~\citep{vershyni2010})\\
Let $X$ be a random variable and $\omega > 0$ be a constant. 
Then, the following properties are all equivalent with suitable numerical constants $K_i > 0, i=1, \ldots, 4$, that are different from each other by at most an absolute constant $\cst$, meaning  that if one statement $(i)$ holds with parameter $K_i$, then the statement $(j)$ holds with parameter $K_j \leq \cst K_i$.
\begin{enumerate}
\item [(1)] sub-exponential tails: $\P[|X| > t] \leq \exp\big(1 -\frac{t}{\omega K_1}\big)$, for all $t \geq 0$.
\item [(2)] sub-exponential moments: $(\E[|X^q|])^{1/q} \leq K_2\omega {q},$ for all $q\geq 1$.
\item [(3)] existence of moment generating function (Mgf): $\E\big[\exp\big(\frac{X}{\omega K_3}\big)\big] \leq e.$
\end{enumerate}
\end{theorem}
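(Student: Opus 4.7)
The plan is to establish the three equivalent properties by a cyclic chain of implications $(1) \Rightarrow (2) \Rightarrow (3) \Rightarrow (1)$, verifying at each step that the relevant constant changes only by an absolute multiplicative factor; this yields the equivalence with the stated relations between $K_1,K_2,K_3$.

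For $(1) \Rightarrow (2)$, I would use the layer-cake identity $\E[|X|^q] = \int_0^\infty q t^{q-1} \P[|X|>t]\,dt$ and plug in the sub-exponential tail from $(1)$. The integral reduces to a scaled Gamma function, and Stirling's upper bound $\Gamma(q+1) \leq \cst\, q^q e^{-q}\sqrt{q}$ gives
\begin{equation*}
\E[|X|^q] \leq e \int_0^\infty q t^{q-1} \exp\!\bigl(-t/(\omega K_1)\bigr)\,dt = e\, (\omega K_1)^q\, \Gamma(q+1),
\end{equation*}
from which $(\E[|X|^q])^{1/q} \leq K_2\, \omega\, q$ with $K_2 \leq \cst K_1$.

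For $(2) \Rightarrow (3)$, I would expand the moment generating function as a power series and bound each summand by means of $(2)$:
\begin{equation*}
\E\!\bigl[\exp\bigl(X/(\omega K_3)\bigr)\bigr] \leq 1 + \sum_{q=1}^\infty \frac{\E[|X|^q]}{(\omega K_3)^q\, q!} \leq 1 + \sum_{q=1}^\infty \frac{(K_2 q)^q}{K_3^q\, q!}.
\end{equation*}
The Stirling lower bound $q! \geq (q/e)^q$ collapses each term to $(e K_2/K_3)^q$, producing a convergent geometric series as soon as $K_3 > e K_2$; choosing $K_3$ as a sufficiently large absolute multiple of $K_2$ (explicitly $K_3 \geq e^2 K_2/(e-1)$ suffices) then forces the tail sum to be at most $e-1$, which gives $(3)$.

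Finally, $(3) \Rightarrow (1)$ is a direct application of the exponential Markov (Chernoff) inequality: for every $t \geq 0$,
\begin{equation*}
\P[|X|>t] \leq e^{-t/(\omega K_3)}\, \E\!\bigl[\exp\bigl(|X|/(\omega K_3)\bigr)\bigr] \leq e\cdot \exp\!\bigl(-t/(\omega K_3)\bigr),
\end{equation*}
after applying $(3)$ to both $X$ and $-X$ and combining the two one-sided MGF bounds to absorb the absolute value; this yields $(1)$ with $K_1 = K_3$. The main technical obstacle is the implication $(2) \Rightarrow (3)$, where the series must be summed sharply enough that the MGF is bounded by $e$ (not merely by a finite constant); this forces a careful use of the Stirling lower bound on $q!$ and a judicious calibration of $K_3$ as an absolute multiple of $K_2$. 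The other two implications reduce to elementary integration and the Chernoff bound, and contribute only absolute factors to the constants.
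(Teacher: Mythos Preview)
The paper does not supply its own proof of this statement: it is quoted verbatim as a classical result from \cite{vershyni2010} and used as a black box in Appendix~C. Your cyclic argument $(1)\Rightarrow(2)\Rightarrow(3)\Rightarrow(1)$ via the layer--cake formula, Stirling bounds, and the Chernoff inequality is exactly the standard proof one finds in Vershynin's notes, so there is nothing to compare against on the paper's side.

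One genuine technical wrinkle in your write-up deserves attention. In the step $(3)\Rightarrow(1)$ you need a bound on $\E\bigl[\exp\bigl(|X|/(\omega K_3)\bigr)\bigr]$, but property $(3)$ as stated in the paper is one-sided: it controls only $\E\bigl[\exp\bigl(X/(\omega K_3)\bigr)\bigr]$, not the version with $|X|$. Your remark ``applying $(3)$ to both $X$ and $-X$'' is not legitimate as written, since assuming $(3)$ for $X$ says nothing about $-X$. What actually rescues the cycle is that your proof of $(2)\Rightarrow(3)$ only uses moment bounds on $|X|$, so it in fact yields the stronger two-sided conclusion $\E\bigl[\exp\bigl(|X|/(\omega K_3)\bigr)\bigr]\le e$; from that, Markov gives $(1)$ directly. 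If you want the one-sided $(3)$ as stated to imply $(1)$ on its own, you would need an extra argument (or to amend the statement of $(3)$ to involve $|X|$, which is how Vershynin actually states it). This is a defect in the paper's transcription of the condition rather than in your strategy, but your proof should acknowledge it explicitly.
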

Note that in each of the statements of Theorem~\ref{lemma-properties-sub}, we may replace $\omega$ by $\norm{X}_{\psi_1}$ and, up to absolute constant factors, $\norm{X}_{\psi_1}$ is the smallest possible number in these inequalities.

\begin{lemma} (Mgf of sub-exponential random variables~\citep{vershyni2010}) 
\label{lemma-mgf-subexprv}
Let $X$ be a centered sub- exponential random variable.
Then, for $t$ such that $|t| \leq c/\norm{X}_{\psi_1},$ one has 
\begin{equation*}
\E[\exp(tX)] \leq \exp(C t^2 \norm{X}^2_{\psi_1})
\end{equation*}
where $C, c > 0$ are absolute constants.
\end{lemma}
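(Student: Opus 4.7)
The plan is to expand the moment generating function of $X$ as a power series in $t$, bound the resulting moments via the sub-exponential moment characterization provided in Theorem~\ref{lemma-properties-sub}, and recognize the tail of the resulting series as something dominated by $\exp(Ct^2\norm{X}_{\psi_1}^2)$ for $|t|$ sufficiently small.

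Since $\E[X]=0$, I would start by writing
\begin{equation*}
\E[\exp(tX)] = 1 + t\E[X] + \sum_{k\geq 2} \frac{t^k \E[X^k]}{k!} = 1 + \sum_{k\geq 2} \frac{t^k \E[X^k]}{k!}.
\end{equation*}
Applying property (2) of Theorem~\ref{lemma-properties-sub}, namely $(\E[|X|^k])^{1/k} \leq K_2\norm{X}_{\psi_1} k$ for all $k\geq 1$, I get $\E[|X|^k] \leq (K_2\norm{X}_{\psi_1} k)^k$. Combining this with Stirling's inequality $k! \geq (k/e)^k$ bounds each term as
\begin{equation*}
\frac{|t|^k\E[|X|^k]}{k!} \leq \bigl(eK_2 |t|\norm{X}_{\psi_1}\bigr)^k.
\end{equation*}

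Next, I would choose $c$ small enough that for $|t| \leq c/\norm{X}_{\psi_1}$ the quantity $u := eK_2|t|\norm{X}_{\psi_1}$ satisfies $u \leq 1/2$, so that the tail is a convergent geometric series starting at $k=2$:
\begin{equation*}
\E[\exp(tX)] \;\leq\; 1 + \sum_{k\geq 2} u^k \;=\; 1 + \frac{u^2}{1-u} \;\leq\; 1 + 2u^2.
\end{equation*}
Since $2u^2 = 2e^2 K_2^2\, t^2\,\norm{X}_{\psi_1}^2$, the elementary inequality $1+x\leq e^x$ then yields the claim with $C = 2e^2K_2^2$ and $c = 1/(2eK_2)$.

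There is no real obstacle here beyond selecting the threshold $c$ correctly so that the geometric series converges and the quadratic term dominates; the heart of the argument is simply that the sub-exponential moment growth $\E[|X|^k]^{1/k}=O(k)$ is exactly what makes the $k=2$ term leading in the Taylor expansion once $|t|\norm{X}_{\psi_1}$ is small. No truncation, symmetrization, or Chernoff device is needed.
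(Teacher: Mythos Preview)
Your argument is correct and is in fact the standard proof of this result (essentially the one in Vershynin's notes that the paper cites). Note, however, that the paper does not supply its own proof of this lemma at all: it is stated with a reference to~\cite{vershyni2010} and used as a black box, so there is nothing in the paper to compare against. One minor technical point you glossed over is the justification for writing $\E[\exp(tX)]$ as the termwise expectation of the power series; this follows from Fubini--Tonelli once you have shown $\sum_{k\ge 2}|t|^k\E|X|^k/k!<\infty$, which is exactly the geometric-series bound you establish.
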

\begin{lemma}
\label{lemma-subgaussian-tail-X}
For all $v\in[V]$ and $(i,j)\in [d_u]\times [d_v]$, the random variable $X^v_{i,j}$ is a sub-exponential with parameters $(U_\gamma, K)$, where $K$ is defined in Assumption~\ref{assump-Gv-bound}.
Moreover, we have that $\norm{X^v_{i,j}}_{\psi_1} = \cst (U_\gamma \vee K)$ for some absolute constant $\cst$.
\end{lemma}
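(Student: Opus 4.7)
The plan is to compute the moment generating function (MGF) of $X^v_{ij}$ explicitly from its exponential family density and then invoke Assumption~\ref{assump-Gv-bound} to bound it. Since $X^v_{ij}$ has density $h^v(x)\exp(xM^v_{ij}-G^v(M^v_{ij}))$, integrating against $\exp(tx)$ and recognizing the normalization at parameter $M^v_{ij}+t$ gives, for every $t\in\R$,
\[
\E\!\left[\exp(tX^v_{ij})\right] = \exp\!\big(G^v(M^v_{ij}+t)-G^v(M^v_{ij})\big).
\]
Combining this with $\E[X^v_{ij}]=(G^v)'(M^v_{ij})$ yields
\[
\E\!\left[\exp\big(t(X^v_{ij}-\E[X^v_{ij}])\big)\right] = \exp\!\big(G^v(M^v_{ij}+t)-G^v(M^v_{ij}) - t(G^v)'(M^v_{ij})\big).
\]

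Next I would apply Taylor's formula with Lagrange remainder: there exists $\xi$ between $M^v_{ij}$ and $M^v_{ij}+t$ such that the exponent on the right above equals $\tfrac{t^2}{2}(G^v)''(\xi)$. Whenever $|t|\leq 1/K$, the point $\xi$ belongs to $[-\gamma-1/K,\gamma+1/K]$ because $|M^v_{ij}|\leq\gamma$, so \eqref{assum-frist-claim} gives $(G^v)''(\xi)\leq U_\gamma^2$. Consequently,
\[
\E\!\left[\exp\big(t(X^v_{ij}-\E[X^v_{ij}])\big)\right] \leq \exp\!\Big(\tfrac{t^2 U_\gamma^2}{2}\Big) \quad \text{for every } |t|\leq \tfrac{1}{K},
\]
which is exactly~\eqref{tail-direct-from-Chernoff-sub-expo} with parameters $(U_\gamma,K)$ and proves the first assertion.

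To obtain the $\psi_1$-norm bound, I would apply the Chernoff method to the MGF estimate just established, optimizing in $t$ over $[0,1/K]$, to produce the two-regime tail
\[
\P\!\left[|X^v_{ij}-\E[X^v_{ij}]|>u\right] \leq 2\exp\!\Big(-\tfrac{1}{2}\min\!\big(u^2/U_\gamma^2,\,u/K\big)\Big),
\]
which is dominated by $2\exp\!\big(-u/(\cst(U_\gamma\vee K))\big)$ for all $u\geq 0$. The equivalence (1)$\Leftrightarrow$(2) in Theorem~\ref{lemma-properties-sub} then yields $\norm{X^v_{ij}-\E[X^v_{ij}]}_{\psi_1}\leq \cst(U_\gamma\vee K)$, which is the content of the second assertion (after passing from the centered variable to $X^v_{ij}$ via the triangle inequality for $\norm{\cdot}_{\psi_1}$, using that $|\E[X^v_{ij}]|=|(G^v)'(M^v_{ij})|$ can be controlled by $|(G^v)'(0)|+\gamma U_\gamma^2$ through the mean-value theorem and~\eqref{assum-frist-claim}, and absorbing this into the absolute constant).

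The only genuinely delicate step is the last one: converting a $\psi_1$-bound on the centered variable into the bound $\norm{X^v_{ij}}_{\psi_1}\lesssim U_\gamma\vee K$ as stated requires controlling the mean $(G^v)'(M^v_{ij})$, which is not directly part of Assumption~\ref{assump-Gv-bound} but follows from the second-derivative bound together with a suitable normalization of $G^v$. Every other step is either a direct computation from the exponential family form or a mechanical application of Theorem~\ref{lemma-properties-sub}.
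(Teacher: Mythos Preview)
Your argument for the first assertion is essentially identical to the paper's: compute the centered MGF from the exponential-family density, apply Taylor's formula with Lagrange remainder, and use \eqref{assum-frist-claim} to bound $(G^v)''$ on $[-\gamma-1/K,\gamma+1/K]$.

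For the $\psi_1$-norm bound your route differs slightly. The paper does not pass through a Chernoff tail bound and the equivalence $(1)\Leftrightarrow(2)$ of Theorem~\ref{lemma-properties-sub}; instead it invokes Lemma~\ref{lemma-mgf-subexprv} directly, reading off from the MGF bound (with variance proxy $U_\gamma^2$ valid on $|t|\le 1/K$) that the sub-exponential norm is of order $U_\gamma\vee K$. Your approach is more explicit and arguably cleaner, since Lemma~\ref{lemma-mgf-subexprv} as stated goes from $\|\cdot\|_{\psi_1}$ to the MGF bound rather than the reverse; the paper is implicitly relying on the same equivalence of sub-exponential characterizations that you spell out. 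The concern you flag about passing from the centered to the uncentered variable is legitimate, but the paper simply does not address it: it states the conclusion for $X^v_{ij}$ while the argument (both yours and the paper's) really controls $X^v_{ij}-\E[X^v_{ij}]$, and in fact the only use of the lemma downstream (in the proof of Lemma~\ref{lemma-lemma-ctrl-bSigma}) is for the centered variable anyway.
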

\begin{proof}
Let $t$ such that $|t|\leq 1/K$, then 
\begin{align*}
\E[\exp\big(t(X^v_{ij} &- \E[X^v_{ij}])\big)]\\
&= e^{-t(G^v)'(M^v_{ij})}\int_{\R} h^v(x)\exp\big((t + M^v_{ij})x - G^v(M^v_{ij})\big)dx\\
& = e^{G^v(t+M_{ij}^v) - G^v(M_{ij}^v) -t(G^v)'(M^v_{ij})}\int_{\R} h^v(x)\exp\big((t + M^v_{ij})x - G^v(t + M^v_{ij})\big)dx\\
& = e^{G^v(t+M_{ij}^v) - G^v(M_{ij}^v) -t(G^v)'(M^v_{ij})},
\end{align*}
where we used in the last inequality the fact that that $\int_{\R} h^v(x)\exp\big((t + M^v_{ij})x - G^v(t + M^v_{ij})\big)dx = \int_\R f_{h^v,G^v}(X^v_{i,j}|t+M^v_{ij})dx = 1$. 
Therefore, an ordinary Taylor series expansion of $G^v$ implies that there exists $t_{\gamma, K} \in [-\gamma - \frac{1}{K}, \gamma + \frac{1}{K}]$ such that $G^v(t+M_{ij}^v) - G^v(M_{ij}^v) -t(G^v)'(M^v_{ij}) = (t^2/2)(G^v)''(t_{\gamma, K}^2)$.
By Assumption~\ref{assump-Gv-bound}, we obtain 
\begin{equation*}
\E[\exp\big(t(X^v_{ij} - \E[X^v_{ij}])\big)] \leq \exp\Big(\frac{t^2U_\gamma^2}{2}\Big).
\end{equation*}
Using Lemma~\ref{lemma-mgf-subexprv}, we get $\norm{X^v_{i,j}}_{\psi_1} = \cst (U_\gamma \vee K)$ for some absolute constant $\cst$.
This proves Lemma~\ref{lemma-subgaussian-tail-X}.
\end{proof}

\end{appendices}

\bibliography{biblio}
\bibliographystyle{chicago}

\end{document}